\definecolor{darkgreen}{rgb}{0,0.5,0}                                                                                                                                                                                                                                                                                   
\definecolor{darkred}{rgb}{0.7,0,0}                                                                                                                                                                                                                                                                                     
\definecolor{teal}{rgb}{0.3,0.8,0.8}
\newcommand{\kibitz}[2]{\ifnum\Comments=1\textcolor{#1}{#2}\fi}
\newcommand{\toworkon}[1]{}
\title{{\Large Influence Functions for Machine Learning: Nonparametric 
  Estimators for Entropies, Divergences and Mutual Informations}}
\author{
David S.~Hippocampus\thanks{ Use footnote for providing further information
about author (webpage, alternative address)---\emph{not} for acknowledging
funding agencies.} \\
Department of Computer Science\\
Cranberry-Lemon University\\
Pittsburgh, PA 15213 \\
\texttt{hippo@cs.cranberry-lemon.edu} \\
\And
Coauthor \\
Affiliation \\
Address \\
\texttt{email} \\
\AND
Coauthor \\
Affiliation \\
Address \\
\texttt{email} \\
\And
Coauthor \\
Affiliation \\
Address \\
\texttt{email} \\
\And
Coauthor \\
Affiliation \\
Address \\
\texttt{email} \\
(if needed)\\
}
\begin{document}

\maketitle


\newcommand{\imarrwthree}{2.1in}
\newcommand{\imhspthree}{-.2in}
\newcommand{\imcaptionspace}{-.05in}
\newcommand{\imbelowspace}{0in}
\newcommand{\imtextspace}{-.1in}

\newcommand{\insertFigToyExamples}{
\begin{figure*}
\centering
\subfigure[]{
  \includegraphics[width=\imarrwthree]{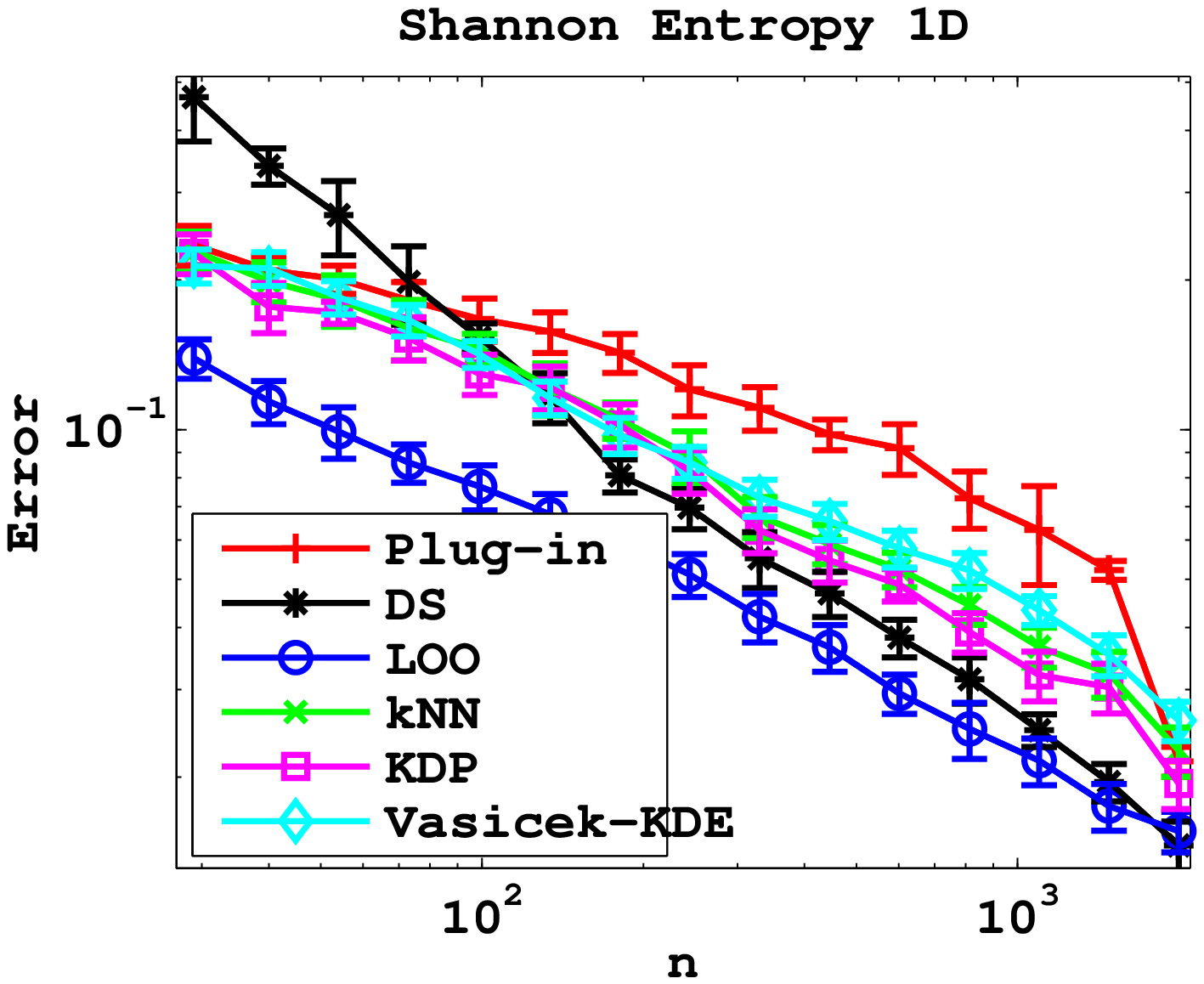} \hspace{\imhspthree}
  \label{fig:entropy1D}
}
\subfigure[]{
  \includegraphics[width=\imarrwthree]{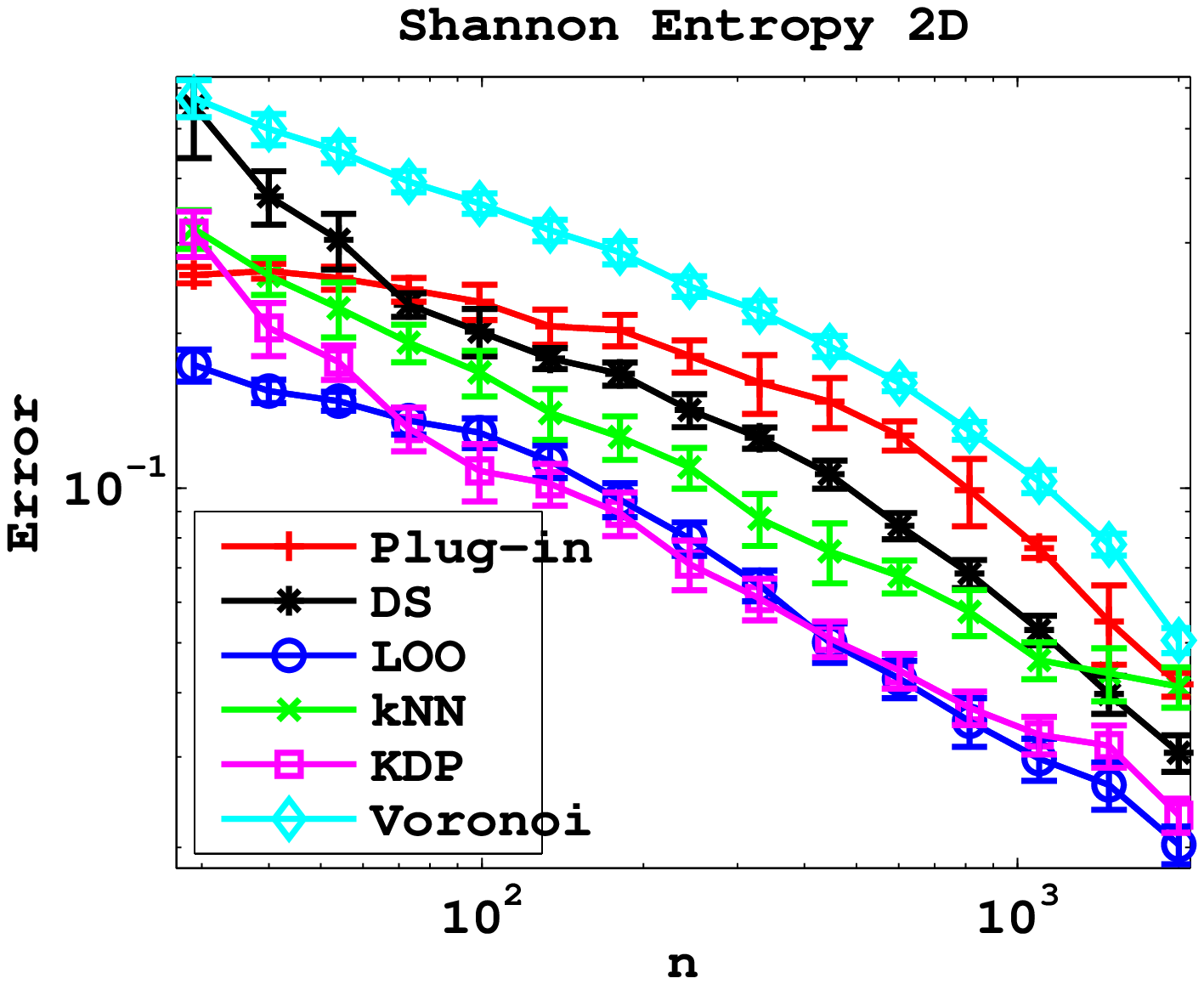} \hspace{\imhspthree}
  \label{fig:entropy2D}
}
\subfigure[]{
  \includegraphics[width=\imarrwthree]{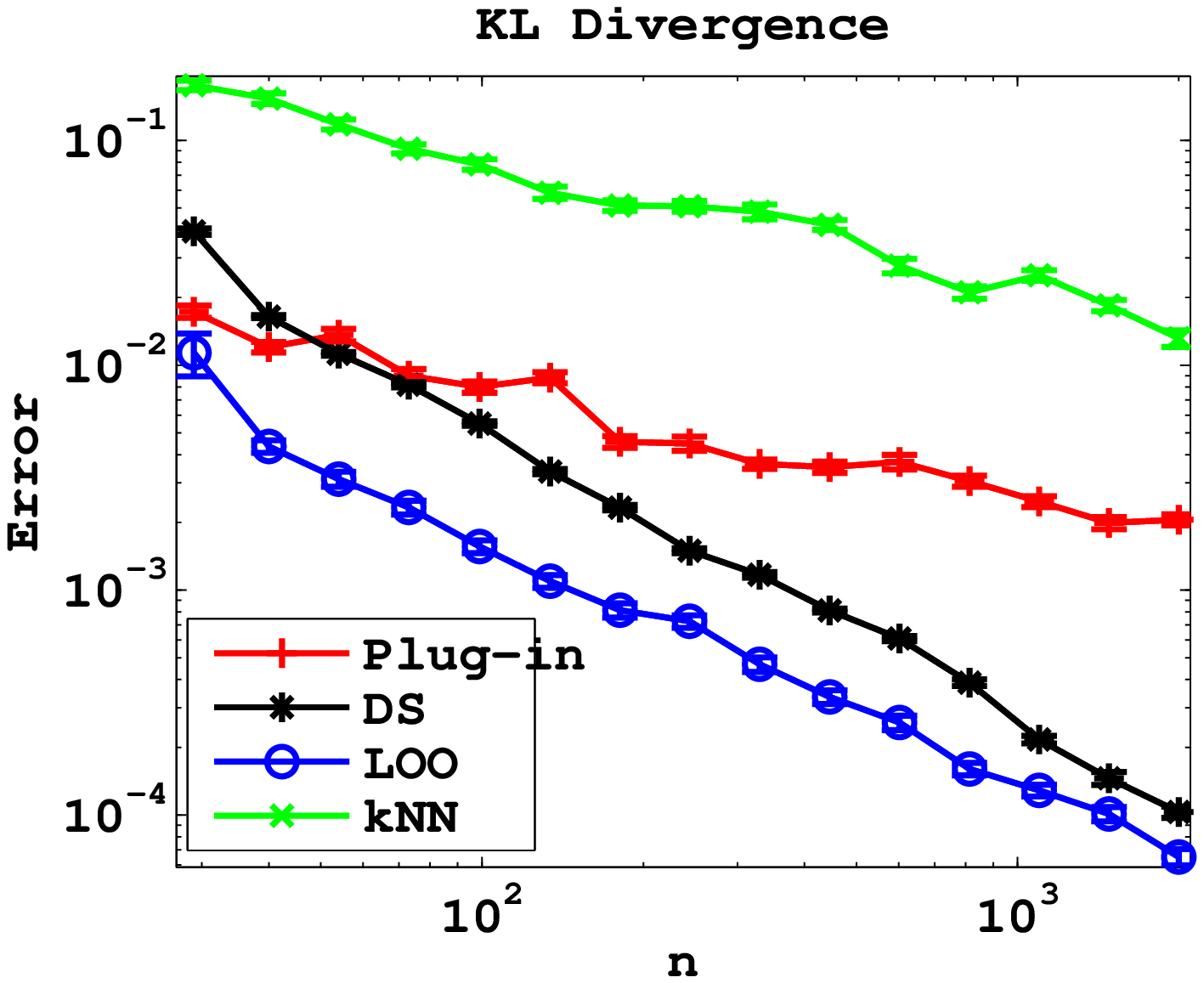}
  \label{fig:kl1D}
} 
\\[\imcaptionspace]
\subfigure[]{
  \includegraphics[width=\imarrwthree]{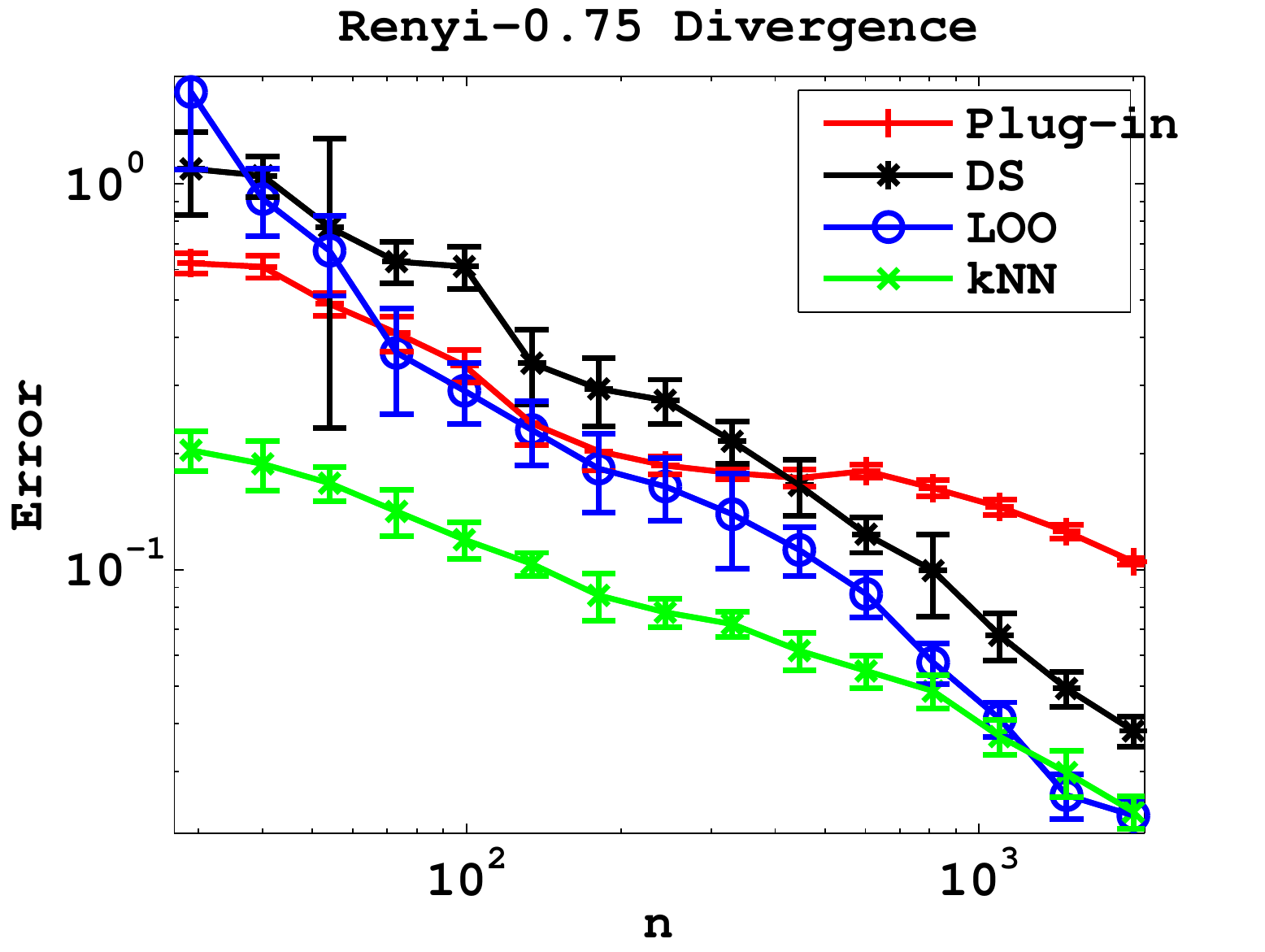} \hspace{\imhspthree}
  \label{fig:renyi1D}
}
\subfigure[]{
  \includegraphics[width=\imarrwthree]{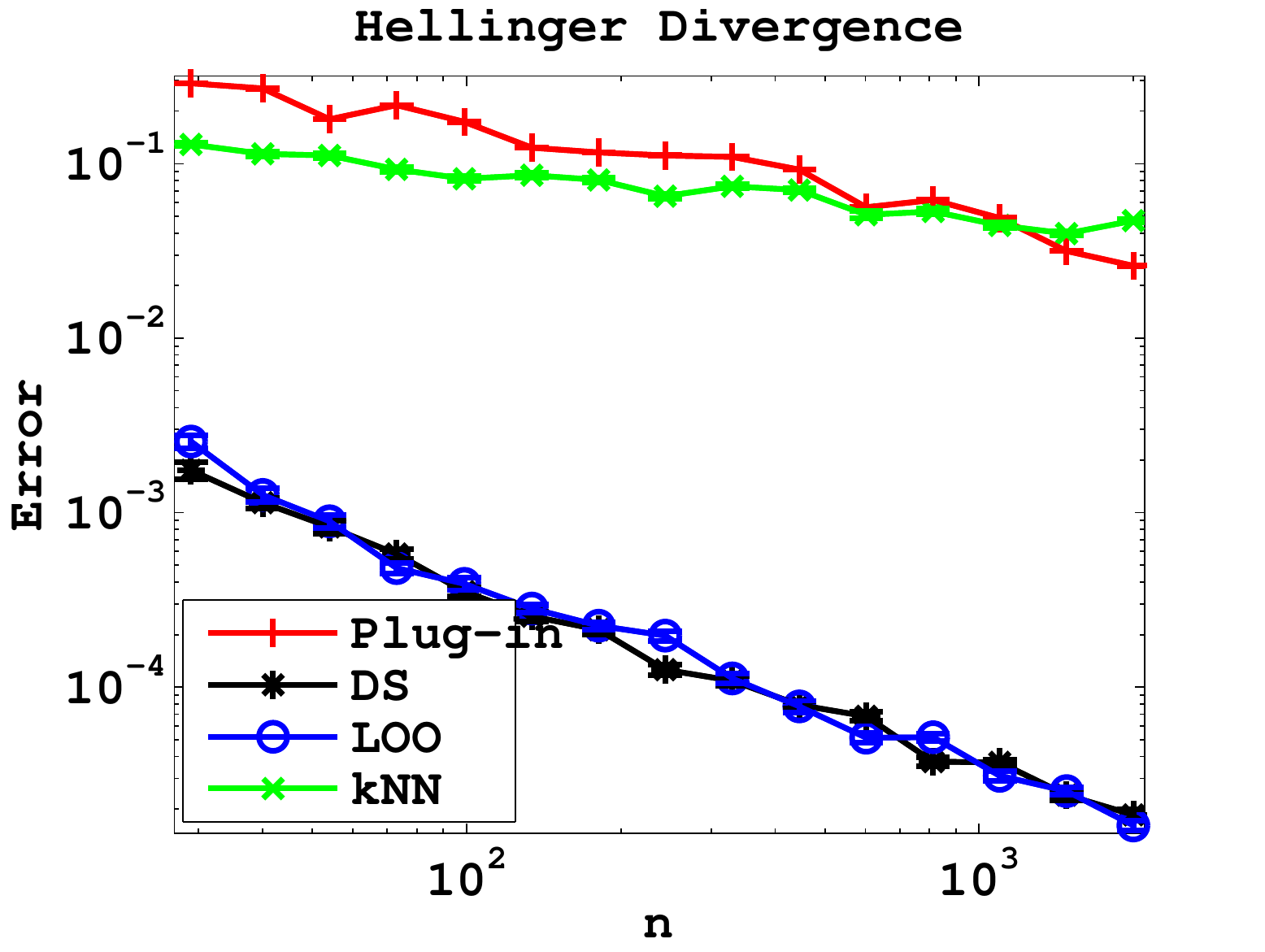} \hspace{\imhspthree}
  \label{fig:hellinger2D}
} 
\subfigure[]{
  \includegraphics[width=\imarrwthree]{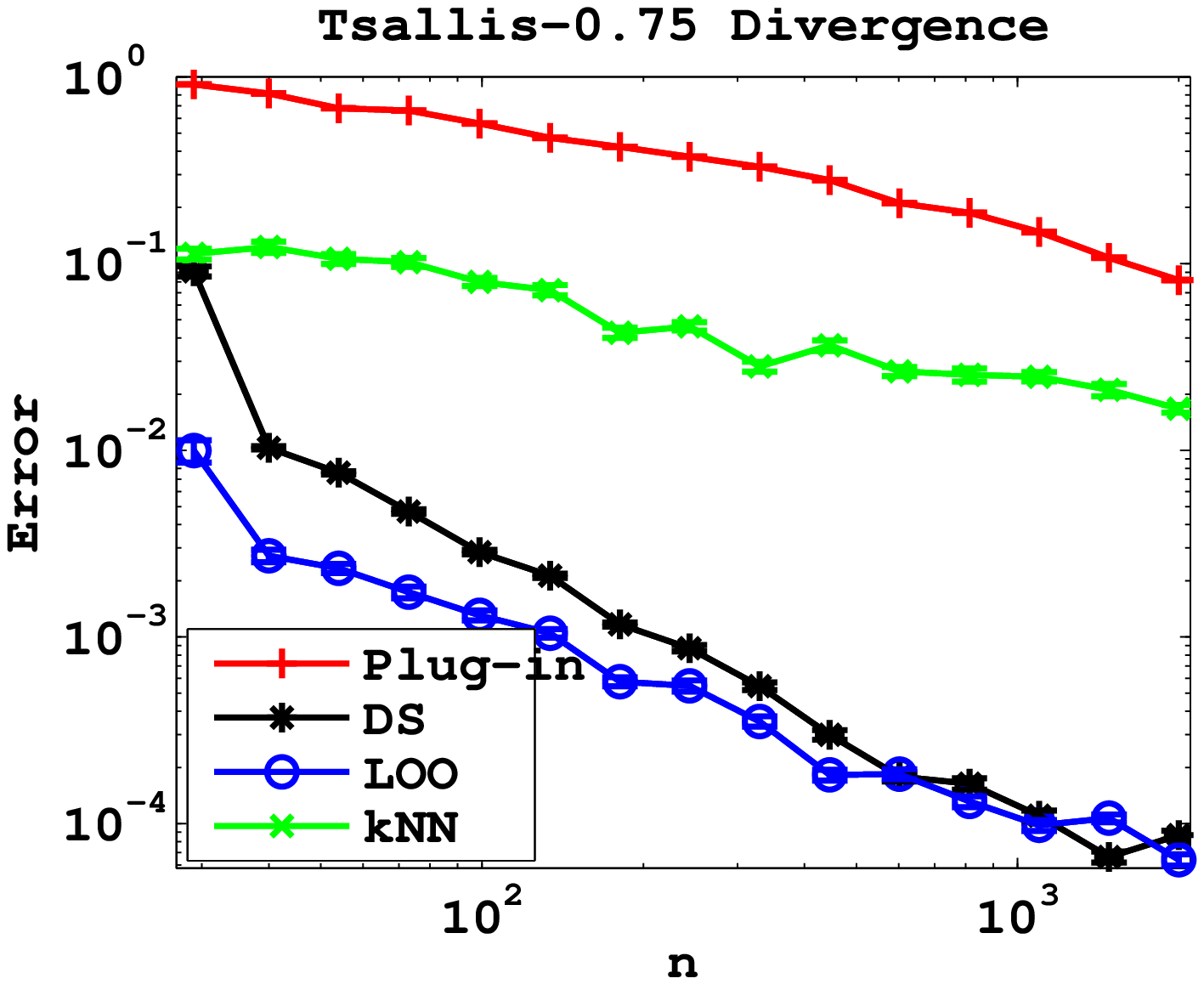}
  \label{fig:tsalllis2D}
}
\\[\imcaptionspace]
\subfigure[]{
  \includegraphics[width=\imarrwthree]{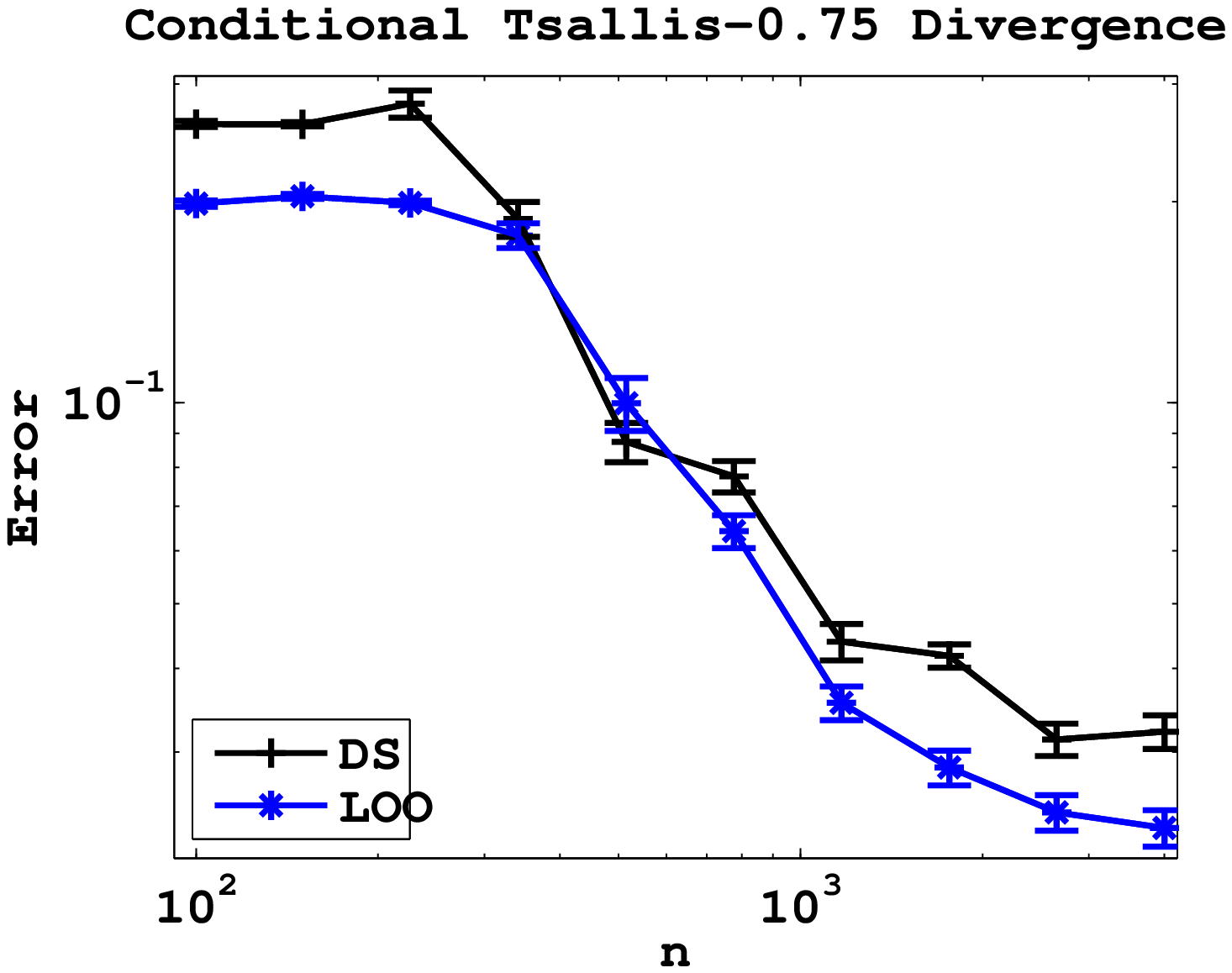} \hspace{\imhspthree}
  \label{fig:condTsallis4D}
}
\subfigure[]{
  \includegraphics[width=\imarrwthree]{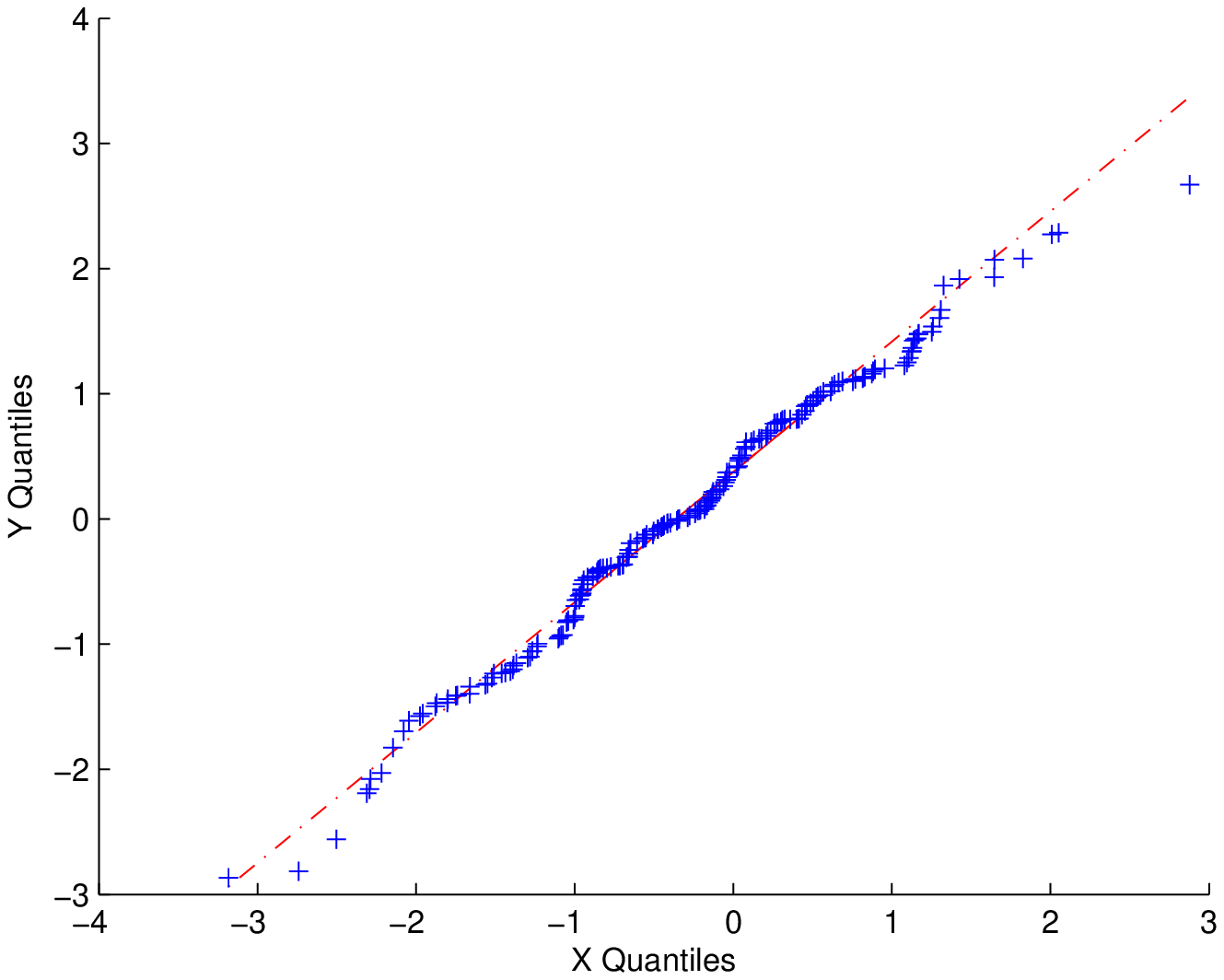} \hspace{\imhspthree}
  \label{fig:dsHellingerAN}
}
\subfigure[]{
  \includegraphics[width=\imarrwthree]{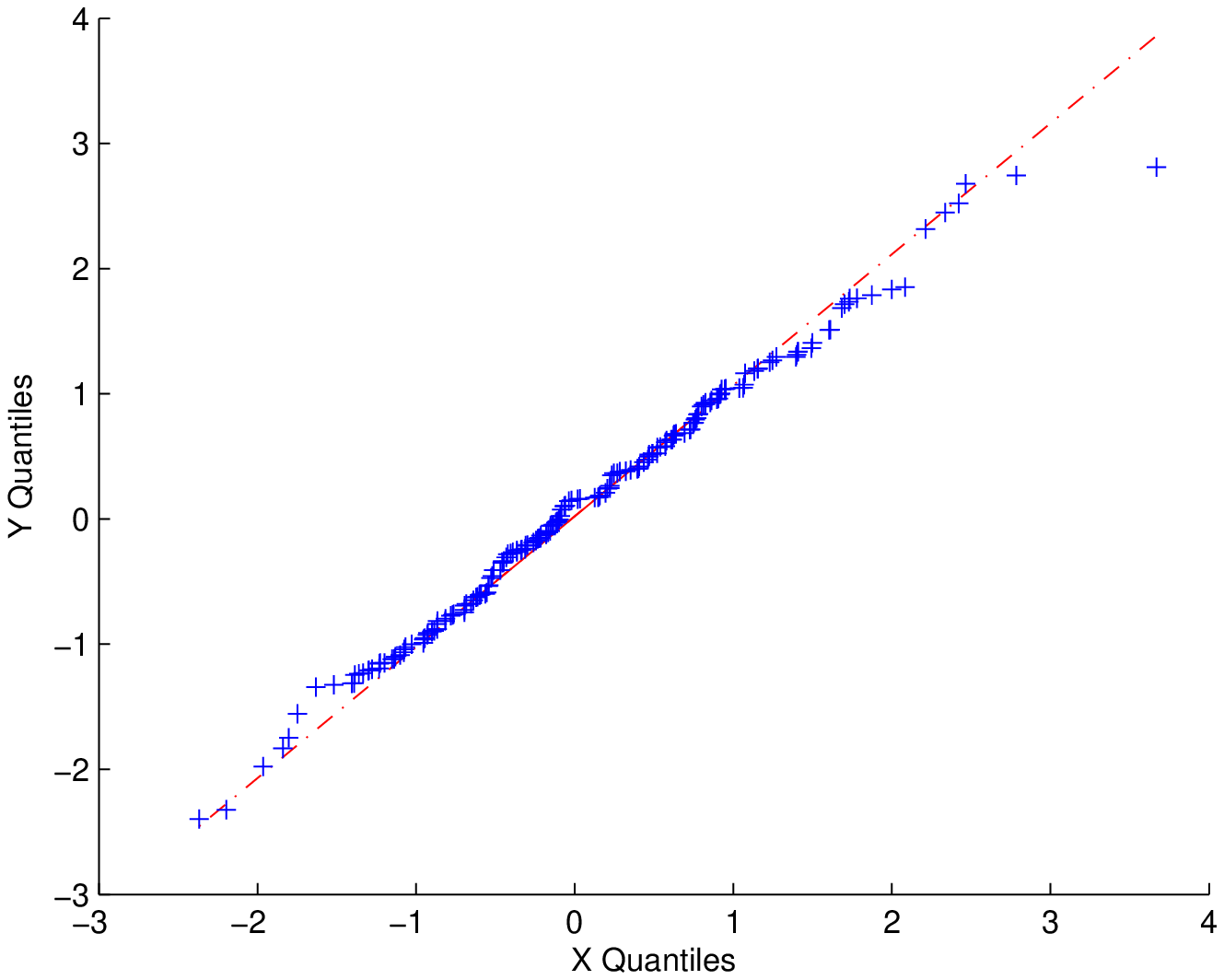}
  \label{fig:looHellingerAN}
} \\
\vspace{\imcaptionspace}
\caption[]{ \small 
Comparison of the \dss and \loos estimators against alternatives on different 
functionals. The $y$-axis is the error $|\Tf - T(f,g)|$ and the $x$-axis is the
number of samples. All figures were produced by averaging over $50$ experimens. 
In Fig~\subref{fig:condTsallis4D}: note that there are no known estimators for the
conditional Tsallis Divergence. Further, since this is a $4$-dimensional problem the
plug-in estimator is computationally intractable due to numerical integration.
Figs~\subref{fig:dsHellingerAN},~\subref{fig:looHellingerAN}: QQ plots obtained using
$4000$ samples for Hellinger divergence estimation in $4$ dimensions using the \dss
and \loos estimators respectively.
}
\label{fig:simulations}
\vspace{\imtextspace}
\end{figure*}
}

\newcommand{\insertFigToyOne}{
\begin{figure*}
\centering
\subfigure{
  \includegraphics[width=\imarrwthree]{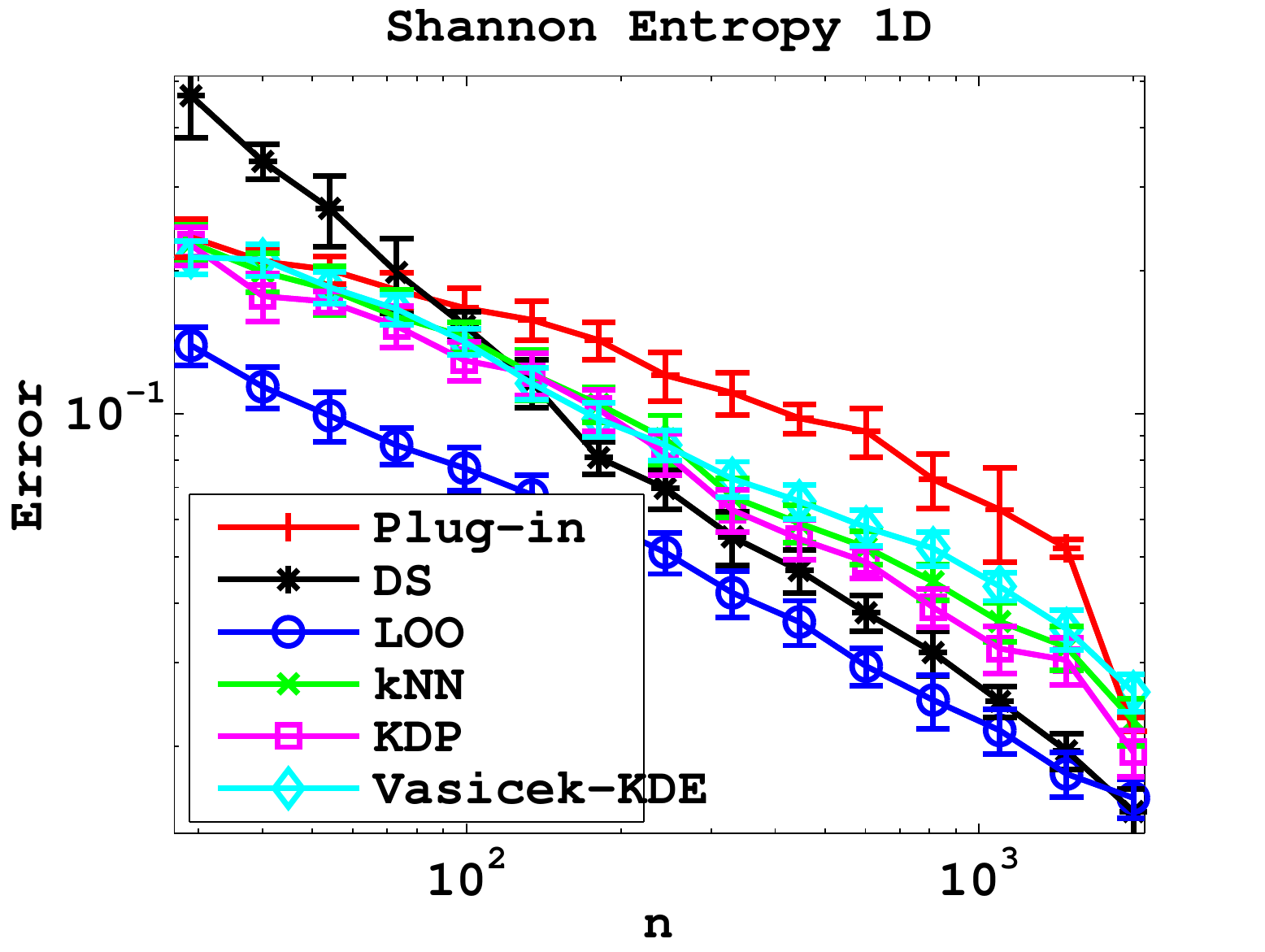} \hspace{\imhspthree}
  \label{fig:entropy1D}
}
\subfigure{
  \includegraphics[width=\imarrwthree]{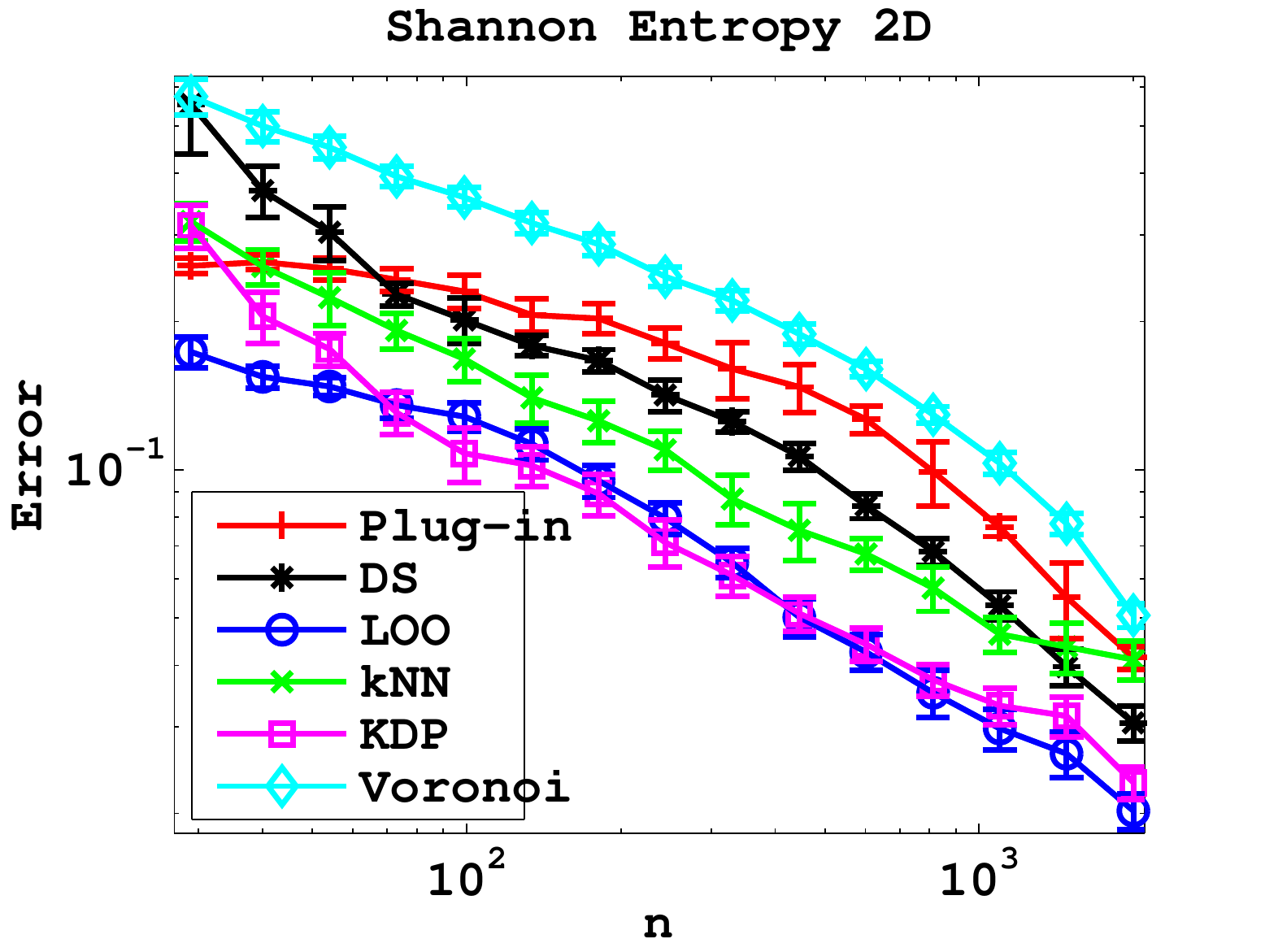} \hspace{\imhspthree}
  \label{fig:entropy2D}
}
\subfigure{
  \includegraphics[width=\imarrwthree]{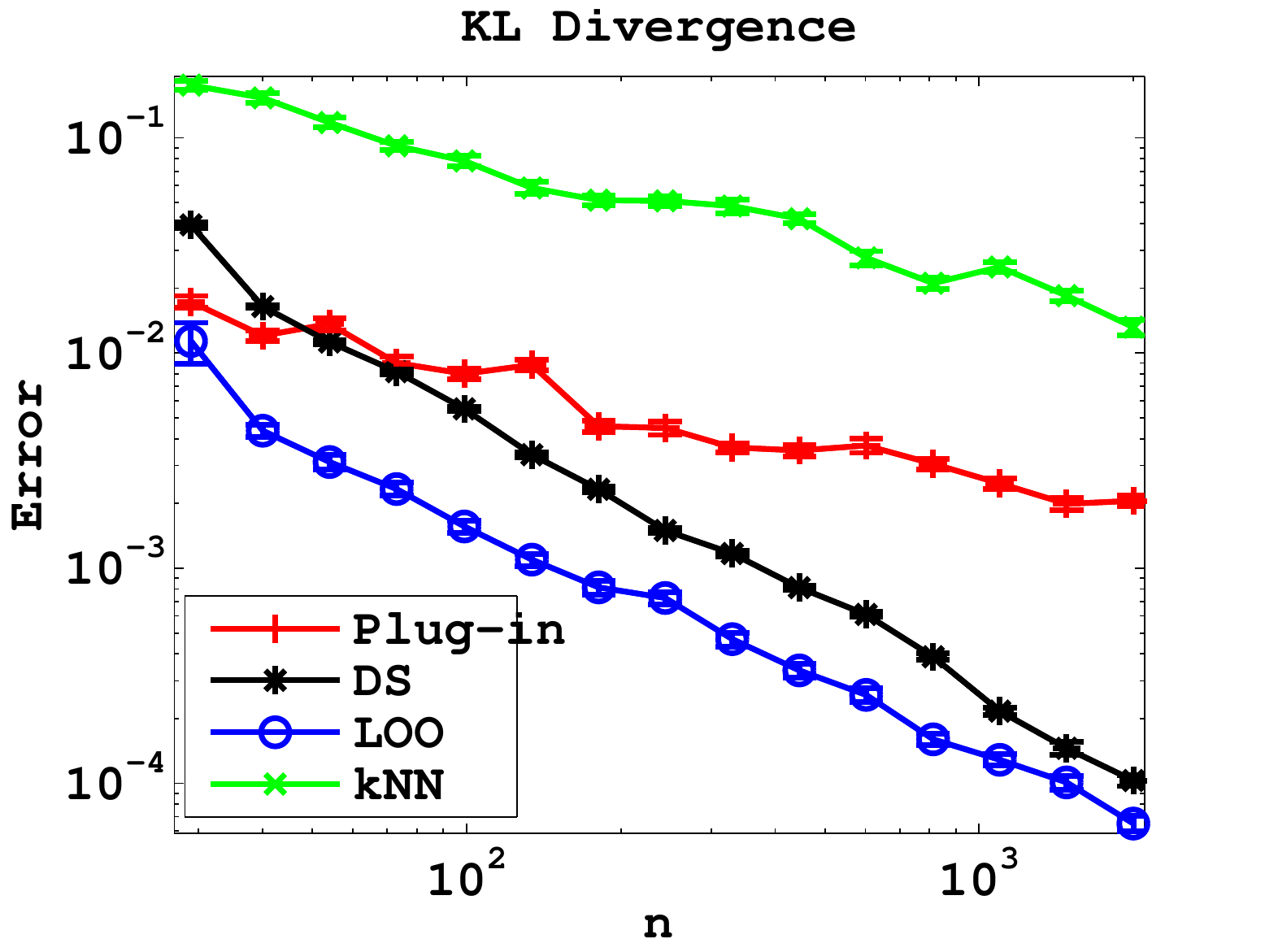}
  \label{fig:kl1D}
} \\[\imcaptionspace]
\subfigure{
  \includegraphics[width=\imarrwthree]{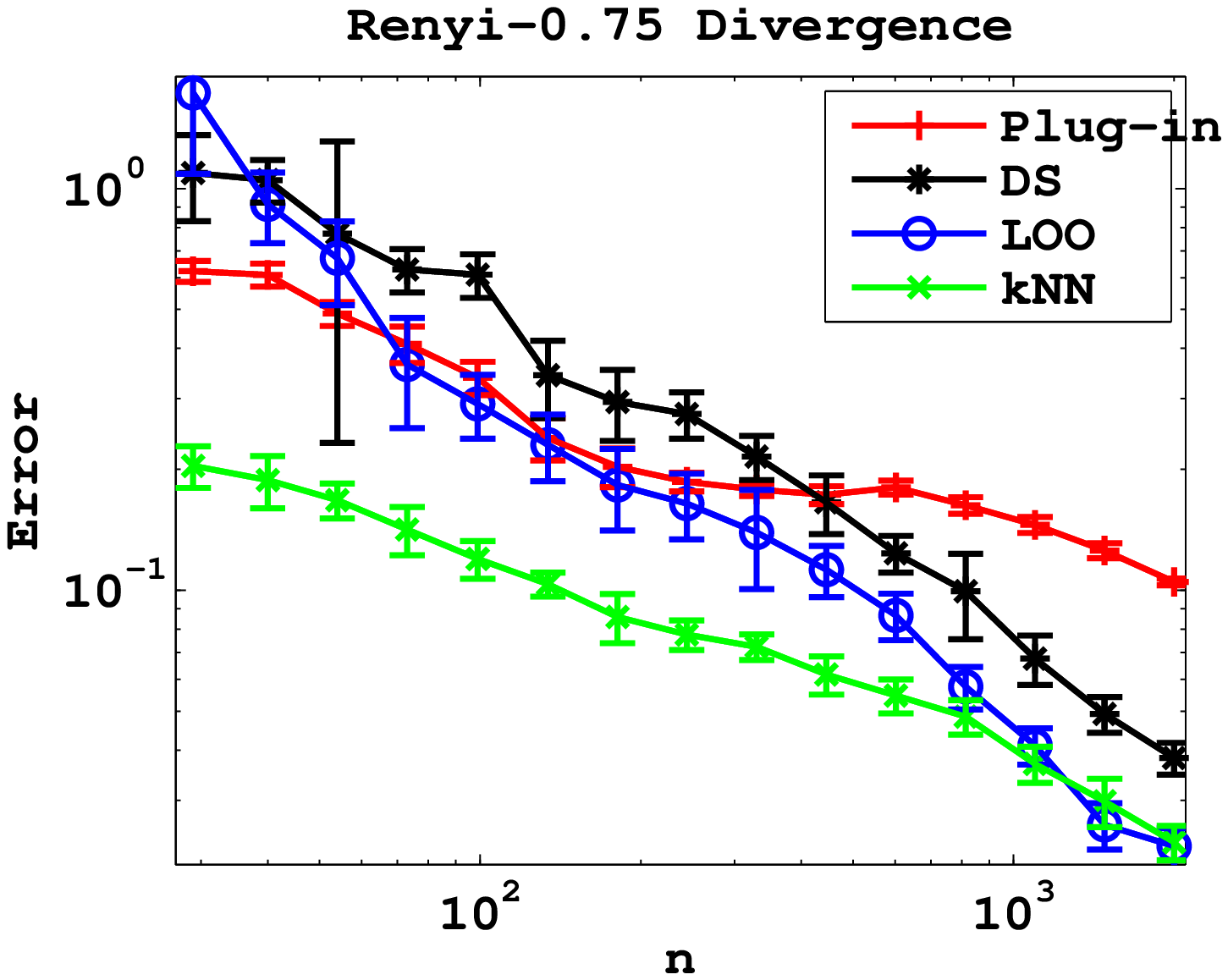} \hspace{\imhspthree}
  \label{fig:renyi1D}
}
\subfigure{
  \includegraphics[width=\imarrwthree]{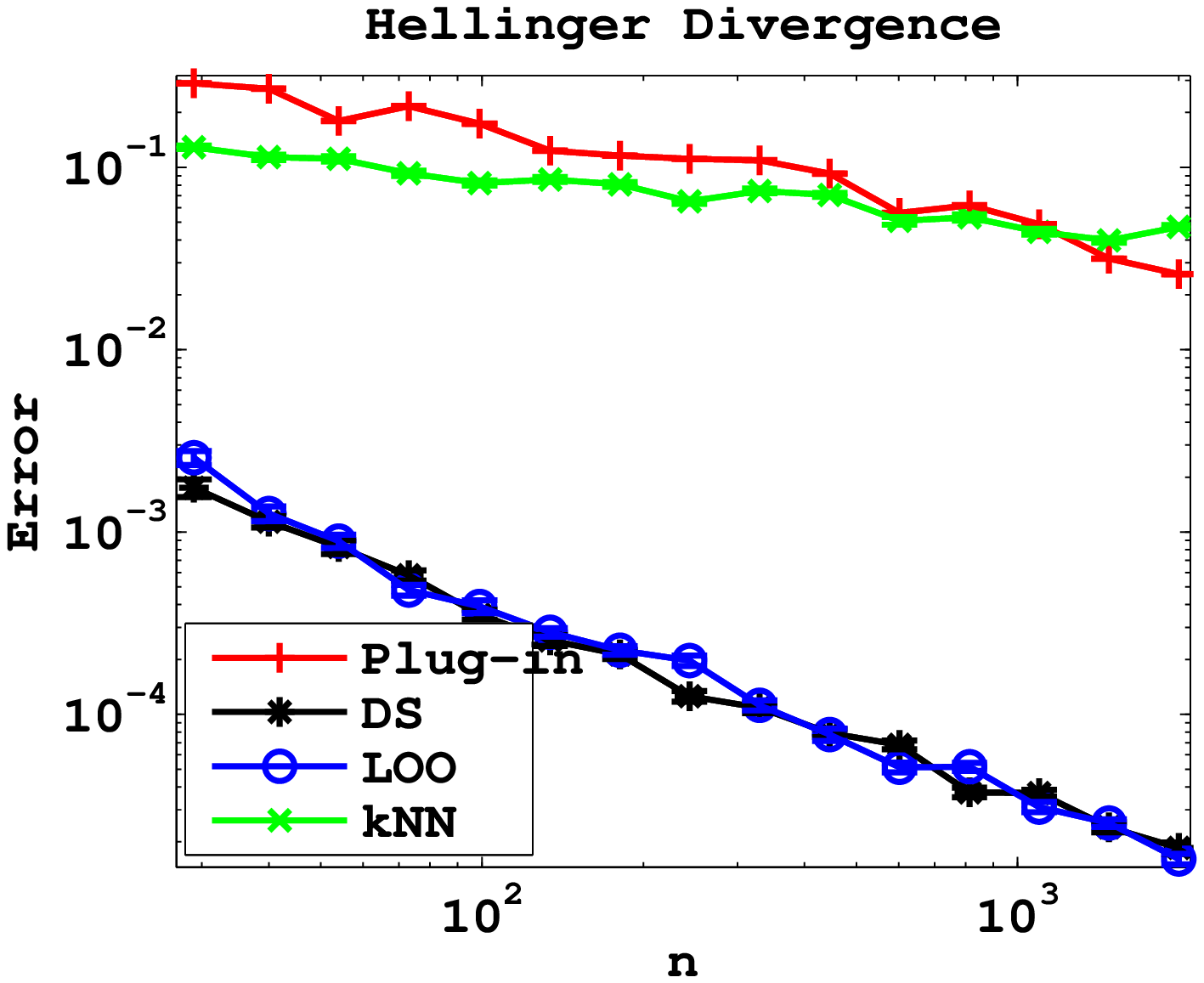} \hspace{\imhspthree}
  \label{fig:hellinger2D}
}
\subfigure{
  \includegraphics[width=\imarrwthree]{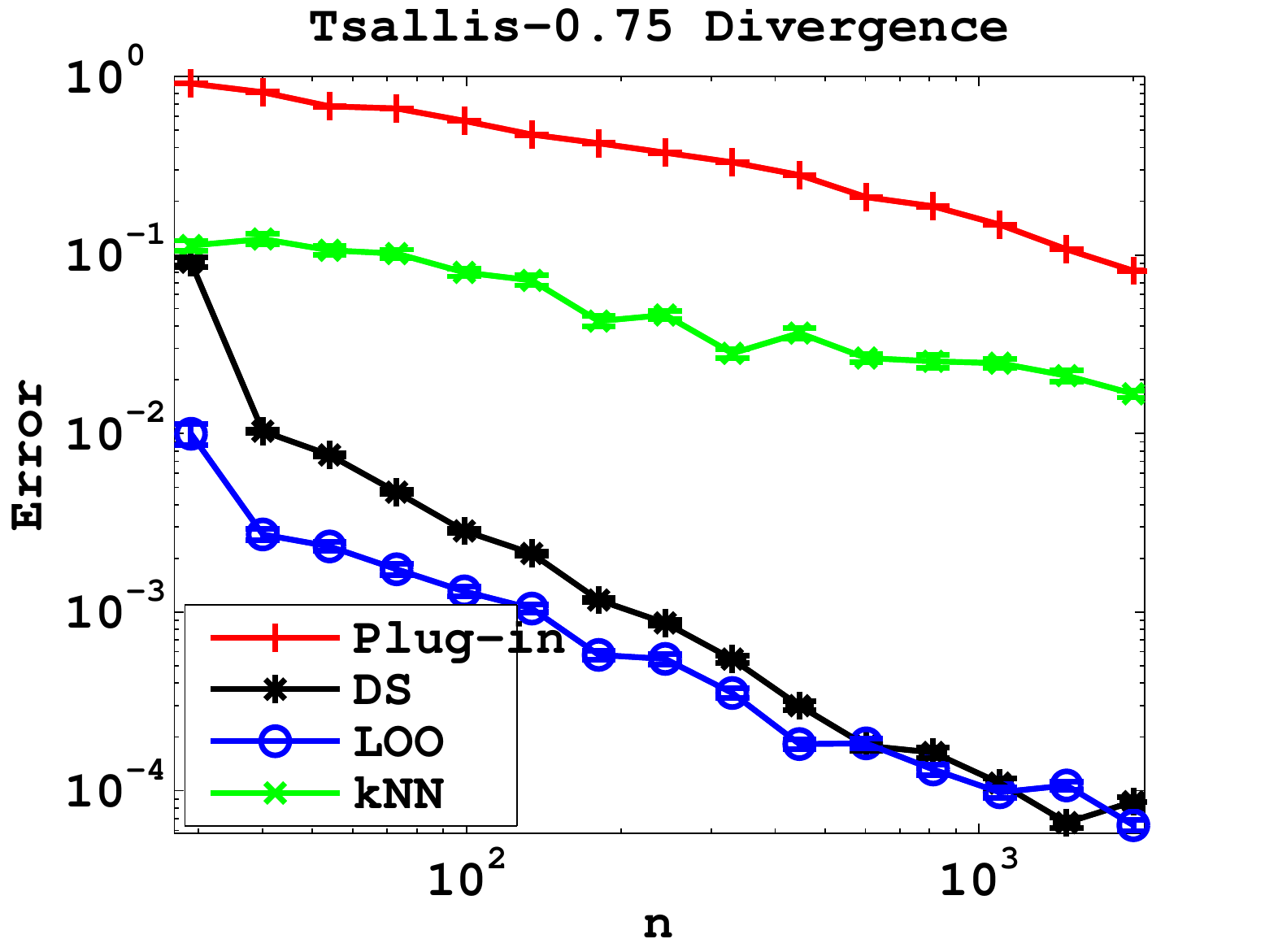}
  \label{fig:tsalllis2D}
}\\[\imcaptionspace]
\vspace{\imcaptionspace}
\caption[]{ \small 
Comparison of \ds/\loos estimators against alternatives on different 
functionals. The $y$-axis is the error $|\widehat{T} - T(f,g)|$ and the $x$-axis is the
number of samples. 
All curves were produced by averaging over $50$ experiments.
Some curves are slightly wiggly probably due to discretisation in hyperparameter
selection.
}
\label{fig:toyOne}
\vspace{\imtextspace}
\end{figure*}
}

\newcommand{\insertFigToyTwo}{
\begin{figure*}
\centering
\subfigure[]{
  \includegraphics[width=\imarrwthree]{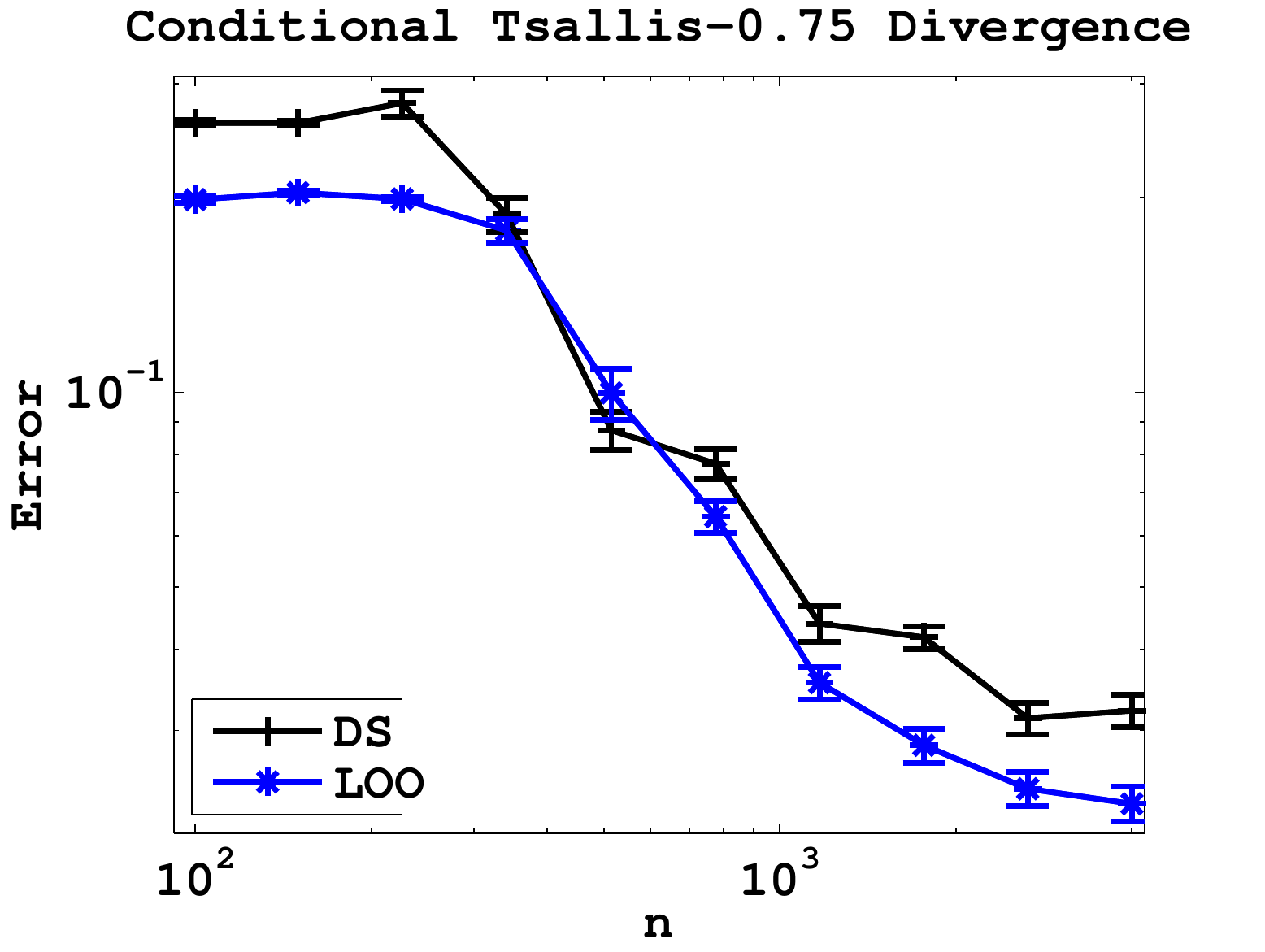} \hspace{\imhspthree}
  \label{fig:condTsallis4D}
}
\subfigure[]{
  \includegraphics[width=\imarrwthree]{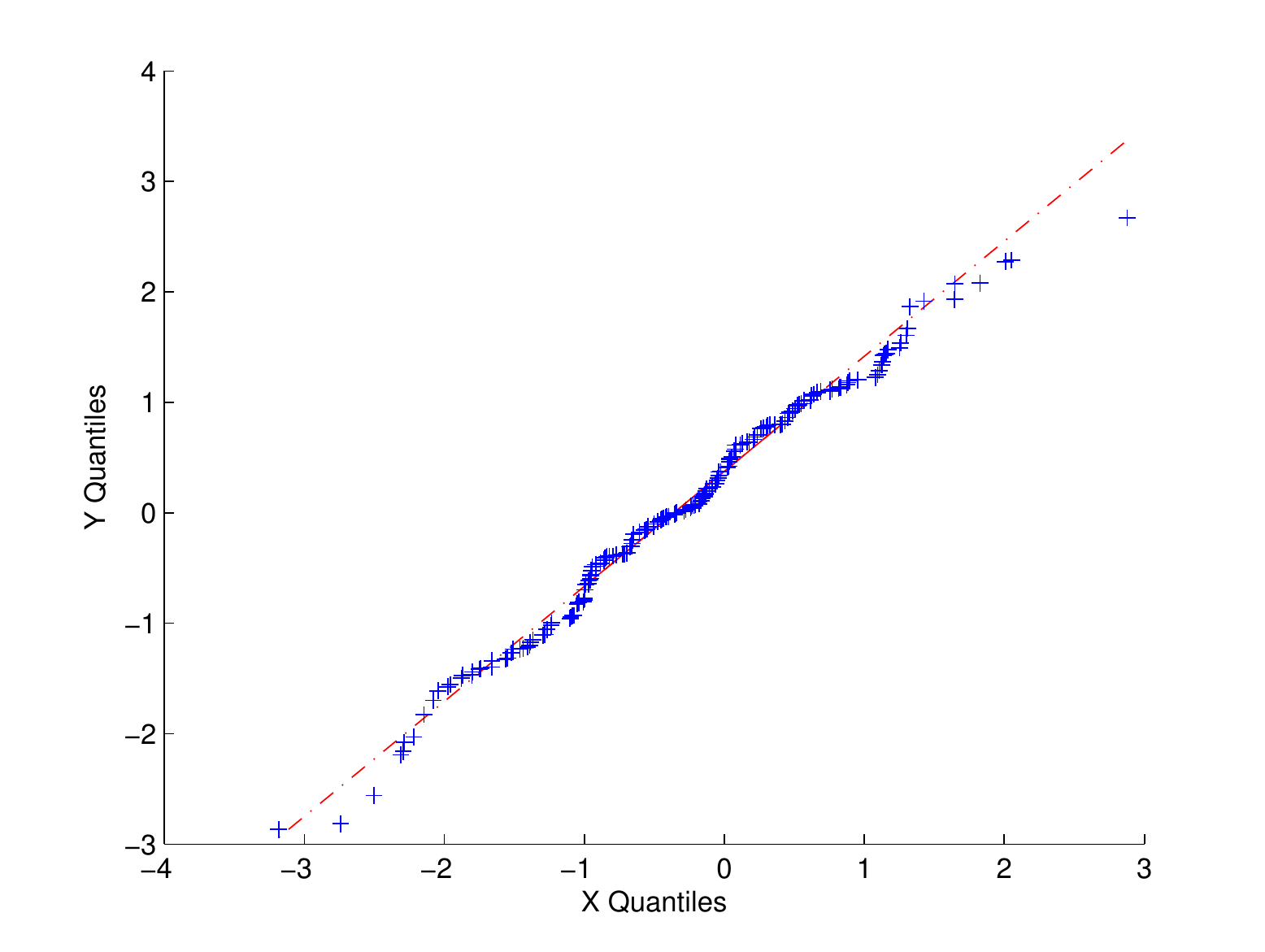} \hspace{\imhspthree}
  \label{fig:dsHellingerAN}
}
\subfigure[]{
  \includegraphics[width=\imarrwthree]{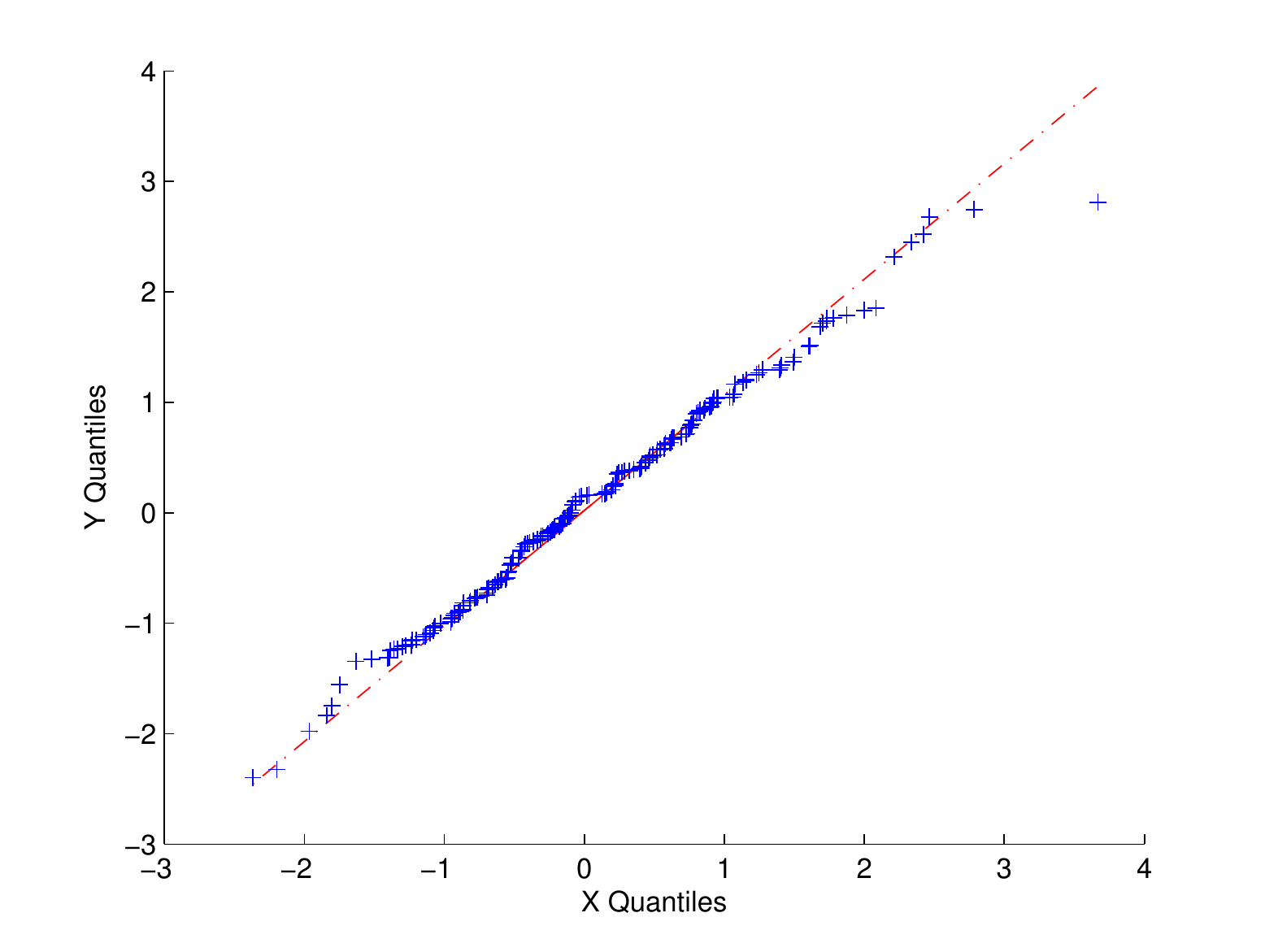} \hspace{\imhspthree}
  \label{fig:looHellingerAN}
}\\[\imcaptionspace]
\caption[]{ \small 
Fig~\subref{fig:condTsallis4D}: Comparison of the \loos vs \dss estimator on estimating the
Conditional Tsallis divergence in $4$ dimensions. Note that the plug-in estimator is
intractable due to numerical integration.
There are no other known estimators for the conditional tsallis divergence.
Figs~\subref{fig:dsHellingerAN},~\subref{fig:looHellingerAN}: QQ plots obtained using
$4000$ samples for Hellinger divergence estimation in $4$ dimensions using the \dss
and \loos estimators respectively.
}
\label{fig:toyTwo}
\vspace{\imtextspace}
\end{figure*}
}

\newcommand{\insertFigClustering}{
\begin{figure*}
\centering
\subfigure[]{
  \includegraphics[width=3.7in]{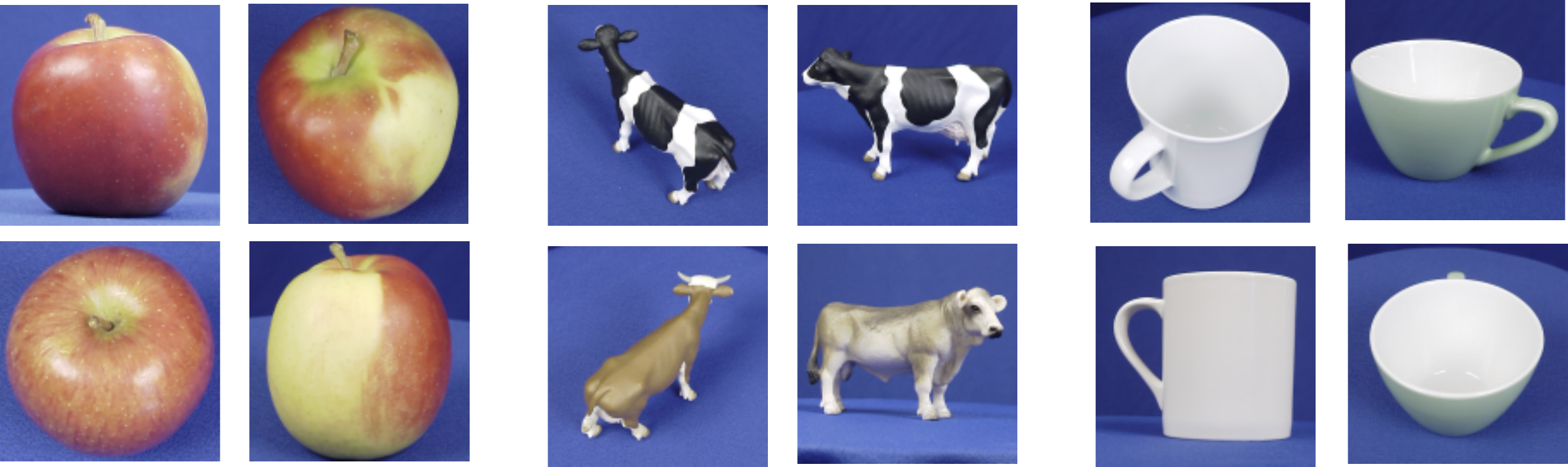} \hspace{0.1in} 
  \label{fig:clusImages}
}
\subfigure[]{
  \includegraphics[width=1.25in]{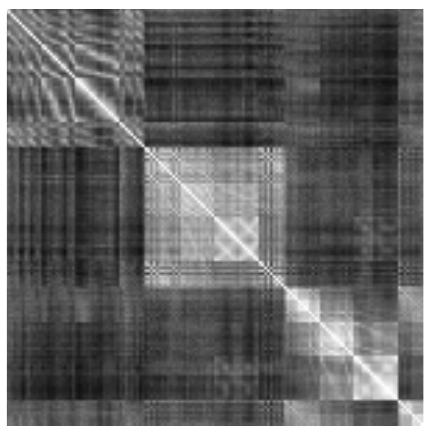}
  \label{fig:affinity}
}\\[\imcaptionspace]
\caption[]{ \small 
\subref{fig:clusImages} Some sample images from the three categories apples, cows and
cups. \subref{fig:affinity} The affinity matrix used in clustering.
}
\label{fig:clus}
\vspace{\imtextspace}
\end{figure*}
}

\begin{abstract}
We propose and analyze estimators for statistical functionals of one or more
distributions under nonparametric assumptions.
Our estimators are based on the theory of influence functions, which appear
in the semiparametric statistics literature.
We show that estimators based either on data-splitting or a leave-one-out technique 
enjoy fast rates of convergence and other favorable theoretical properties.
We apply this framework to derive estimators for several popular information 
theoretic quantities, and via empirical evaluation, show the advantage of this 
approach over existing estimators.
\end{abstract}

\if@twocolumn
\section{INTRODUCTION}
\else
\section{Introduction}
\fi
\label{sec:intro}

Entropies, divergences, and mutual informations are classical information-theoretic
quantities that play fundamental roles in statistics, machine learning,
and across the mathematical sciences.
In addition to their use as analytical tools, they arise in a variety of
applications including hypothesis testing, parameter estimation, feature selection, and optimal experimental
design. 
In many of these applications, it is important to \emph{estimate} these
functionals from data so that they can be used in downstream algorithmic or
scientific tasks. 
In this paper, we develop a recipe for estimating statistical
functionals of one or more nonparametric distributions based on the notion of
influence functions.

Entropy estimators are used in applications ranging from independent components 
analysis~\citep{learned2003ica}, intrinsic dimension 
estimation~\citep{carter10intrinsic} and several signal processing 
applications~\citep{hero02entropicGraphs}.
Divergence estimators are useful in statistical tasks such as two-sample testing.
Recently they have also gained popularity as they are used to measure (dis)-similarity between objects that are modeled as distributions, in what is known as the ``machine learning on distributions" framework~\citep{dhillon03information,poczos12divergence}.
Mutual information estimators have been used in in learning tree-structured Markov random fields~\citep{liu2012exponential}, feature selection~\citep{peng2005feature}, clustering~\citep{lewi2006real} and neuron classification~\citep{schneidman02neurons}.
In the parametric setting, conditional divergence and conditional mutual 
information estimators are 
used for conditional two sample testing or as 
building blocks for structure learning in graphical models.
Nonparametric estimators for these quantities
could potentially allow us to generalise several of these
algorithms to the nonparametric domain.
Our approach gives sample-efficient estimators for all these quantities
(and many others), which often outperfom the existing estimators both 
theoretically and empirically.

Our approach to estimating these functionals is based on post-hoc correction
of a preliminary estimator using the Von Mises Expansion
\cite{vandervaart98asymptotic,fernholz83vonmises}.
This idea has been used before in 
semiparametric statistics literature~\citep{birge95estimation,robins09quadraticvm}.
However, hitherto  most
studies are restricted to functionals of one distribution and have
focused on a ``data-split" approach which splits the samples for
density estimation and functional estimation.
While the data-split (\ds) estimator is known to achieve the parametric convergence
rate for sufficiently
smooth densities \cite{birge95estimation,laurent1996efficient}, in practical settings
splitting the data results in poor empirical performance.

In this paper we introduce the calculus of influence functions to the machine 
learning community and
considerably expand existing results by proposing a ``leave-one-out" (\loo) estimator
which makes efficient use of the data and has
better empirical performance than the DS technique. 
We also extend the framework of influence functions
to functionals of multiple distributions and develop both \dss and \loos 
estimators.
The main contributions of this paper are:
\begin{enumerate}[leftmargin=*]
\item We propose a \loos technique to estimate functionals of a single distribution 
with the same convergence rates as the \dss estimator. However, the
\loos estimator performs better empirically. 
\item We extend the framework to functionals of multiple distributions and
analyse their convergence.
Under sufficient smoothness both \dss and \loos estimators achieve the parametric 
rate and the \dss
estimator has a limiting normal distribution.
\item We prove a lower bound for estimating functionals of multiple distributions.
We use this to establish minimax optimality of the
 \dss and \loos estimators under sufficient smoothness.
\item We use the approach to construct and implement estimators for various entropy, 
divergence, mutual information quantities and their conditional versions. 
A subset of these functionals are
listed in Table~\ref{tb:functionalDefns}. For several functionals, \emph{these are the only
known estimators}.
Our software is publicly available at
\texttt{github.com/kirthevasank/if-estimators}.
\item We compare our estimators against several other approaches in simulation.
Despite the generality of our approach, our estimators are
competitive with and in many cases superior to existing specialized approaches for
specific functionals. We also demonstrate how our estimators can be used in machine
learning applications via an image clustering task.
\end{enumerate}

Our focus on information theoretic quantities is due to their
relevance in machine learning applications, rather than a limitation
of our approach. Indeed our techniques apply to any smooth functional.

\label{sec:relatedwork}

\textbf{History:}
We provide a brief history of the post-hoc correction technique and influence functions.
We defer a detailed discussion of other approaches to estimating 
functionals to Section~\ref{sec:comparison}.
To our knowledge, the first paper using a post-hoc correction 
estimator for  was that of~\citet{bickel1988estimating}.  
The line of work
following this paper analyzed integral functionals of a single one
dimensional density of the form $\int \nu(p)$
~\citep{bickel1988estimating,birge95estimation,
laurent1996efficient,kerkyacharian1996estimating}.  A recent paper
by \citet{krishnamurthy14renyi} also extends this line to functionals
of multiple densities, but only considers polynomial functionals of
the form $\int p^\alpha q^\beta$ for densities $p$ and $q$.  
Moreover, all these works use data splitting.
Our work builds on these by extending to a more general class of
functionals and we propose the superior \loos estimator.

A fundamental quantity in the design of our estimators is
the \emph{influence function}, which appears both in
robust and semiparametric
statistics.  Indeed, our work is inspired 
by that of~\citet{robins09quadraticvm} and~\citet{emery98probstat} 
who propose a (data-split)
influence-function based estimator for functionals of a single
distribution. 
Their analysis for nonparametric problems rely
on ideas from semiparametric statistics: they define
influence functions for parametric models and then analyze estimators
by looking at all parametric submodels through the true parameter.

\section{Preliminaries}
\label{sec:prelims}

Let $\Xcal$ be a compact metric space equipped with a measure $\mu$, e.g. the 
Lebesgue measure.
Let $P$ and $Q$ be measures over $\Xcal$ that are absolutely continuous w.r.t $\mu$.
Let $p,q \in L_2(\Xcal)$ be the Radon-Nikodym derivatives with respect to $\mu$.
We focus on estimating functionals of the form:
\begin{align}
T(P) &= T(p) = \phi\left(\int\nu(p)d\mu\right) \qquad \textrm{or} \qquad
T(P,Q) = T(p,q) = \phi\left(\int \nu(p,q) d\mu\right),
\end{align}
where $\phi, \nu$ are real valued Lipschitz functions that twice differentiable. 
Our framework permits more general
functionals -- e.g. functionals based on the conditional densities --
but we will focus on this form for ease of exposition.
To facilitate presentation of the main definitions, it is easiest to work with 
functionals of one distribution $T(P)$. 
Define $\Mcal$ to be the set of all measures that are absolutely 
continuous w.r.t $\mu$, whose Radon-Nikodym derivatives belong to $L_2(\Xcal)$. 

Central to our development is the Von Mises expansion (VME), which is the 
distributional analog of the Taylor expansion.
For this we introduce the \gateaux derivative which imposes a notion of
differentiability in topological spaces. We then introduce the \emph{influence
function}. \\[\thmparaspacing]

\begin{definition}
The map $T':\Mcal \rightarrow \RR$ where 
$
T'(H;P) = \partialfrac{t}{T(P + tH )}|_{t=0}
$
is called the \textbf{\gateaux
derivative} at $P$ if the derivative exists and is linear and continuous in $H$.
We say $T$ is \gateaux differentiable at $P$ if $T'$ exists at $P$.
\\[\thmparaspacing]
\end{definition}


\begin{definition}
Let $T$ be \gateaux differentiable at $P$.
A function $\psi(\cdot; P) : \Xcal \rightarrow \RR$ which satisfies
$
T'(Q-P; P) = \int \psi(x; P) \ud Q(x),
$
is the \textbf{influence function} of $T$ w.r.t the distribution $P$. 
\label{def:infFun}
\end{definition}

The existence and uniqueness of the influence function is guaranteed by the
Riesz representation theorem, since the domain of $T$ is a bijection of 
$L_2(\Xcal)$ and consequently a Hilbert space.
The classical work of~\citet{fernholz83vonmises} defines the influence function 
in terms of the \gateaux derivative by,
\begin{equation}
\psi(x, P) = T'(\delta_x - P, P) = \partialfracat{t}{T((1-t)P + t\delta_x )}{0},
\label{eqn:infFunGateaux}
\end{equation}
where $\delta_x$ is the dirac delta function at $x$.
While our functionals are defined only on non-atomic distributions, we can still
use~\eqref{eqn:infFunGateaux} to compute the influence function. 
The function computed this way can be shown to satisfy
Definition~\ref{def:infFun}.

Based on the above, the first order VME is,
\ifthenelse{\boolean{istwocolumn}} 
  {
  \begin{align*}
  T(Q) &= T(P) + T'(Q-P; P) + R_2(P,Q)  \numberthis 
      \label{eqn:oneDistroVME}\\
       &= T(P) + \int \psi(x; P) \ud Q(x) + R_2(P,Q),
  \end{align*}
  }
  {
  \begin{align}
  T(Q) = T(P) + T'(Q-P; P) + R_2(P,Q) 
       = T(P) + \int \psi(x; P) \ud Q(x) + R_2(P,Q),
    \label{eqn:oneDistroVME}
  \end{align}
  }
where $R_2$ is the second order remainder.
\gateaux differentiability alone will not be
sufficient for our purposes. In what follows, we will assign 
$Q  \rightarrow F$ 
and $P\rightarrow \widehat{F}$, where $F$, $\widehat{F}$ are the
true and estimated distributions. 
We would like to bound the remainder in
terms of a distance between $F$ and $\widehat{F}$. 
By taking the domain of $T$ to be only measures with continuous densities,
we can control $R_2$ using the $L_2$ metric of the
densities.
This essentially  means  that our functionals satisfy a stronger form of
differentiability called \frechet differentiability 
\citep{vandervaart98asymptotic, fernholz83vonmises} in the $L_2$ metric.
Consequently, we can write all derivatives in terms of the
densities, and the VME reduces to a functional Taylor expansion on the densities 
(Lemmas~\ref{lem:vmeTaylorEqOneDistro},~\ref{lem:vmeTaylorEqTwoDistro} 
in Appendix~\ref{sec:appAncillaryResults}):
\ifthenelse{\boolean{istwocolumn}}
  {
  \begin{align*}
  T(q) &= T(p) + \phi'\left(\int \nu(p)\right)\int(q-p) \nu'(p) \\
  &\hspace{0.8in}+  R_2(p,q) \\ 
    &= T(p) + \int \psi(x; p) q(x) \ud x +  \bigO(\|p-q\|_2^2).
    \numberthis
  \label{eqn:functaylor}
  \end{align*}
  }
  {
  \begin{align*}
  T(q) &= T(p) + \phi'\left(\int \nu(p)\right)\int(q-p) \nu'(p) + 
      R_2(p,q) \\ 
    &= T(p) + \int \psi(x; p) q(x) \ud x +  \bigO(\|p-q\|_2^2).
    \numberthis
  \label{eqn:functaylor}
  \end{align*}
  }
This expansion will be the basis for our estimators. 

These ideas generalize to functionals of multiple distributions and to settings where the functional involves quantities other than the density.
A functional $T(P,Q)$ of two distributions has two \gateaux derivatives, $T'_i(\cdot; P,Q)$ for $i=1,2$ formed by perturbing the $i$th argument with the other fixed. 
The influence functions $\psi_1,\psi_2$ satisfy, $\forall P_1,P_2\in\Mcal$, 
\begingroup
\allowdisplaybreaks
\begin{align*}
&T_1'(Q_1-P_1; P_1, P_2) = 
\partialfracat{t}{T(P_1+t(Q_1-P_1), P_2)}{0} = 
\int \psi_1(u; P_1, P_2) \ud Q_1(u). 
\numberthis \label{eqn:defInfFun} \\
&T_2'(Q_2-P_2; P_1, P_2) = 
\partialfracat{t}{T(P_1,P_2 +t(Q_2-P_2))}{0} = 
\int \psi_2(u; P_1, P_2) \ud Q_2(u).
\end{align*}
\endgroup
The VME can be written as,
\begin{align*}
T(q_1, q_2) &= T(p_1, p_2) + \int \psi_1(x; p_1, p_2) q_1(x) \ud x + 
  \int \psi_2(x; p_1, p_2) q_2(x) \ud x  \\
  &\hspace{0.2in}+  \bigO(\|p_1-q_1\|_2^2) + \bigO(\|p_2-q_2\|_2^2).
  \numberthis
\label{eqn:functaylortwoD}
\end{align*}





\section{Estimating Functionals}
\label{sec:estimation}

First consider estimating a functional of a single distribution,
$T(f)= \phi(\int \nu(f) d\mu)$ 
from samples $\Xonetwo \sim f$. 
Using the VME~\eqref{eqn:functaylor}, \citet{emery98probstat} 
and~\citet{robins09quadraticvm} 
suggested a natural estimator.
If we use half of the data $\Xone$ to construct a density estimate
$\fhatone = \fhatone(\Xone)$, then by~\eqref{eqn:functaylor}:
\begin{align*}
T(f) - T(\fhatone)  = \int \psi(x; \fhatone)f(x)d\mu + \bigO(\|f - \fhatone\|_2^2).
\end{align*}
Since the influence function does not depend on the unknown distribution $F$,
the first term on the right hand side is simply an expectation of $\psi(X;\fhatone)$
at $F$.
We use the second half of the data to estimate this expectation with its sample mean.
This leads to the following preliminary estimator:
\begin{equation}
 \Tfdsone = T(\fhatone) + \frac{1}{n/2}\nsumsechalf \psi(X_i; \fhatone).
\label{eqn:dsEstimator}
\end{equation}
We can similarly construct an estimator $\Tfdstwo$ by using $\Xtwo$ 
for density estimation and $\Xone$ for averaging. Our final estimator is
obtained via the average $\Tfds =(\Tfdsone+\Tfdstwo)/2$. 
In what follows, we shall refer to this estimator as the Data-Split (\ds) estimator.

The rate of convergence of this estimator is determined by the error in the VME 
$\bigO(\|f-\fhatone\|_2^2)$ and the $n^{-1/2}$ rate for estimating an expectation.
Lower bounds from several literature 
\citep{laurent1996efficient,birge95estimation} 
confirm minimax optimality of the \dss estimator when $f$ is sufficiently
smooth. The data splitting trick is commonly used in several other works
\cite{birge95estimation, laurent1996efficient, krishnamurthy14renyi}.
The analysis of \dss estimators is straightforward as the rate directly follows from the 
Cauchy-Schwarz inequality. 
While in theory, \dss estimators enjoy good rates of convergence,
from a practical stand point, the data splitting 
is unsatisfying since using
only half the data each for estimation and averaging invariably decreases the accuracy.

As an alternative, we propose a Leave-One-Out (\loo) version of the above estimator which
makes more efficient use of the data,
\begin{equation}
\Tf = \frac{1}{n}\sum_{i=1}^n  T(\fhatmi) + \psi(X_i; \fhatmi).
\label{eqn:looEstimator}
\end{equation}
where $\fhatmi$ is the kernel density estimate using all the samples $\Xn$
except for $X_i$.
Theoretically, we prove that the \loos Estimator achieves the same rate of convergence as 
the \dss estimator but  emprically it performs much better.

We can extend this method for functionals of two distributions.
Akin to the one distribution case, we propose the following \dss and \loos versions.
\begingroup
\allowdisplaybreaks
\begin{align*}
  \Tfdsone &= T(\fhatone, \ghatone) +
    \frac{1}{n/2} \nsumsechalf \psi_f(X_i; \fhatone, \ghatone)  
    + \frac{1}{m/2} \msumsechalf \psi_g(Y_j; \fhatone, \ghatone). \numberthis 
  \label{eqn:dsEstimatorTwoD} \\
\Tf &= \frac{1}{\max(n,m)}\sum_{i=1}^{\max(n,m)} 
  T(\fhatmi, \ghatmi) + \psif(X_i;\fhatmi, \ghatmi) + 
  \psig(Y_i;\fhatmi, \ghatmi). \numberthis
  \label{eqn:looEstimatorTwoD}
\end{align*}
\endgroup
Here, $\ghatone,\ghatmi$ are defined similar to $\fhatone,\fhatmi$. 
For the \dss estimator we swap the samples to compute $\Tfdstwo$ and then
average.
For the \loos estimator, if $n>m$ we cycle through the points $\Ym$ until we have
summed over all $\Xn$ or vice versa.
Note that $\Tf$ is asymmetric when $n\neq m$.
A seemingly natural alternative would be to sum over all $nm$ pairings of
$X_i$'s and $Y_j$'s. However, the latter approach is more computationally
burdensome. Moreover, a straightforward modification of our analysis in
Appendix~\ref{sec:looEstimatorTwoD} shows that both estimators have the same
rate of convergence if $n$ and $m$ are of the same order.

\textbf{Examples: }
We demonstrate the generality 
of our framework by presenting estimators for several 
entropies, divergences and mutual informations and their
conditional versions in Table~\ref{tb:functionalDefns}.
For several functionals in the table, \emph{these are the first estimators proposed}.
We hope that this table will serve as a good reference for practitioners.
For several functionals (e.g. conditional and unconditional \renyi divergence,
conditional \tsallis mutual information and more) the estimators are not listed only
because the expressions are too long to fit into the table. Our software implements
a total of $17$ functionals which include all the estimators in the table.
In Appendix~\ref{sec:workedExample} we illustrate how to apply our framework
to derive an estimator for any functional via an example.

As will be discussed in Section~\ref{sec:comparison},
when compared to other alternatives, our technique has several favourable properties.
Computationally, the complexity of our method is $O(n^2)$ when compared to $O(n^3)$
for some other methods and for several functionals we do not require numeric
integration.
Additionally, unlike most other methods, we do not require any tuning of
hyperparameters.

\newcommand{\twolinecell}[2][c]{%
  \begin{tabular}[#1]{@{}c@{}}#2\end{tabular}}
 \newcommand{\fourlinecell}[2][c]{%
   \begin{tabular}[#1]{@{}c@{}c@{}c@{}}#2\end{tabular}}
\newcommand{\pxayb}{\int \px^\alpha \py^{1-\alpha}}
\newcommand{\pxzayzb}{\int \pxz^\alpha \pyz^{1-\alpha}}
\begin{table*}[htbp]
\begin{center}
\small{
\begin{tabular}{|c|c|}
\hline
\textbf{Functional} & \textbf{\loos Estimator} \\ \hline
\twolinecell{\tsallis Entropy \\ $\frac{1}{\alpha-1}\left(1 - \int p^\alpha\right)$} & 
$\frac{1}{1-\alpha} +  
 \frac{1}{n}\sum_i \int \phatmi^\alpha -\frac{\alpha}{\alpha-1} \phatmi^{\alpha-1}(X_i) $ \\ \hline

\twolinecell{\renyi Entropy \\ $\frac{-1}{\alpha-1}\log \int p^\alpha$} &
  $ \frac{\alpha}{\alpha-1} + \frac{1}{n}\sum_i
  \frac{-1}{\alpha-1}\log \int \phatmi^\alpha -
   \phatmi^{\alpha-1}(X_i) + $
   \\ \hline

\twolinecell{Shannon Entropy \\ $- \int p \log p$} &
  $ -\frac{1}{n} \sum_i \log \phatmi(X_i)$ 
   \\ \hline

\twolinecell{$\ltwotwo$ Divergence\\ $  \int (p_X - p_Y)^2$} &
    $\frac{2}{n}\sum_i \pxhatmi(X_i) - \pyhat(X_i) - \int (\pxhatmi - \pyhat)^2 +
      \frac{2}{m}\sum_j \pxhat(Y_j) - \pyhatmj(Y_j) $
   \\ \hline

\twolinecell{Hellinger Divergence \\ $ 2-2\int \px^{1/2}\py^{1/2} $} &
    $2- \frac{1}{n}\sum_i \pxhatmi^{-1/2}(X_i)\pyhat^{1/2}(X_i) 
       - \frac{1}{m}\sum_j \pxhat^{1/2}(Y_j)\pyhatmj^{-1/2}(Y_j) $
 \\ \hline

\twolinecell{Chi-Squared Divergence \\ $\int \frac{(\px-\py)^2}{\px}$}&
  $ -1 + \frac{1}{n}\sum_i \frac{\pyhat^2(X_i)}{\pxhatmi^2(X_i)} +
      2\frac{1}{m}\sum_j \frac{\pyhatmj(Y_j)}{\pxhat(Y_j)} $\\ \hline

\twolinecell{$f$-Divergence \\ $\int \phi ( \frac{\px}{\py} ) \py $}&
  $ \frac{1}{n}\sum_i\phi' \left(\frac{\pxhatmi(X_i)}{\pyhat(X_i)}\right)
  + \frac{1}{m} \sum_j \left(\phi\left(\frac{\pyhatmj(Y_j)}{\pxhat(Y_j)}\right)
  - \frac{\pxhat(Y_j)}{\pyhatmj(Y_j)}\phi\left(\frac{\pxhat(Y_j)}{\pyhatmj(Y_j)}\right) \right) 
   $\\ \hline


\twolinecell{\tsallis Divergence \\ $ \frac{1}{\alpha-1}\left(
      \int \px^\alpha \py^{1-\alpha} -1 \right) $} & 
  $ \frac{1}{1-\alpha} + \frac{\alpha}{\alpha-1}\frac{1}{n}\sum_i
  \left(\frac{\pxhatmi(X_i)}{\pyhat(X_i)}\right)^{\alpha-1} -
  \frac{1}{m}\sum_j \left( \frac{\pxhat(Y_j)}{\pyhatmj(Y_j)}\right)^{\alpha} $\\ \hline


\twolinecell{KL divergence \\ $ \int p_{X}\log\frac{p_{X}}{p_{Y}} $} &
    $ 1 + \frac{1}{n}\sum_i \log \frac{\pxhatmi(X_i)}{\pyhat(X_i)} - 
      \frac{1}{m}\sum_j \frac{\pxhat(Y_j)}{\pyhatmj(Y_j)} $ \\ \hline

\twolinecell{Conditional-\tsallis divergence \\ $ \int  p_Z\frac{1}{\alpha-1}\left(\int p_{X|Z}^{\alpha}
    p_{Y|Z}^{1-\alpha} -1\right) $} &
  $ \frac{1}{1 - \alpha} + \frac{\alpha}{\alpha-1}
  \frac{1}{n}\sum_i \left(\frac{\pxzhatmi(V_i)}{\pyzhat(V_i)}\right)^{\alpha-1}
   - \frac{1}{m}\sum_{j} \left(\frac{\pxzhat(W_j)}{\pyzhatmj(W_j)}\right)^{\alpha} $
 \\ \hline

\twolinecell{Conditional-KL divergence \\ $ \int p_Z\int p_{X|Z}\log\frac{p_{X|Z}}{p_{Y|Z}} $} &
$ 1 + \frac{1}{n}\sum_i \log \frac{\pxzhatmi(V_i)}{\pyzhat(V_i)} - 
  \frac{1}{m}\sum_j \frac{\pxzhat(W_j)}{\pyzhatmj(W_j)} $ \\ \hline

%

\twolinecell{Shannon Mutual Information \\
   $ \int p_{XY}\log\frac{p_{XY}}{p_{X}p_Y} $} &
    $ \frac{1}{n} \sum_i \log \pxyhatmi(X_i, Y_i) -\log \pxhatmi(X_i) 
        -\log \pyhatmi(Y_i)  $ \\ \hline

\twolinecell{Conditional \tsallis MI \\
  $\int  p_Z \frac{1}{\alpha-1}\left(\int p_{X,Y|Z}^{\alpha}
      p_{X|Z}^{1-\alpha}p_{Y|Z}^{1-\alpha} -1\right)$} &
\fourlinecell{$ 
      \frac{1}{1-\alpha} + \frac{1}{\alpha-1}\frac{1}{n}\sum_i 
      \alpha \left(\frac{\pxyzhatmi(X_i,Y_i,Z_i)\pzhat(Z_i)}
        {\pxzhatmi(X_i,Z_i)\pyzhatmi(Y_i,Z_i)}\right)^{\alpha-1}$ \\
      {\scriptsize
          $-  (1-\alpha) \frac{1}{\alpha-1}\frac{1}{n}\sum_i\pzhat^{\alpha-2}(Z_i) \int\pxyzhatmi^\alpha(\cdot,\cdot,Z_i)
            \pxzhatmi^{1-\alpha}(\cdot,Z_i) $} \\
  { \scriptsize
    $ +\frac{1}{\alpha-1}\frac{1}{n}\sum_i
       (1-\alpha)\pxzhatmi^{-\alpha}(X_i,Z_i)\pzhat^{1-\alpha}(Z_i) 
        \int\pxyzhatmi^\alpha(X_i,\cdot,Z_i)\pyzhatmi^{1-\alpha}(\cdot,Z_i) 
    $}
       \\
    {\scriptsize
    $+ \frac{1}{\alpha-1}\frac{1}{n}\sum_i
       (1-\alpha)\pyzhatmi^{-\alpha}(Y_i,Z_i)\pzhat^{\alpha -1}(Z_i)
        \int \pxyzhatmi^\alpha(\cdot,Y_i,Z_i)\pxzhatmi^{1-\alpha}(\cdot,\cdot) 
    $} }
  \\ \hline
\end{tabular}
}
\vspace{-0.1in}
\caption{ 
Definitions of functionals and the corresponding estimators. 
Here $p_{X|Z}, \pxz$ etc. are conditional and joint distributions.
For the conditional divergences we take $V_i = (X_i, Z_{1i})$, $W_j = (Y_j,
Z_{2j})$ to be the samples from $\pxz, \pyz$ respectively.
For the mutual informations we have samples $(X_i, Y_i)\sim \pxy$ and for
the conditional versions we have $(X_i, Y_i, Z_i)\sim \pxyz$.
}
\label{tb:functionalDefns}
\end{center}
\end{table*}

\section{Analysis}
\label{sec:analysis}

Some smoothness assumptions on the densities are warranted to make estimation
tractable.
We use the \holder class, which is now standard in 
nonparametrics literature. \\

\begin{definition}[\holder Class]
Let $\Xcal \subset \RR^d$ be a compact space.
For any $r = (r_1, \dots, r_d), r_i \in \NN$, define $|r| = \sum_i r_i$ and
$D^r = \frac{\partial^{|r|}}{\partial x_1^{r_1} \dots \partial x_d^{r_d}}$. The
\holder class \textbf{$\Sigma(s, L)$} is the set of functions on $L_2(\Xcal)$ satisfying,
\[
  |D^r f(x) - D^r f(y)| \leq L \|x - y\|^{s - r}, \;\;
\]
for all $r$ s.t. $|r| \leq \floor{s}$ and for all $x, y \in \Xcal$.
\end{definition}
Moreover, define the Bounded \holder Class  $\Sigma(s, L, B', B)$ to be
$\cbr{f\in \Sigma(s,L): B' < f < B}$. 
Note that large $s$ implies higher smoothness.
Given $n$ samples $\Xn$ from a $d$-dimensional density $f$, the kernel density 
estimator (KDE) with bandwidth $h$ is $\fhat(t) = 1/(nh^d) \sum_{i=1}^n 
K\left(\frac{t-X_i}{h} \right)$.
Here $K:\RR^d \rightarrow \RR$ is a smoothing kernel \cite{tsybakov08nonparametric}.
When $f \in \Sigma(s,L)$, by
selecting $h \in \Theta(n^{\frac{-1}{2s+d}})$ 
the KDE achieves the minimax rate of $\bigOp(n^{\frac{-2s}{2s+d}})$ in mean squared
error. 
Further, if $f$ is in the bounded \holder class $\Sigma(s, L, B',B)$ one can 
truncate the KDE from below at $B'$ and from above at $B$ 
and achieve the same convergence rate~\citep{birge95estimation}. 
In our analysis, the density estimators $\fhatone, \fhatmi, \ghatone, \ghatmi$ 
are formed by either a KDE or a 
truncated KDE, and we will make use of these results. 

We will also need the following regularity condition 
on the influence
function. This is satisfied for smooth functionals including those
in Table~\ref{tb:functionalDefns}. We demonstrate this in our example in
Appendix~\ref{sec:workedExample}.
\\[\thmparaspacing]
%

\begin{assumption}
For a functional $T(f)$, the influence function $\psi$ satisfies,
\[
\EE\big[(\psi(X;f') - \psi(X;f))^2\big] 
\in \bigO( \|f-f'\|^2 ) \;\;\textrm{ as }\;\;
\|f-f'\|^2 \rightarrow 0.
\]
For a functional $T(f,g)$ of two distributions, the influence functions $\psi_f,
\psi_g$ satisfy,
\begin{align*}
&\EE_f\Big[(\psi_f(X;f', g') - \psi_f(X;f,g))^2\Big] 
\in \bigO( \|f-f'\|^2 + \|g-g'\|^2 )
 \;\;\textrm{ as }\;\;
\|f-f'\|^2, \|g-g'\|^2 \rightarrow 0. \\
&\EE_g\Big[(\psi_g(Y;f', g') - \psi_g(Y;f,g))^2\Big] 
\in \bigO( \|f-f'\|^2 + \|g-g'\|^2 )
\;\;\textrm{ as }\;\;
\|f-f'\|^2, \|g-g'\|^2 \rightarrow 0.
\end{align*}
\label{asm:infFunRegularity}
\end{assumption}
\vspace{-0.2in}
Under the above assumptions, it is known \cite{emery98probstat,robins09quadraticvm}
that the \dss estimator on a single distribution achieves the mean squared error
(MSE) $\EE[(\Tfds-T(f))^2] \in
\bigO(n^{\minfours} + n^{-1})$ and further is asymptotically normal when $s>d/2$.
We have reviewed them along with a proof in Appendix~\ref{sec:appOneDistro}. 
Note that
\citet{robins09quadraticvm} analyse $\Tfds$ in the semiparametric setting.
We present a simpler self contained analysis that directly uses the VME
and has more interpretable assumptions. 
Bounding the bias and variance 
of the \dss estimator to establish the
convergence rate follows via a straightforward conditioning argument and Cauchy
Schwarz. However, an attractive property is that the analysis is agnostic to the
density estimator used provided it achieves the correct rates.

For the \loos estimator proposed in~\eqref{eqn:looEstimator}, 
we establish the following result.
\vspace{\parathmspacing}

\begin{theorem}[\textbf{Convergence of \loos Estimator for $T(f)$}]
Let $f\in\Sigma(s,L,B,B')$ and $\psi$ satisfy Assumption~\ref{asm:infFunRegularity}. 
Then,
$\EE[(\Tf - T(f))^2]$ is
$\bigO(n^{\frac{-4s}{2s+d}})$ when $s<d/2$ and $\bigO(n^{-1})$ when $s\geq d/2$.
\label{thm:convOneLoo}
\end{theorem}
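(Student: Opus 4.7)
The plan is to split the MSE into squared bias plus variance and to control each via a Von Mises expansion around the leave-one-out density estimate, with the variance handled by a leave-\emph{two}-out comparison.

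Step 1 (VME decomposition). Apply~\eqref{eqn:functaylor} to $T(f)$ with base $\fhatmi$:
\[
T(f) = T(\fhatmi) + \int\psi(x;\fhatmi)f(x)\,d\mu + R_2^{(i)}, \qquad |R_2^{(i)}| = O(\|f-\fhatmi\|_2^2).
\]
Then each summand of~\eqref{eqn:looEstimator} satisfies $Z_i - T(f) = U_i - R_2^{(i)}$ with $U_i := \psi(X_i;\fhatmi) - \int\psi(x;\fhatmi)f(x)\,d\mu$. Since $\fhatmi$ uses none of $X_i$, independence gives $\EE[U_i\mid\fhatmi]=0$, hence $\EE U_i=0$.

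Step 2 (Bias and remainder variance). The bias is $|\EE(\Tf-T(f))| = |\EE R_2^{(1)}| = O(\EE\|f-\fhatmi\|_2^2) = O(n^{-2s/(2s+d)})$ by the standard KDE rate over $\Sigma(s,L,B',B)$, so the squared bias contributes $O(n^{-4s/(2s+d)})$. By Jensen, $\Var(\tfrac1n\sum R_2^{(i)}) \le \EE[(R_2^{(1)})^2] = O(\EE\|f-\fhatmi\|_2^4) = O(n^{-4s/(2s+d)})$, which is absorbed into the squared-bias term.

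Step 3 (Main term variance — the hard part). By exchangeability, $\Var(\tfrac1n\sum U_i) = \tfrac1n\Var(U_1) + \tfrac{n-1}{n}\Cov(U_1,U_2)$. Boundedness of the truncated KDE and of $\psi$ on $\Sigma(s,L,B',B)$ gives $\Var(U_1)=O(1)$, hence the first term is $O(1/n)$. For the covariance I use a \emph{leave-two-out} trick: let $\fhatmij$ be the KDE built from $\{X_k\}_{k\neq 1,2}$, set $\tilde U_i := \psi(X_i;\fhatmij) - \int\psi(x;\fhatmij)f(x)\,d\mu$ and $\delta_i := U_i - \tilde U_i$, and expand
\[
\EE[U_1U_2] = \EE[\tilde U_1\tilde U_2] + \EE[\tilde U_1\delta_2] + \EE[\delta_1\tilde U_2] + \EE[\delta_1\delta_2].
\]
Conditioning on $\mathcal{F}_{12} := \sigma(X_3,\ldots,X_n)$, $\fhatmij$ is fixed and $\tilde U_1,\tilde U_2$ are conditionally independent with conditional mean zero, killing the first term. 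For the cross term $\EE[\tilde U_1\delta_2]$, condition additionally on $X_1$: $\fhatmt$ is then deterministic, and the explicit centering built into $U_2$ and $\tilde U_2$ makes $\EE[\delta_2\mid X_1,\mathcal{F}_{12}] = \int(\psi(x;\fhatmt)-\psi(x;\fhatmij))f(x)\,d\mu - (\mu_2-\tilde\mu_{12}) = 0$; symmetry kills $\EE[\delta_1\tilde U_2]$ as well. The residual $\EE[\delta_1\delta_2]$ is bounded by $\EE[\delta_1^2]$ via Cauchy–Schwarz, and Assumption~\ref{asm:infFunRegularity} reduces this to $O(\EE\|\fhatmi-\fhatmij\|_2^2)$. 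Since $\fhatmi-\fhatmij$ differs by one kernel (plus a $1/(n-1)$ renormalization), $\EE\|\fhatmi-\fhatmij\|_2^2 = O(1/(n^2 h^d)) = O(n^{-(4s+d)/(2s+d)}) = o(1/n)$ at the optimal $h\asymp n^{-1/(2s+d)}$. Hence $|\Cov(U_1,U_2)|=O(1/n)$ and the full variance is $O(1/n)$.

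Combining the two regimes, $\EE[(\Tf-T(f))^2] = O(n^{-4s/(2s+d)}) + O(1/n)$, which gives $O(n^{-4s/(2s+d)})$ when $s<d/2$ and $O(n^{-1})$ when $s\ge d/2$. The main obstacle is verifying the vanishing of the cross terms $\EE[\tilde U_i\delta_j]$ in Step 3: unlike the DS estimator, leave-one-out summands share samples through the density estimate, and only the combination of (i) the built-in centering of each $U_i$ and (ii) the leave-two-out decomposition neutralizes the induced dependence strongly enough to recover the parametric contribution to the variance.
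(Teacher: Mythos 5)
Your proposal is correct, and it reaches the theorem by a genuinely different route from the paper's for the key step. The bias bound is the same in both (condition on the leave-one-out sample, apply the first-order VME, use the KDE rate), but for the variance the paper computes no covariances at all: it applies the Efron--Stein inequality to $\Tf$ viewed as a function of the $n$ samples, replaces $X_1$ by an independent copy, and bounds the resulting change term by term --- $2\|\psi\|_\infty/n$ for the summand evaluated at $X_1$, a Lipschitz-plus-kernel-substitution bound for each $|T(\fhatmi)-T(\fhatmi')|$, and Assumption~\ref{asm:infFunRegularity} together with the single-kernel form of the one-sample KDE perturbation for the $\psi(X_i;\cdot)$ differences --- before summing over coordinates. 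You instead cancel the $T(\fhatmi)$ terms exactly through the VME, reduce the variance to $\tfrac{1}{n}\VV(U_1)$ plus a single covariance by exchangeability, and kill that covariance with a leave-two-out conditioning argument (a Hoeffding/H\'ajek-projection style step), the residual again being controlled by Assumption~\ref{asm:infFunRegularity} and the fact that adding one sample changes the KDE by a single rescaled kernel. What each buys: the paper's route is shorter and needs no centering or conditional-independence bookkeeping, but it must control the $T(\fhatmi)$ differences via the Lipschitz constants of $\phi,\nu$ and it multiplies a per-coordinate bound by $n$, so the exact order of the one-sample perturbation has to be tracked carefully; your accounting is more forgiving on this point, since you only need the covariance to be $\bigO(1/n)$, and you correctly retain the $1/(n^2h^d)$ order of the squared $L_2$ distance between the leave-one-out and leave-two-out KDEs and verify it is $o(1/n)$ at the optimal bandwidth. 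Your decomposition also isolates the leading fluctuation $\tfrac{1}{n}\VV(U_1)$, which is a natural first step toward the limiting distribution of the \loos estimator that the paper leaves as an open problem. The structural ingredients (boundedness of $\psi$ and of the truncated KDE, Assumption~\ref{asm:infFunRegularity} applied with estimated densities in both arguments, and the fourth-moment KDE bound for the remainder) are invoked at the same level of rigor as in the paper's own proofs.
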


The key technical challenge in analysing the \loos estimator (when compared to the
\dss estimator) is in bounding the variance with
several correlated terms in the summation. 
The bounded difference inequality is a popular trick used in such settings,
but this requires a supremum on the influence functions which leads to
significantly worse rates. 
Instead we use the Efron-Stein inequality which provides an integrated
version of bounded differences that can recover the correct rate when coupled with
Assumption~\ref{asm:infFunRegularity}. 
Our proof is contingent on the
use of the KDE as the density estimator.
While our empirical studies indicate that $\Tf$'s limiting distribution is normal
(Fig~\ref{fig:looHellingerAN}), the proof seems challenging due to the correlation
between terms in the summation.
We conjecture that $\Tf$ is indeed asymptotically normal but for now leave it as an
open problem.

We reiterate that while the convergence rates are the same for both \dss and \loos
estimators, the data splitting degrades empirical performance of $\Tfds$. 

Now we turn our attention to functionals of two distributions.
When analysing asymptotics we will assume that 
as $n,m \rightarrow \infty$,
$n/(n+m) \rightarrow \zeta \in (0,1)$.
%
Denote $N = n+m$.
For the \dss estimator~\eqref{eqn:dsEstimatorTwoD} we generalise 
\emph{our} analysis for one
distribution to establish the theorem below.
\\[\thmparaspacing]

\begin{theorem}[\textbf{Convergence/Asymptotic Normality of \dss Estimator
for $T(f,g)$}]
Let $f,g\in\Sigma(s,L,B,B')$ and $\psif,\psig$ satisfy 
Assumption~\ref{asm:infFunRegularity}. Then,
$\;\EE[(\Tfds - T(f,g))^2]$ is
$\bigO(n^{\frac{-4s}{2s+d}} + m^{\frac{-4s}{2s+d}})$ when $s<d/2$ 
and $\bigO(n^{-1} + m^{-1})$ when $s\geq d/2$.
Further, when $s>d/2$ and when $\psif, \psig \neq \zero$, $\Tfds$ is asymptotically
normal,
  \begin{align*}
  &\sqrt{N}(\Tfds - T(f,g)) \indistribution \numberthis 
  \label{eqn:asympNormalTwoDistro} 
  \Ncal \left( 0, \frac{1}{\zeta} \VV_{f} \left[\psi_f(X; f, g)\right]
   + \frac{1}{1 - \zeta} \VV_{g} \left[\psi_g(Y; f, g)\right] \right).
  \end{align*}
\label{thm:convTwoDS}
\end{theorem}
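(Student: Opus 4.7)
The plan is to generalize the one-distribution analysis (reviewed in Appendix~\ref{sec:appOneDistro}) by invoking the two-distribution Von Mises expansion~\eqref{eqn:functaylortwoD}. Applied with $(p_1,p_2)=(\fhatone,\ghatone)$ and $(q_1,q_2)=(f,g)$, expanding and rearranging yields
\begin{equation*}
\Tfdsone - T(f,g) = A_n + B_m - R_2,
\end{equation*}
where $A_n = \frac{2}{n}\nsumsechalf \psif(X_i;\fhatone,\ghatone) - \int \psif(x;\fhatone,\ghatone) f(x)\,d\mu$, $B_m$ is defined analogously using $\psig$, and the VME remainder satisfies $R_2 \in \bigO(\|f-\fhatone\|_2^2 + \|g-\ghatone\|_2^2)$. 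The key structural observation is that, conditional on the density-estimation half $(\Xone,\Yone)$, the summands in $A_n$ and $B_m$ are i.i.d. with mean exactly the integral being subtracted, so $A_n$ and $B_m$ are conditionally mean zero and, having disjoint sample bases, conditionally independent.

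For the MSE bound I would use the tower property and the law of total variance. The bias is $\EE[R_2]$, which by the standard $L_2$ KDE rate on $\Sigma(s,L,B',B)$ is $\bigO(n^{\frac{-2s}{2s+d}} + m^{\frac{-2s}{2s+d}})$; squaring contributes the $n^{\minfours}+m^{\minfours}$ term to the MSE. For the variance, the inner piece $\VV[\Tfdsone\mid \Xone,\Yone]$ equals $\frac{2}{n}\VV_f[\psif(X;\fhatone,\ghatone)] + \frac{2}{m}\VV_g[\psig(Y;\fhatone,\ghatone)]$; Assumption~\ref{asm:infFunRegularity} combined with $L_2$-consistency of the KDE gives $\frac{2}{n}\VV_f[\psif(X;f,g)] + \frac{2}{m}\VV_g[\psig(Y;f,g)] + o(1/n+1/m)$. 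The outer variance $\VV[\EE[\Tfdsone\mid \Xone,\Yone]] = \VV[R_2]$ is at worst $\bigO(n^{\minfours}+m^{\minfours})$. Balancing the two regimes reproduces the stated rates, and Cauchy--Schwarz extends the bound from $\Tfdsone$ to $\Tfds = (\Tfdsone+\Tfdstwo)/2$.

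For asymptotic normality when $s>d/2$, the KDE rate gives $R_2 \in \bigO_p(n^{\frac{-2s}{2s+d}}+m^{\frac{-2s}{2s+d}}) = o_p(N^{-1/2})$, so only the linear statistic matters. I would condition on the density-estimation half and apply the Lindeberg--Feller CLT to $\sqrt{N}A_n$ and $\sqrt{N}B_m$ separately; conditional independence produces joint normality in the limit, with conditional variances $\frac{2}{\zeta}\VV_f[\psif(X;f,g)]$ and $\frac{2}{1-\zeta}\VV_g[\psig(Y;f,g)]$ after Assumption~\ref{asm:infFunRegularity} replaces the estimated densities by the true ones. Symmetrizing via $\Tfds$ reassembles the two halves into sample averages over all $n$ (resp. $m$) observations (each summand using the density estimate from its complementary half), which halves the per-distribution variance and yields the stated $\frac{1}{\zeta}\VV_f + \frac{1}{1-\zeta}\VV_g$ asymptotic variance in~\eqref{eqn:asympNormalTwoDistro}.

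The main obstacle is passing from the conditional CLT to the unconditional statement, since the conditioning variance is itself random. I would establish $\VV_f[\psif(X;\fhatone,\ghatone)] \stackrel{p}{\to} \VV_f[\psif(X;f,g)]$ via Assumption~\ref{asm:infFunRegularity}, and verify a conditional Lindeberg condition using the boundedness afforded by $\Sigma(s,L,B',B)$; a standard characteristic-function argument then upgrades the conditional convergence to the unconditional claim. A secondary subtlety is that the averaging samples of $\Tfdsone$ overlap with the density-estimation samples of $\Tfdstwo$, but since $(\fhatone,\ghatone),(\fhattwo,\ghattwo) \to (f,g)$ in $L_2$ this coupling is asymptotically negligible and does not affect the Gaussian limit.
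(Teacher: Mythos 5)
Your proposal is correct and, for the MSE part, follows essentially the same path as the paper: the same VME-based decomposition of $\Tfdsone - T(f,g)$ into two conditionally centered, conditionally independent sample averages plus a remainder of order $\bigO(\|f-\fhatone\|^2+\|g-\ghatone\|^2)$, the same conditioning argument with the KDE rate in $\Sigma(s,L,B',B)$ and the law of total variance, and Cauchy--Schwarz to pass from $\Tfdsone$ to $\Tfds$. The genuine difference is in the normality step: you condition on the estimation half, apply a conditional Lindeberg--Feller CLT to the two averages, and then de-condition via convergence of the conditional variances and a characteristic-function argument, whereas the paper first replaces $\psif(\cdot;\fhatone,\ghatone)$ and $\psig(\cdot;\fhatone,\ghatone)$ by $\psif(\cdot;f,g)$ and $\psig(\cdot;f,g)$ — showing the centered difference is $o_P(1)$ by Chebyshev plus Assumption~\ref{asm:infFunRegularity}, and the remainder is $o_P(N^{-1/2})$ because $s>d/2$ gives $\|\fhatone-f\|\in o_P(n^{-1/4})$ — so that $\sqrt{N}(\Tfds-T(f,g))$ reduces to an ordinary i.i.d.\ average of the \emph{true} influence functions over all $n$ (resp.\ $m$) samples, and the unconditional CLT plus Slutzky finishes. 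The paper's route buys something concrete: once the estimated influence functions are swapped for the true ones, the coupling you flag at the end — the averaging samples of $\Tfdsone$ being the estimation samples of $\Tfdstwo$ — disappears entirely, since the two preliminary estimators then contribute disjoint i.i.d.\ sums of $\psif(\cdot;f,g)$ and $\psig(\cdot;f,g)$ that recombine into a single sample mean; in your conditional-CLT framing this de-coupling (or a joint argument across $\Tfdsone$ and $\Tfdstwo$) is the one step you dismiss informally and would need to make explicit, and the cleanest way to do so is precisely the paper's replacement step, at which point the conditional CLT machinery becomes unnecessary.
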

\vspace{-0.18in}

The asymptotic normality result is useful as it allows us to construct asymptotic
confidence intervals for a functional.
Even though the asymptotic variance of the influence function is 
not known, by Slutzky's theorem
any consistent estimate of the variance gives a valid asymptotic confidence
interval. In fact, we can use an influence function based estimator for the
asymptotic variance, since it is also a
differentiable functional of the densities.
We demonstrate this in our example in Appendix~\ref{sec:workedExample}.

The condition $\psif, \psig\neq \zero$ is somewhat technical. When \emph{both}
$\psi_f$ and $\psi_g$ are zero, the first
order terms vanishes and the estimator converges very fast (at rate $1/n^2$).
However, the asymptotic behavior of the estimator is unclear.
While this degeneracy occurs only on a meagre set, it does
arise for important choices. One example is the null hypothesis $f=g$ in two-sample
testing problems.

Finally, for the \loos estimator~\eqref{eqn:looEstimatorTwoD} on two distributions
we have the following result. \\[\thmparaspacing]

\begin{theorem}[\textbf{Convergence of \loos Estimator for $T(f,g)$}]
Let $f,g\in\Sigma(s,L,B,B')$ and $\psif,\psig$ satisfy
Assumption~\ref{asm:infFunRegularity}. Then,
$\;\EE[(\Tf - T(f,g))^2]$ is
$\bigO(n^{\frac{-4s}{2s+d}} + m^{\frac{-4s}{2s+d}})$ when $s<d/2$ and 
$\bigO(n^{-1} + m^{-1})$ when $s\geq d/2$.
\label{thm:convTwoLoo}
\end{theorem}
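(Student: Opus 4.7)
The plan is to extend the proof of Theorem~\ref{thm:convOneLoo} via the bias-variance decomposition
\[
\EE\big[(\Tf - T(f,g))^2\big] \;=\; (\EE[\Tf] - T(f,g))^2 + \VV[\Tf],
\]
treating the two terms separately. The squared bias will contribute $\bigO(n^{\minfours} + m^{\minfours})$ and the variance will contribute $\bigO(n^{-1} + m^{-1})$, so that the stated rate is the maximum of these in each smoothness regime.

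For the bias, apply the two-distribution VME in~\eqref{eqn:functaylortwoD} with $(p_1, p_2) = (\fhatmi, \ghatmi)$ and $(q_1, q_2) = (f, g)$:
\begin{align*}
T(f, g) &= T(\fhatmi, \ghatmi) + \int \psif(x; \fhatmi, \ghatmi)\, f(x) \ud x + \int \psig(y; \fhatmi, \ghatmi)\, g(y) \ud y \\
&\quad + \bigO(\|f - \fhatmi\|_2^2 + \|g - \ghatmi\|_2^2).
\end{align*}
Since $\fhatmi$ is independent of $X_i$ and $\ghatmi$ is independent of $Y_i$, iterating expectations makes $\EE[\psif(X_i; \fhatmi, \ghatmi)]$ and $\EE[\psig(Y_i; \fhatmi, \ghatmi)]$ equal to the expectations of the two integrals above. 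Averaging over $i$ and invoking the standard KDE mean-squared rate $\EE\|f - \fhatmi\|_2^2 \in \bigO(n^{-2s/(2s+d)})$ for $f \in \Sigma(s, L, B', B)$ (and similarly for $g$), the bias is $\bigO(n^{-2s/(2s+d)} + m^{-2s/(2s+d)})$, whose square yields the first component of the bound.

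For the variance, apply the Efron-Stein inequality over the combined sample:
\[
\VV[\Tf] \;\leq\; \tfrac{1}{2}\sum_{i=1}^n \EE\big[(\Tf - \Tf^{(X_i)})^2\big] + \tfrac{1}{2}\sum_{j=1}^m \EE\big[(\Tf - \Tf^{(Y_j)})^2\big],
\]
where $\Tf^{(X_i)}$ denotes $\Tf$ with $X_i$ replaced by an i.i.d. copy $X_i'$, and similarly for $Y_j$. Perturbing $X_i$ changes the $i$-th summand directly via $\psif(X_i; \fhatmi, \ghatmi)$ (an $\bigO(1)$ swap damped by the $1/\max(n,m)$ prefactor) and changes every other summand $k \neq i$ through the shared KDE $\hat f_{-k}$, whose $L_2$ perturbation is $\bigO(1/(n h^{d/2}))$. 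The direct change is controlled using $L_2$-boundedness of $\psif$ under $f$. The indirect changes are handled by combining the linearization $|T(\hat f_{-k}, \hat g_{-k}) - T(\hat f_{-k}^{(X_i)}, \hat g_{-k})| \leq \|\psif(\cdot; \hat f_{-k}, \hat g_{-k})\|_2 \|\hat f_{-k} - \hat f_{-k}^{(X_i)}\|_2$ with Assumption~\ref{asm:infFunRegularity} applied to the $\psif(X_k; \cdot)$ and $\psig(Y_k; \cdot)$ evaluations, giving expected squared differences of order $\|\hat f_{-k} - \hat f_{-k}^{(X_i)}\|_2^2$. A symmetric decomposition handles the $Y_j$ perturbations, and collecting everything produces $\VV[\Tf] \in \bigO(1/n + 1/m)$.

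The main obstacle is the variance analysis: swapping a single $X_i$ simultaneously perturbs $\max(n,m) - 1$ summands through the shared density estimates, creating correlations that a naive bounded-difference argument cannot handle. Such an argument would require a pointwise supremum on $\psif$ and $\psig$, which is unavailable for many of the functionals in Table~\ref{tb:functionalDefns} and would in any case degrade the rate. The integrated form of Efron-Stein, coupled with Assumption~\ref{asm:infFunRegularity}, converts $L_2$ density stability directly into $L_2$ stability of the influence-function evaluations in expectation, and this is precisely what recovers the parametric rate. Combining the squared bias $\bigO(n^{\minfours} + m^{\minfours})$ with the variance $\bigO(1/n + 1/m)$ and taking the dominant term in each regime of $s$ yields the stated MSE.
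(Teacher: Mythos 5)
Your overall architecture---bias via the two-distribution VME with a conditioning argument, variance via the Efron--Stein inequality over the combined sample---is the same as the paper's, and the bias half is correct. The gap is quantitative and sits in the variance step. You propagate a single-point swap $X_i \to X_i'$ into the other summands through the worst-case KDE perturbation $\|\fhatmi - \fhatmi'\|_2 \in \bigO(1/(nh^{d/2}))$, both in the linearization $|T(\fhatmi,\ghatmi)-T(\fhatmi',\ghatmi)| \le \|\psif\|_2\,\|\fhatmi-\fhatmi'\|_2$ and in the application of Assumption~\ref{asm:infFunRegularity}. Summing $\max(n,m)-1$ such contributions against the $1/\max(n,m)$ prefactor leaves a per-swap bound $\EE[(\Tf-\Tf')^2] \in \bigO(1/(n^2h^d))$, and Efron--Stein then gives only $\VV[\Tf] \in \bigO(1/(nh^d)) = \bigO(n^{-2s/(2s+d)})$ at the bandwidth $h \asymp n^{-1/(2s+d)}$. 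That is never $\bigO(1/n)$, and it also exceeds $n^{-4s/(2s+d)}$ when $s<d/2$, so the stated MSE does not follow from the bounds you give: the bandwidth factor has to be eliminated, not merely carried along.

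The paper's proof removes the $h^{-d}$ in two places. First, the $T$-differences are bounded pointwise via the Lipschitz constants of $\phi$ and $\nu$ and the $L_1$ norm of the kernel perturbation, $|T(\fhatmi,\ghatmi)-T(\fhatmi',\ghatmi)| \le L_\phi L_\nu \int |\fhatmi - \fhatmi'|$, where the substitution $(X_1-u)/h = z$ cancels the $1/h^d$ and yields an $\bigO(1/n)$ bound with no bandwidth dependence. Second, when the square of $\sum_{k\neq i}|\psif(X_k;\fhatmi,\ghatmi)-\psif(X_k;\fhatmi',\ghatmi)|$ is expanded, the roughly $n^2$ cross terms are bounded by taking the expectation over the two independent evaluation points against their bounded densities, so each kernel factor integrates to $\bigO(h^d)$ and the $1/h^{2d}$ cancels ``twice''; each cross term is then $\bigO(1/n^2)$ and the double sum is $\bigO(1)$ before the $1/n^2$ prefactor. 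These two devices, combined with Assumption~\ref{asm:infFunRegularity}, give $\EE[(\Tf-\Tf')^2] \in \bigO(1/n^2)$ and hence $\VV[\Tf] \in \bigO(1/n)$. Finally, your ``symmetric'' treatment of the $Y_j$ swaps overlooks that the estimator in~\eqref{eqn:looEstimatorTwoD} cycles the smaller sample when $n \neq m$, so a single $Y_j$ appears in up to $\lceil n/m \rceil$ summands; the paper tracks this repetition, obtains $\VV[\Tf] \in \bigO(1/n + n^4/m^5)$, and concludes $\bigO(1/n+1/m)$ only because $n$ and $m$ are taken to be of the same order. Your argument needs both bandwidth-cancellation steps and this accounting to establish the theorem as stated.
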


For many functionals, a H\"olderian assumption ($\Sigma(s,L)$) alone is 
sufficient to guarantee
the rates in Theorems~\ref{thm:convOneLoo},\ref{thm:convTwoDS} 
and~\ref{thm:convTwoLoo}. However, for
some functionals (such as the $\alpha$-divergences) we require
$\fhat, \ghat, f, g$ to be bounded above and below.
Existing results~\citep{krishnamurthy14renyi,birge95estimation} demonstrate 
that estimating such quantities is difficult without this assumption.

Now we turn our attention to the question of statistical difficulty. Via lower
bounds given by~\citet{birge95estimation} and~\citet{laurent1996efficient} 
we know that the \dss and \loos estimators
are minimax optimal when $s>d/2$ for functionals of one distribution. 
In the following theorem, we present a lower bound for estimating functionals of
two distributions. 
\\[\thmparaspacing]

\begin{theorem}[\textbf{Lower Bound for $T(f,g)$}]
Let $f,g \in \Sigma(s,L)$ and $\That$ be any estimator for $T(f,g)$.
Define $\tau = \min\{8s/(4s+d), 1\}$. Then there exists a strictly positive 
constant $c$ such
that,
\[
\liminf_{n\rightarrow \infty}\; \inf_{\That} \; \sup_{f,g \in \Sigma(s,L)} 
\EE\big[ (\That - T(f,g))^2 \big] \geq c \left( n^{-\tau} + m^{-\tau} \right).
\]
\label{thm:lowerbound}
\end{theorem}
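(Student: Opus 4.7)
The plan is to establish the two summands $n^{-\tau}$ and $m^{-\tau}$ separately by two reductions to the one-distribution lower bounds of \citet{birge95estimation,laurent1996efficient} already invoked in the excerpt, and to combine them by symmetry. Fix an arbitrary smooth reference $g_0\in\Sigma(s,L)$ bounded away from $0$, and consider the sub-family $\{(f,g_0):f\in\Sigma(s,L)\}\subset\Sigma(s,L)^2$. The minimax risk of any estimator $\hat T$ over this sub-family is a lower bound on its worst-case risk over the full class. Because $Y_{1:m}$ is i.i.d.\ from the \emph{fixed} $g_0$, it is independent of $f$ and so carries no information about the induced functional $T_{g_0}(f):=T(f,g_0)=\phi\!\left(\int\nu(f(x),g_0(x))\,d\mu(x)\right)$. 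Hence the restricted problem is an ordinary one-distribution estimation problem for $T_{g_0}$, which is of the form $\phi\!\left(\int\tilde\nu(f)\,d\mu\right)$ treated by Birg\'e--Massart and Laurent (the only new wrinkle being an explicit $x$-dependence in the integrand introduced by the fixed $g_0$, which is benign). Their bound gives $\inf_{\hat T}\sup_{f\in\Sigma(s,L)}\EE[(\hat T-T_{g_0}(f))^2]\geq c\,n^{-\tau}$. Swapping the roles of $f$ and $g$ yields the $m^{-\tau}$ term, and adding the two delivers the result.

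For completeness I would sketch the classical constructions applied to $T_{g_0}$. In the parametric regime $s\geq d/4$ (so $\tau=1$), Le Cam's two-point method is used with $f_1=f_0+c\,n^{-1/2}h$ for a smooth compactly supported bump $h$ satisfying $\int h\,d\mu=0$. The VME~\eqref{eqn:functaylor} yields $|T_{g_0}(f_1)-T_{g_0}(f_0)|\asymp n^{-1/2}$ whenever the influence function of $T_{g_0}$ at $f_0$ is non-trivial, while $\mathrm{KL}(f_0^n,f_1^n)=O(1)$ by a standard second-order expansion. In the nonparametric regime $s<d/4$ (so $\tau=8s/(4s+d)$), a mixture-vs-point chi-squared argument is used over the hypercube $\{f_\omega=f_0+\sum_{j=1}^k\omega_j h_j:\omega\in\{\pm1\}^k\}$, where the $h_j$ are disjoint translates of a fixed bump of width $\asymp n^{-1/(4s+d)}$ and $L^\infty$-norm $\asymp h^s$, with $k\asymp h^{-d}$. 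The first-order term in the VME cancels under the mixture's $\omega$-average (because $\EE_\omega\omega_j=0$), so the separation arises from the second-order term, which averages to $\Theta(k\,h^{d+2s})=\Theta(n^{-2s/(4s+d)})$; a standard calculation bounds the chi-squared distance of the mixture marginal to $f_0^n$ by $O(1)$.

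The main obstacle is the non-degeneracy of $T$ at the reference: the argument requires that for some $(f_0,g_0)\in\Sigma(s,L)^2$ the first functional derivative of $T_{g_0}$ (for the Le Cam step) and the second (for the mixture step) do not vanish. Equivalently, $\phi'\!\left(\int\nu(f_0,g_0)\right)$ must be non-zero together with an appropriate partial derivative of $\nu$ in its first argument; the analogous statement is needed after swapping roles. Every non-trivial functional in Table~\ref{tb:functionalDefns} satisfies this, and the constant $c$ absorbs the dependence on the chosen reference. As already flagged after Theorem~\ref{thm:convTwoDS}, on degenerate submanifolds like $\{f=g\}$ the rate can be faster, so the lower bound is only a ``generic'' one. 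A minor technicality, for divergence-type functionals that require densities bounded below, is confining the construction to the bounded H\"older class $\Sigma(s,L,B',B)$; this is standard and amounts to choosing a small enough amplitude for $h$ and $h_j$.
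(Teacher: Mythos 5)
Your overall route --- freeze $g=g_0$, note that the $Y$-sample is then ancillary so the restricted problem is one-distribution estimation of $f\mapsto T(f,g_0)$ at rate $n^{-\tau}$, swap the roles of $f$ and $g$, and combine via $\max(a,b)\ge (a+b)/2$ --- is legitimate and genuinely different from the paper's proof, which perturbs $f$ and $g$ \emph{simultaneously} and controls the null-vs-mixture distance through tensorization of the Hellinger affinity (Theorem~\ref{thm:lecam}, Propositions~\ref{prop:hellinger} and~\ref{prop:construction}). But the reduction does not let you invoke \citet{birge95estimation} as a black box: their construction is for integral functionals of a single (univariate) density, whereas here you need $d$-dimensional bins and an integrand $\nu(f(x),g_0(x))$ with explicit $x$-dependence. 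So the hypercube sketch you give ``for completeness'' is actually carrying the load, and it has two genuine problems.

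First, the mechanism you use to dispose of the first-order term is wrong. Le Cam's method needs $T_{g_0}(f_\omega)\ge T_{g_0}(f_0)+2\beta$ for \emph{every} $\omega\in\{\pm 1\}^k$, not on average; $\EE_\omega[\omega_j]=0$ only helps when bounding the distance between the mixture and the null, not the functional separation. With generic zero-mean bumps the first-order contribution $\sum_j\omega_j\int\psi_f(\cdot;f_0,g_0)h_j$ can be of order $h^s$ for unfavorable sign patterns, which swamps the second-order separation of order $h^{2s}$. You must force $\int\psi_f(\cdot;f_0,g_0)\,h_j=0$ for each $j$ --- either by explicit orthogonalization, which is exactly what Proposition~\ref{prop:construction} constructs (the paper does it jointly for $\psi_f$ and $\psi_g$), or by choosing $f_0,g_0$ locally constant on the bins so that zero-mean bumps are automatically orthogonal to the influence function. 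Second, your calibration does not close: with width $h\asymp n^{-1/(4s+d)}$ and amplitude $\asymp h^s$, the $\chi^2$ (or Hellinger) distance between the mixture and $f_0^{\otimes n}$ scales like $n^2h^{4s+d}\asymp n$, not $O(1)$; and even taking your numbers at face value, the separation $n^{-2s/(4s+d)}$ squares to $n^{-4s/(4s+d)}$, not the claimed $n^{-8s/(4s+d)}$. The correct choice, as in the paper ($\ell_1=n^{2d/(4s+d)}$ bins), is width $\asymp n^{-2/(4s+d)}$ and amplitude $\asymp n^{-2s/(4s+d)}$, giving separation $\beta\asymp n^{-4s/(4s+d)}$ and hence the $n^{-\tau}$ term. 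With these two repairs (plus the non-degeneracy and bounded-class bookkeeping you already note), your reduction-and-swap argument does deliver Theorem~\ref{thm:lowerbound}.
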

\vspace{\thmparaspacing}
Our proof, given in Appendix~\ref{sec:appLowerBound}, is based on LeCam's 
method \cite{tsybakov08nonparametric} and
generalises the analysis of \citet{birge95estimation} for functionals 
of one distribution.
This establishes minimax
optimality of the \ds/\loos estimators for functionals of two distributions when
$s\geq d/2$. 
However, when $s<d/2$ there is a gap between our technique and the lower bound and
it is natural to ask if it is possible
to improve on our rates in this regime. 
A series of work \citep{birge95estimation, laurent1996efficient, 
kerkyacharian1996estimating} shows that, for integral 
functionals of one distribution, one can achieve the $n^{-1}$ rate when $s > d/4$ 
by estimating the second order term in the functional Taylor expansion. 
This second order correction was also done for polynomial functionals of two
distributions with similar statistical gains \citep{krishnamurthy14renyi}.
While we believe this is possible here, these estimators are conceptually
complicated and computationally expensive -- requiring
$O(n^3+m^3)$ effort when compared to the $O(n^2+m^2)$ effort for our estimator.
The first order estimator has a favorable
balance between statistical and computational efficiency.
Further, not much is known about the limiting distribution of
second order estimators.


\section{Comparison with Other Approaches}
\label{sec:comparison}
Estimation of statistical functionals under nonparametric assumptions
has received considerable attention over the last few decades.  A
large body of work has focused on estimating the Shannon entropy--
\citet{beirlant1997nonparametric} gives a nice review of results and
techniques.  More recent work in the single-distribution setting
includes estimation of R\'{e}nyi and Tsallis
entropies~\citep{leonenko2010statistical,pal2010renyi}.
There are also several papers extending some of these techniques to the
divergence estimation~\citep{krishnamurthy14renyi,poczos2011estimation,wang2009divergence,kallberg2012estimation,perez2008kullback}.

Many of the existing methods can be categorized as plug-in methods: they are based 
on estimating the densities either via a KDE or using $k$-Nearest Neighbors (\knn)
 and evaluating the functional on these estimates.
Plug-in methods are conceptually simple but unfortunately suffer several drawbacks.
First, they  typically have worse convergence rate than our approach, 
achieving the parametric rate only when $s \ge d$ as opposed to 
$s \ge d/2$~\citep{liu2012exponential,singh14plugin}.
Secondly, using either the KDE or \knn,
obtaining the best rates for plug-in methods requires undersmoothing the 
density estimate and we are not aware for principled approaches for hyperparameter 
tuning here.
In contrast, the bandwidth used in our estimators is the optimal bandwidth for 
density estimation, so a number of approaches such as cross validation are available.
This is convenient for a practitioner as our method \emph{does not require tuning hyper
parameters}.
Secondly, plugin methods based on the KDE always require computationally 
burdensome numeric
integration. In our approach, numeric integration can be avoided for many functionals 
of interest (See Table~\ref{tb:functionalDefns}).

There is also another line of work on estimating $f$-Divergences.
\citet{nguyen2010estimating} estimate $f$-divergences by solving a
convex program and analyse the technique when the likelihood ratio of the densities 
is in an RKHS. Comparing the theoretical results is not straightforward since
it is not clear how to port their assumptions to our setting.
Further, the size of the convex program increases with the sample size
which is problematic for large samples.  \citet{moon14fdivergence} use
a weighted ensemble estimator for $f$-divergences. They establish
asymptotically normality and the parametric convergence
rates only when $s \ge d$, which is a stronger smoothness assumption
than is required by our technique.  Both these works only consider
$f$-divergences.  Our method has wider applicability and includes
$f$-divergences as a special case.

\vspace{-0.05in}
\section{Experiments}
\label{sec:experiments}
\vspace{-0.05in}


We compare the estimators derived using our methods on a series of synthetic
examples in $1-4$ dimensions. We compare against the methods in
\cite{stowell2009fast,goria2005new,noughabi2013entropy,miller2003new,ramirez2009entropy,
perez2008kullback,poczos12divergence,poczos2011estimation}. Software for the
estimators is obtained either directly from the papers or from~\citet{szabo14ite}.
For the \ds/\loos estimators, we estimate the density
via a KDE with the smoothing kernels constructed using
Legendre polynomials \citep{tsybakov08nonparametric}. 
In both cases and for 
the plug in estimator we choose the bandwidth by
performing $5$-fold cross validation. The integration for the plug in estimator
is approximated numerically.

We test the estimators on a series of synthetic datasets in $1-4$ dimension.
The specifics of the densities used in the examples and methods compared to
are given  in Appendix~\ref{sec:appExperiments}.
The results are shown in Figures~\ref{fig:toyOne} and~\ref{fig:toyTwo}. 
We make the
following observations. In most cases the \loos estimator performs best. The \dss
estimator approaches the \loos estimator when there are many samples but is generally 
inferior to the \loos estimator with few samples. This, as we have explained before is
because data splitting does not make efficient use of the data.
The \knns estimator for divergences
\cite{poczos12divergence} requires choosing a $k$. For this estimator,
 we used the default setting for $k$ given in the software.
As performance is sensitive to the choice of $k$, it performs well in some cases but
poorly in other cases.
We reiterate that our estimators do not require any setting of hyperparameters.

Next, we present some results on asymptotic normality. We test the \dss and
\loos estimators on a $4$-dimensional Hellinger divergence estimation problem. We use
$4000$ samples for estimation. We repeat this experiment $200$ times and compare the
empiritical asymptotic distribution (i.e. the $\sqrt{4000}(\widehat{T} -
T(f,g))/\widehat{S}$ values where $\widehat{S}$ is the estimated asymptotic
variance) to a $\Ncal(0,1)$ 
distribution on a QQ plot. The results in Figure~\ref{fig:toyTwo} suggest that both
estimators are asymptotically  normal. 

\textbf{Image clustering: } We demonstrate  the use of our nonparametric divergence
estimators in an image clustering task on the ETH-80 datset.
Using our Hellinger divergence estimator we achieved an accuracy of $92.47\%$ 
whereas a naive spectral clustering approach achieved only $70.18\%$.
When we used a $k$-NN estimator for the Hellinger
divergence~\cite{poczos12divergence} we achieved $90.04\%$
which attests to the superiority of our method.
Since this is not the main focus of this work we defer this to
Appendix~\ref{sec:appExperiments}.

\insertFigToyTwo

\section{Conclusion}
\label{sec:conclusion}

We generalise 
existing results in Von Mises estimation by proposing an empirically superior 
\loos technique for estimating functionals and extending the framework
to functionals of two distributions.
We also prove a lower bound for the latter setting. 
We demonstrate the practical utility of our technique via
comparisons against other alternatives and an image clustering
application. An open problem arising out
of our work is to derive the limiting distribution of the \loos estimator.

{\small
\bibliography{./kky}
\bibliographystyle{unsrtnat}
}
\vfill

\pagebreak
\onecolumn
\setboolean{istwocolumn}{false}
\appendix
\section*{\Large Appendix}

\section{Auxiliary  Results}
\label{sec:appAncillaryResults}

\begin{lemma}[VME and Functional Taylor Expansion] 
\label{lem:vmeTaylorEqOneDistro}
Let $P, Q$ have densities $p, q$ and let $T(P) = \phi(\int \nu(p))$.
Then the first order VME of $T(Q)$ around $P$ reduces to a functional Taylor
expansion around $p$:
\begin{equation}
T(Q) = T(P) + T'(Q-P;P) + R_2 = T(p) +
   \phi'\left( \int \nu( p ) \right)\int \nu'(p) (q-p) + R_2
\end{equation}
\begin{proof}
It is 
sufficient to show that the first order terms are equal.
\begin{align*}
T'(Q-P;P) &= \partialfracat{t}{T( (1-t)P + tQ)}{0} 
  = \partialfrac{t}{} \phi \left(\int \nu( (1-t)p + tq) \right)\big|_{t=0}  \\
  &= \phi'\left( \int \nu( (1-t)p + tq) \right) \int \nu'((1-t)p + tq) (q-p)
      \big|_{t=0} \\
  &= \phi'\left( \int \nu(p) \right)\int \nu'(p) (q-p)
\end{align*}
\end{proof}
\end{lemma}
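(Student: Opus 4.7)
The plan is to show that the two right-hand sides of the asserted identity agree by matching the first-order terms, since $T(P)$ and the remainder $R_2$ coincide on both sides by definition. Concretely, I need to establish the identity
$T'(Q-P;P) = \phi'\bigl(\int \nu(p)\,d\mu\bigr)\int \nu'(p)(q-p)\,d\mu$,
after which the lemma follows by rewriting the VME with this expression substituted for the first-order term.

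First, I would unfold the Gâteaux derivative as $T'(Q-P;P) = \partial_t T(P + t(Q-P))\big|_{t=0}$. Since $P$ and $Q$ are absolutely continuous with respect to $\mu$, the perturbed measure $P + t(Q-P)$ has Radon--Nikodym derivative $p_t := (1-t)p + tq$ with respect to $\mu$, so by the definition of $T$ we have $T(P+t(Q-P)) = \phi\bigl(\int \nu(p_t)\,d\mu\bigr)$.

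Second, I would apply the chain rule in two layers. The outer derivative brings out $\phi'\bigl(\int \nu(p_t)\,d\mu\bigr)$, and the inner derivative --- after interchanging $\partial_t$ with the integral --- yields $\int \nu'(p_t)(q-p)\,d\mu$, using $\partial_t p_t = q-p$ pointwise. Evaluating at $t=0$ produces exactly the claimed linear term.

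The only technical point is justifying the exchange of $\partial_t$ and $\int d\mu$. I would invoke dominated convergence: $\Xcal$ is compact, $\nu$ is twice continuously differentiable by assumption, and both $p$ and $q$ are bounded densities on $\Xcal$, so $\nu'(p_t)$ is uniformly bounded for $t$ in a neighborhood of $0$ and hence $|\nu'(p_t)(q-p)|$ has an integrable majorant. I do not foresee any deep obstacle here; the lemma is really an identification of two equivalent notations for the first-order term, rather than an assertion about the remainder, so no estimate on $R_2$ is needed.
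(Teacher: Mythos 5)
Your proposal is correct and follows essentially the same route as the paper's own proof: unfold the G\^ateaux derivative along the path $(1-t)P + tQ$, identify the perturbed density $(1-t)p+tq$, apply the chain rule, and evaluate at $t=0$. The only difference is that you explicitly justify interchanging $\partial_t$ with the integral via dominated convergence, a step the paper performs silently; this is a welcome but minor addition.
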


\begin{lemma}[VME and Functional Taylor Expansion - Two Distributions]
\label{lem:vmeTaylorEqTwoDistro}
Let $P_1, P_2, Q_1, Q_2$ be distributions with densities $p_1, p_2, q_1, q_2$.
Let $T(P_1, P_2) = \int \nu(p_1, p_2)$. Then,
\begin{align*}
T(Q_1, Q_2) &= T(P_1, P_2) + T'_1(Q_1-P_1; P_1, P_2) + T_2'(Q_2-P_2; P_1, P_2) +
              R_2  \numberthis \\
  &= T(P_1, P_2) + 
      \phi'\left(\int \nu(p) \right) \Big(
         \int \partialfrac{p_1(x)}{\nu(p_1(x), p_2(x))} (q_1 - p_1) \ud x + \\
  &\hspace{0.6in}
    \int \partialfrac{p_2(x)}{\nu(p_1(x), p_2(x))} (q_2 - p_2) \ud x \Big) + R_2
\end{align*}
\begin{proof}
Is similar to Lemma~\ref{lem:vmeTaylorEqOneDistro}.
\\[\thmparaspacing]
\end{proof}
\end{lemma}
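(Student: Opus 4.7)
The plan is to mimic the proof of Lemma~\ref{lem:vmeTaylorEqOneDistro} almost verbatim, since the second--distribution case decomposes into two independent single--distribution computations. Note that by the defining equation for the remainder, it suffices to prove that the first order VME term equals the first order functional Taylor term, i.e.\ that
\[
T_1'(Q_1 - P_1; P_1, P_2) + T_2'(Q_2 - P_2; P_1, P_2)
  \;=\; \phi'\!\left(\int\nu(p_1,p_2)\right)
      \int \Big(\partial_{p_1}\nu\,(q_1-p_1) + \partial_{p_2}\nu\,(q_2-p_2)\Big) \ud x ;
\]
the $R_2$'s on the two sides are then defined to be whatever is left over and are automatically equal.

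First, I would compute $T_1'(Q_1-P_1;P_1,P_2)$ by plugging the perturbation $P_1 + t(Q_1 - P_1)$ into the definition~\eqref{eqn:defInfFun}. Applying the chain rule (as in Lemma~\ref{lem:vmeTaylorEqOneDistro}) gives
\begin{align*}
T_1'(Q_1 - P_1; P_1, P_2)
 &= \partialfracat{t}{\phi\!\left(\int \nu\big((1-t)p_1 + tq_1,\, p_2\big)\ud x\right)}{0} \\
 &= \phi'\!\left(\int \nu(p_1,p_2)\right)
    \int \partialfracat{t}{\nu\big((1-t)p_1 + tq_1,\,p_2\big)}{0} \ud x \\
 &= \phi'\!\left(\int \nu(p_1,p_2)\right)
    \int \partialfrac{p_1(x)}{\nu(p_1(x),p_2(x))} (q_1(x) - p_1(x))\ud x .
\end{align*}
The interchange of derivative and integral is justified by the Lipschitz/twice differentiability assumption on $\nu$ and the compactness of $\Xcal$, which provides an integrable dominating function on a neighbourhood of $t=0$.

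Second, the computation for $T_2'(Q_2 - P_2; P_1, P_2)$ is exactly symmetric: perturb only the second argument, hold $p_1$ fixed, and apply the same chain rule to obtain the analogous expression with $\partial_{p_2}\nu\,(q_2-p_2)$. Summing the two displays yields the claimed first order formula, and hence the lemma.

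The only potential obstacle is verifying the differentiation--under--the--integral step, but this is standard given the smoothness of $\phi$ and $\nu$ that is already assumed in Section~\ref{sec:prelims}; no new regularity beyond that of Lemma~\ref{lem:vmeTaylorEqOneDistro} is needed.
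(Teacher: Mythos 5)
Your proposal is correct and is essentially the proof the paper intends: the paper's own proof is just the remark ``similar to Lemma~\ref{lem:vmeTaylorEqOneDistro},'' and you carry out exactly that chain-rule computation of the \gateaux derivative, once per argument, with the remainder defined as whatever is left over. The only (harmless) wrinkle is that the lemma statement writes $T(P_1,P_2)=\int\nu(p_1,p_2)$ while the displayed expansion includes a $\phi'$ factor; you correctly treat the functional as $\phi\left(\int\nu(p_1,p_2)\right)$, consistent with the rest of the paper.
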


\begin{lemma} Let $f, g$ be two densities bounded above and below  on
a compact space. Then for all $a, b$
\[
\| f^a - g^a \|_b \in O(\|f - g\|_b)
\]
\begin{proof}
Follows from the expansion,
\begin{equation*}
\int |f^a - g^a|^b = \int |g^a(x) + a(f(x)-g(x))g_*^{a-1}(x) - g^a(x)|^b \leq
a^b \sup |g_*^{b(a-1)}(x)| \int |f-g|^b.
\end{equation*}
Here $g_*(x)$ takes an
intermediate value between $f(x)$ and $g(x)$. In the second step we have used
the boundedness of $f$, $g$ to bound $f_*$.
\label{lem:densitypowers}
\end{proof}
\end{lemma}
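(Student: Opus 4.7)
The plan is a one-line pointwise argument followed by integration. For each fixed $x$, I will apply the mean value theorem to the real-valued function $t\mapsto t^{a}$ on the interval between $g(x)$ and $f(x)$. This gives
\[
f(x)^{a} - g(x)^{a} = a\bigl(f(x)-g(x)\bigr)\, g_{*}(x)^{a-1},
\]
where $g_{*}(x)$ lies between $f(x)$ and $g(x)$. Since $f,g$ are bounded above and below by positive constants on a compact domain, $g_{*}(x)$ inherits the same two-sided bounds, so $|g_{*}(x)^{a-1}|$ is uniformly bounded by a finite constant $C=C(a,B,B')$, regardless of the sign of $a-1$.

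Having done that, I will raise both sides to the $b$-th power (in absolute value), integrate, and pull the uniform bound outside the integral:
\[
\int \bigl|f^{a}-g^{a}\bigr|^{b}\,d\mu
= \int \bigl|a\,(f-g)\,g_{*}^{a-1}\bigr|^{b}\,d\mu
\leq a^{b}\,C^{b}\int |f-g|^{b}\,d\mu.
\]
Taking $b$-th roots yields $\|f^{a}-g^{a}\|_{b}\leq a\,C\,\|f-g\|_{b}$, which is exactly the $O(\|f-g\|_{b})$ bound claimed.

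There is essentially no obstacle: the only thing to verify carefully is that $g_{*}(x)^{a-1}$ is uniformly bounded, and this follows immediately from the assumption $B' < f,g < B$ combined with compactness. One minor subtlety is that the MVT is applied separately for each $x$, so $g_{*}$ is a measurable function of $x$ rather than a constant; but since we only need a uniform upper bound on $|g_{*}^{a-1}|$, measurability is not needed for the final inequality, and the argument goes through as written in the paper's sketch.
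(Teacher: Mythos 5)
Your proposal is correct and follows essentially the same route as the paper: a pointwise mean value theorem expansion $f^{a}-g^{a}=a(f-g)g_{*}^{a-1}$, a uniform bound on $g_{*}^{a-1}$ from the two-sided density bounds, and integration of the $b$-th power. The only difference is presentational --- you spell out the MVT step and the $b$-th root that the paper leaves implicit.
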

Finally, we will make use of the Efron Stein inequality stated below in our analysis.
\\[\thmparaspacing]

\begin{lemma}[Efron-Stein Inequality]
Let $X_1, \dots, X_n, X'_1, \dots, X'_n$ be independent random variables where $X_i,
X'_i \in \Xcal_i$.
Let $Z = f(X_1, \dots, X_n)$ and $\Zii{i} = f(X_1, \dots, X'_i, \dots, X_n)$ where
$f:\Xcal_1\times\dots\times\Xcal_n \rightarrow \RR$. Then,
\[
\VV(Z) \;\leq\; \frac{1}{2} \;\EE\left[ \sum_{i=1}^n (Z - \Zii{i})^2 \right]
\]
\label{lem:efronstein}
\end{lemma}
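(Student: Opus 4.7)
The plan is to combine the Doob martingale decomposition of $Z-\EE[Z]$ along the natural filtration generated by the $X_i$ with a conditional Jensen inequality, and then recover the factor $\tfrac12$ via the iid-copy identity $\VV(Y)=\tfrac12\EE[(Y-Y')^2]$ applied conditionally on the other coordinates. The whole argument is a chain of three standard maneuvers, and no properties of $f$ beyond square-integrability of $Z$ are needed.

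First I would set $\mathcal{F}_i=\sigma(X_1,\dots,X_i)$ with $\mathcal{F}_0$ trivial, and introduce the martingale differences $\Delta_i = \EE[Z\mid\mathcal{F}_i] - \EE[Z\mid\mathcal{F}_{i-1}]$. Since $\{\Delta_i\}$ is a martingale difference sequence its elements are pairwise orthogonal in $L^2$, and they telescope to $Z-\EE[Z]$, so one immediately obtains $\VV(Z)=\sum_{i=1}^n \EE[\Delta_i^2]$. This step reduces the problem to bounding a single $\EE[\Delta_i^2]$ at a time.

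Next I would introduce $\EE^{(i)}$ for conditional expectation given $\{X_j: j\neq i\}$ and rewrite $\Delta_i = \EE[\,Z - \EE^{(i)}[Z]\mid\mathcal{F}_i\,]$. The identity underlying this rewrite, namely $\EE[\EE^{(i)}[Z]\mid\mathcal{F}_i]=\EE[Z\mid\mathcal{F}_{i-1}]$, follows from the independence of $X_i$ from the remaining coordinates combined with the tower property: integrating $\EE^{(i)}[Z]$ over $X_{i+1},\dots,X_n$ given $\mathcal{F}_i$ is the same as integrating $Z$ over $X_i,X_{i+1},\dots,X_n$ given $\mathcal{F}_{i-1}$. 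Conditional Jensen then gives $\EE[\Delta_i^2] \le \EE[(Z - \EE^{(i)}[Z])^2] = \EE[\VV^{(i)}(Z)]$, where $\VV^{(i)}$ denotes conditional variance given all coordinates other than $X_i$.

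Finally I would invoke the iid-copy variance identity: conditional on $\{X_j:j\neq i\}$, the variables $X_i$ and $X'_i$ are iid, so $Z$ and $\Zii{i}$ are conditionally iid copies of the same random variable, yielding $\VV^{(i)}(Z) = \tfrac12\,\EE^{(i)}[(Z - \Zii{i})^2]$. Summing over $i$ and taking outer expectations collapses everything to the claimed inequality $\VV(Z) \le \tfrac12\,\EE\bigl[\sum_i (Z-\Zii{i})^2\bigr]$. The only step that really requires care is the identity $\EE[\EE^{(i)}[Z]\mid\mathcal{F}_i]=\EE[Z\mid\mathcal{F}_{i-1}]$, a routine Fubini/independence manipulation where one must track exactly which coordinates are being averaged out; the martingale orthogonality, conditional Jensen, and the iid-copy identity then plug in with no further work.
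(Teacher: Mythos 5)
Your argument is correct and complete: the Doob martingale decomposition with orthogonality of the increments, the identity $\EE[\EE^{(i)}[Z]\mid\mathcal{F}_i]=\EE[Z\mid\mathcal{F}_{i-1}]$ (valid by independence and Fubini), conditional Jensen, and the iid-copy identity $\VV(Y)=\tfrac12\EE[(Y-Y')^2]$ applied conditionally assemble into the standard proof of Efron--Stein. Note that the paper itself states this lemma without proof, citing it as a classical tool, so there is no in-paper argument to compare against; your write-up supplies exactly the textbook derivation one would expect.
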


\section{Review: \dss Estimator on a Single Distribution}
\label{sec:appOneDistro}

This section is intended to be a review of the data split estimator used in
\cite{robins09quadraticvm}.  The estimator was originally analysed in the
semiparametric setting. However, in order to be self contained we provide an
h analysis that directly uses the Von Mises Expansion.
We state our main result below. \\

\begin{theorem}
Suppose $f\in\Sigma(s,L,B,B')$ and $\psi$ satisfies
Assumption~\ref{asm:infFunRegularity}. Then,
$\;\EE[(\Tfds - T(f))^2]$ is
$\bigO(n^{\frac{-4s}{2s+d}})$ when $s<d/2$ and $\bigO(n^{-1})$ when $s>d/2$.
Further, when $s>d/2$ and when $\psif, \psig \neq \zero$, $\Tfds$ is asymptotically
normal.
  \begin{align*}
  &\sqrt{n}(\Tfds - T(f,g)) \indistribution \numberthis 
  \label{eqn:asympNormalTwoDistro} 
  \Ncal \left( 0, \frac{1}{\zeta} \VV_{f} \left[\psi_f(X; f, g)\right]
   + \frac{1}{1 - \zeta} \VV_{g} \left[\psi_g(Y; f, g)\right] \right)
  \end{align*}
\label{thm:convOneDS}
\end{theorem}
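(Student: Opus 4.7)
The plan is to decompose $\Tfdsone$ via the Von Mises expansion \eqref{eqn:functaylor} and exploit the independence built into data splitting. Write $\Xone, \Xtwo$ for the two halves used for density estimation and averaging respectively. Applying~\eqref{eqn:functaylor} with $p=\fhatone$ and $q=f$:
\[
T(f) - T(\fhatone) = \int \psi(x;\fhatone)f(x)\ud\mu + R_2(\fhatone,f), \qquad R_2(\fhatone,f) = O\big(\|f-\fhatone\|_2^2\big).
\]
Substituting this into~\eqref{eqn:dsEstimator} gives the key decomposition
\[
\Tfdsone - T(f) \;=\; \frac{2}{n}\nsumsechalf\bigl[\psi(X_i;\fhatone) - \EE[\psi(X;\fhatone)\,|\,\fhatone]\bigr] \;-\; R_2(\fhatone,f).
\]
Conditionally on $\Xone$, the first term is a centered empirical average of i.i.d. random variables, while the second is a deterministic remainder depending only on $\fhatone$.

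I would then bound bias and variance separately by a conditioning argument. The bias is $\EE[\Tfdsone - T(f)] = -\EE[R_2(\fhatone,f)]$, which by the standard KDE/truncated-KDE rate on $\Sigma(s,L,B',B)$ satisfies $\EE\|f-\fhatone\|_2^2 \in O(n^{-2s/(2s+d)})$, giving bias of that order. For variance I would use the law of total variance: the conditional variance given $\fhatone$ is $\frac{2}{n}\VV[\psi(X;\fhatone)\,|\,\fhatone]$, and Assumption~\ref{asm:infFunRegularity} together with the triangle inequality show this equals $\frac{2}{n}\VV_f[\psi(X;f)] + O(n^{-1}\|f-\fhatone\|_2^2)$, bounded by $O(n^{-1})$. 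The variance of the conditional mean contributes only $O(\EE R_2^2) = O(n^{-4s/(2s+d)})$. Combining, $\EE[(\Tfdsone - T(f))^2] \in O(n^{-1} + n^{-4s/(2s+d)})$, which matches the two stated regimes. The same bound transfers to $\Tfds = (\Tfdsone+\Tfdstwo)/2$ by Cauchy--Schwarz on the cross term.

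For asymptotic normality in the regime $s>d/2$, observe that the bias is now $O(n^{-2s/(2s+d)}) = o(n^{-1/2})$, so the sample-mean term dominates $\sqrt{n}(\Tfds - T(f))$. The plan is to apply a conditional (triangular array) CLT to $\frac{2}{\sqrt{n}}\sum_{i \in \text{half 2}}[\psi(X_i;\fhatone)-\EE[\psi(X;\fhatone)|\fhatone]]$: since $\|\fhatone - f\|_2 \to 0$ in probability, Assumption~\ref{asm:infFunRegularity} yields $\VV[\psi(X;\fhatone)|\fhatone] \to \VV_f[\psi(X;f)]$ in probability, and the $\psi\neq 0$ condition ensures this limit is strictly positive. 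Verifying Lyapunov's condition (say, a conditional third-moment bound that follows from the boundedness implied by $f\in\Sigma(s,L,B',B)$ and the smoothness of $\nu,\phi$) gives conditional normality, which upgrades to unconditional normality via a characteristic-function / Slutsky argument. Doing the same for $\Tfdstwo$ and noting that in the limit $\Tfdsone+\Tfdstwo$ reassembles $\frac{2}{n}\sum_{i=1}^n (\psi(X_i;f) - \EE\psi)$ on the pooled sample yields the stated limit with variance $\VV_f[\psi(X;f)]$ (the two-distribution expression in the theorem being a copy-paste artefact of the $T(f,g)$ version).

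The main obstacle will be the asymptotic normality step: the estimator is \emph{not} a simple sum of i.i.d. variables because $\psi(\cdot;\fhatone)$ is itself data-dependent, so I need the conditional Lindeberg/Lyapunov argument to cooperate with the vanishing of $\|\fhatone-f\|_2$, and I need to handle the fact that $\Tfdsone$ and $\Tfdstwo$ share data across halves when combined. The MSE bounds, by contrast, reduce to a routine conditioning+Cauchy--Schwarz computation once the VME decomposition is in hand.
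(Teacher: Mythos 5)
Your mean-squared-error argument is essentially the paper's: the same VME decomposition of $\Tfdsone - T(f)$ into a conditionally centered sample mean plus the remainder $R_2(\fhatone,f)$, the same conditioning/law-of-total-variance bound giving $O(n^{-1} + n^{-4s/(2s+d)})$, and the same Cauchy--Schwarz step to transfer the bound to $\Tfds=(\Tfdsone+\Tfdstwo)/2$ (the paper packages this as its conditional bias/variance lemma plus the KDE rate; your derivation of the boundedness of the conditional variance from Assumption~\ref{asm:infFunRegularity} rather than assuming it is a minor, harmless variation). Where you genuinely diverge is the normality step. The paper does not invoke a conditional triangular-array CLT for $\psi(\cdot;\fhatone)$: it adds and subtracts $\psi(X_i;f)$, so that $\sqrt{n/2}(\Tfdsone - T(f))$ splits into (i) an empirical-process term $Q_n$ involving $\psi(\cdot;\fhatone)-\psi(\cdot;f)$, killed by a conditional Chebyshev bound using exactly Assumption~\ref{asm:infFunRegularity}, (ii) the i.i.d.\ sum $\sqrt{2/n}\sum\psi(X_i;f)$, and (iii) the remainder $\sqrt{n}\,O(\|\fhatone-f\|^2)$, which is $o_P(1)$ since $s>d/2$ gives $\|\fhatone-f\|\in o_P(n^{-1/4})$; then the classical CLT plus Slutsky finish. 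This route buys you precisely the thing you flag as your main obstacle: once both halves are reduced to sums of the \emph{fixed} function $\psi(\cdot;f)$ plus $o_P(1)$, the two halves reassemble into the pooled i.i.d.\ sum $n^{-1/2}\sum_{i=1}^n\psi(X_i;f)$ and the cross-dependence between $\Tfdsone$ and $\Tfdstwo$ evaporates. Your conditional Lyapunov argument can be made to work for a single half, but matching conditional variances via Assumption~\ref{asm:infFunRegularity} is not by itself enough to justify the ``reassembly'' of the two dependent halves --- to make that rigorous you will end up replacing $\psi(\cdot;\fhatone)$ by $\psi(\cdot;f)$ in the summands, i.e.\ running the paper's $Q_n$ step anyway, at which point the triangular-array machinery is superfluous. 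Your reading of the displayed limit as a copy-paste artefact of the two-distribution theorem is correct: the paper's own lemma proves $\sqrt{n}(\Tfds - T(f))\indistribution\Ncal(0,\VV_f[\psi(X;f)])$.
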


We begin the proof with a series of technical lemmas. \\

\begin{lemma}
The Influence Function has zero mean. i.e. $\EE_P[\psi(X;P)] = 0$.
\begin{proof}
$0 = T'(P-P;P) = \int \psi(x;P) \ud P(x)$.
\end{proof}
\label{lem:infFunMean}
\end{lemma}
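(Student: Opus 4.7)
The plan is to apply the defining identity of the influence function (Definition~\ref{def:infFun}) with the two distributions taken to be equal, and then exploit the linearity of the \gateaux derivative in its perturbation argument to conclude that the integral vanishes.

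Concretely, first I would recall that by Definition~\ref{def:infFun} the influence function satisfies
\[
T'(Q-P;P) = \int \psi(x;P)\,\ud Q(x)
\]
for every admissible $Q$. The plan is to instantiate this identity at $Q = P$, which gives
\[
T'(P-P;P) \;=\; T'(0;P) \;=\; \int \psi(x;P)\,\ud P(x) \;=\; \EE_P[\psi(X;P)].
\]
So it remains only to argue that the left-hand side is zero.

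For that, I would invoke the linearity of $T'(\,\cdot\,;P)$ in its first argument, which is part of the defining conditions of \gateaux differentiability in Definition~1 (the derivative map $H \mapsto T'(H;P)$ is required to be linear and continuous in $H$). A linear map sends the zero element to zero, hence $T'(0;P)=0$, which finishes the argument.

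There is really no technical obstacle here: the lemma is essentially an unpacking of the two defining properties already assumed for the influence function and the \gateaux derivative. The only mild subtlety worth flagging is that Definition~\ref{def:infFun} is stated with $Q$ ranging over $\Mcal$, so setting $Q=P$ is legitimate since $P\in\Mcal$; one could alternatively derive the same conclusion from the alternative characterization $\psi(x,P)=T'(\delta_x - P;P)$ in~\eqref{eqn:infFunGateaux} by integrating against $P$ and swapping the integral with the linear functional $T'(\,\cdot\,;P)$, but the direct route above is cleanest.
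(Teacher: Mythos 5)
Your proposal is correct and is exactly the paper's argument: the paper's one-line proof $0 = T'(P-P;P) = \int \psi(x;P)\,\ud P(x)$ is precisely your instantiation of Definition~\ref{def:infFun} at $Q=P$ combined with linearity of $T'(\cdot\,;P)$. You have merely spelled out the justification for the leftmost equality, which the paper leaves implicit.
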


Now we prove the following lemma on the
preliminary estimator $\Tfdsone$. \\

\begin{lemma}[Conditional Bias and Variance]
\label{thm:biasvar}
Let $\fhatone$ be a consistent estimator for $f$ in the $L_2$ metric. Let $T$ have
bounded second derivatives and let $\sup_x \psi(x; f)$ and $\VV_{X\sim
f}\psi(X;g)$ 
be bounded for all $g \in \Mcal$. Then,
the bias of the preliminary estimator $\Tfdsone$~\eqref{eqn:dsEstimator} conditioned on
$\Xone$ is $\bigO(\|f - \fhatone\|_2^2)$. The conditional 
variance is $\bigO(1/n)$.
\begin{proof}
First consider the conditional bias,
\begingroup
\allowdisplaybreaks
\begin{align*}
\EE_{\Xtwo} \left[ \Tfdsone - T(f) | \Xone \right] &=
  \EE_{\Xtwo} \left[ T(\fhatone) + \frac{2}{n} \nsumsechalf \psi(X_i; \fhatone)
      - T(f) | \Xone \right] \\
  &=  T(\fhatone) + \EE_f\left[ \psi(X; \fhatone) \right] - T(f) \in \bigO(\|\fhatone -
f\|_2^2). \numberthis
\end{align*}
\endgroup
The last step follows from the boundedness of the second derivative from which
the
first order functional Taylor expansion~\eqref{eqn:functaylor} holds.
The conditional variance is,
\begin{equation}
\VV_\Xtwo \left[ \Tfdsone|\Xone \right] = 
\VV_\Xtwo \left[ \frac{2}{n} \nsumsechalf \psi(X; \fhatone)\Big|\Xone \right] = 
 \frac{2}{n} \VV_f \left[ \psi(X; \fhatone) \right] \in \bigO(n^{-1}).
\end{equation}
\end{proof}
\end{lemma}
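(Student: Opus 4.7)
The plan is to exploit the defining feature of data splitting: conditioning on $\Xone$ renders $\fhatone$ deterministic, so $\Tfdsone$ decomposes into a constant $T(\fhatone)$ plus an empirical mean of $\psi(X_i;\fhatone)$ taken over the independent second-half sample. This cleanly separates the density-estimation randomness (now frozen by conditioning) from the sampling randomness, and reduces both the bias and the variance calculations to elementary operations, with the Von Mises expansion doing all the real work for the bias.

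For the conditional bias, I would write
\[
\EE\big[\Tfdsone \,|\, \Xone\big] = T(\fhatone) + \EE_{X\sim f}\!\big[\psi(X;\fhatone)\big],
\]
using that the second-half samples are i.i.d.\ $f$ and independent of $\Xone$. Subtracting $T(f)$ and invoking the first-order VME from~\eqref{eqn:functaylor} with the roles $p \leftarrow \fhatone$, $q \leftarrow f$, i.e.\ $T(f) = T(\fhatone) + \int \psi(x;\fhatone)\,f(x)\,d\mu(x) + R_2(f,\fhatone)$, the leading terms cancel and the conditional bias equals $-R_2(f,\fhatone)$. The assumption that $T$ has bounded second derivatives (inherited from the hypotheses on $\phi$ and $\nu$) produces the integral-form Taylor remainder bound $|R_2(f,\fhatone)| \in \bigO(\|f - \fhatone\|_2^2)$, as made rigorous by Lemma~\ref{lem:vmeTaylorEqOneDistro} together with a one-variable Taylor expansion of $\phi$ around $\int \nu(\fhatone)$ and of $\nu$ around $\fhatone(x)$ pointwise.

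For the conditional variance, $T(\fhatone)$ is constant given $\Xone$ and contributes nothing. The second-half samples are conditionally i.i.d., so
\[
\VV\big[\Tfdsone \,|\, \Xone\big]
= \Big(\tfrac{2}{n}\Big)^{\!2} \cdot \tfrac{n}{2}\cdot \VV_{X\sim f}\!\big[\psi(X;\fhatone)\big]
= \tfrac{2}{n}\,\VV_{X\sim f}\!\big[\psi(X;\fhatone)\big].
\]
The hypothesis that $\VV_{X\sim f}[\psi(X;g)]$ is bounded uniformly over $g\in\Mcal$ then yields the $\bigO(1/n)$ bound regardless of the realisation of $\fhatone$.

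The main subtlety is not a technical obstacle so much as a conceptual one: identifying that $R_2$ from the VME is \emph{exactly} the object controlling the conditional bias, and that data splitting is what legitimately allows us to treat $\fhatone$ as a fixed element of $\Mcal$ when computing the inner expectation and variance. No uniform or supremum-type control over $\fhatone$ or the density estimator is needed at this stage; the pointwise hypotheses on $\psi$ and the smoothness of $T$ suffice. Boundedness of $\sup_x \psi(x;f)$ is not used in the argument above and is retained only for later use (e.g.\ when passing from conditional to unconditional rates, or when establishing asymptotic normality).
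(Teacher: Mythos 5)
Your proposal is correct and follows essentially the same route as the paper: condition on $\Xone$ to freeze $\fhatone$, identify the conditional bias with the VME remainder $R_2(f,\fhatone)\in\bigO(\|f-\fhatone\|_2^2)$, and compute the conditional variance of the i.i.d.\ sample mean as $\tfrac{2}{n}\VV_f[\psi(X;\fhatone)]$, bounded by the uniform variance hypothesis. Your closing observation that $\sup_x\psi(x;f)$ is not actually needed for this lemma is also consistent with the paper's argument, which likewise never invokes it here.
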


\begin{lemma}[Asymptotic Normality]
\label{thm:asympnormal}
Suppose in addition to the conditions in the lemma above we also have 
Assumption~\ref{asm:infFunRegularity} and 
$\|\fhatone - f \|_2 \in o_P(n^{-1/4})$ and
$\psi \neq \zero$.
Then, \
\[\sqrt{n}(\Tfds - T(f)) \indistribution \Ncal(0, \VV_f \psi(X; f)).
\]
\begin{proof}
We begin with the following expansion around $\fhatone$,
\begin{align}
T(f) &= T(\fhatone) + \int \psi(u; \fhatone) f(u) \ud \mu(u) + O(\|\fhatone - f\|^2).
\label{eqn:vmeTfhat}
\end{align}
First consider $\Tfdsone$. We can write
\begingroup
\allowdisplaybreaks
\begin{align*}
&\sqrt{\frac{n}{2}} \left( \Tfdsone - T(f) \right) = 
  \sqrt{\frac{n}{2}} \left( T(\fhatone) +
  \frac{2}{n} \nsumsechalf \psi(X_i; \fhatone) - T(f) \right) 
  \numberthis \label{eqn:cltdecomp} \\
&\hspace{0.3in}=   \sqrt{\frac{2}{n}}\nsumsechalf\left[
        \psi(X_i; \fhatone) - \psi(X_i; f) -
        \left(\int \psi(u;\fhatone) f(u) \ud u - \int \psi(u;f) f(u) \ud u\right)
        \right]  \\
&\hspace{0.5in} + \sqrt{\frac{2}{n}} \nsumsechalf \psi(X_i; f)  +\;\;\;
    \sqrt{n}\bigO\left( \|\fhatone - f\|^2 \right).
\end{align*}
\endgroup
In the second step we used the VME in~\eqref{eqn:vmeTfhat}. In the third step,
we added and subtracted $\sum_i \psi(X_i; f)$ and also added $\EE\psi(\cdot; f)
= 0$. Above, the third term is $o_P(1)$ as $\|\fhatone - f\|_2  \in o_P(n^{-1/4})$. 
The first term which we shall denote by $Q_n$ can also be
shown to be $o_P(1)$ via Chebyshev's inequality. It is sufficient to show
$\PP(|Q_n|>\epsilon | \Xone) \rightarrow 0$. First note that,
\begingroup
\allowdisplaybreaks
\begin{align*}
\VV [Q_n|\Xone] &= 
\VV \left[\sqrt{\frac{2}{n}}\nsumsechalf\left(
        \psi(X_i; \fhatone) - \psi(X_i; f) -
        \left(\int \psi(u;\fhatone) f(u) \ud u - \int \psi(u;f) f(u) \ud u\right)
        \right) \Big| \Xone \right] \\
&= \VV \left[ \psi(X; \fhatone) - \psi(X; f) -
        \left(\int \psi(u;\fhatone) f(u) \ud u - \int \psi(u;f) f(u) \ud u\right)
        \Big| \Xone \right] \\
&\leq \EE \left[ \left(\psi(X; \fhatone) - \psi(X; f)\right)^2 \right]
  \;\in\; \bigO(\|\fhatone - f\|^2) 
  \rightarrow 0, \numberthis
\end{align*}
\endgroup
where the last step follows from Assumption~\ref{asm:infFunRegularity}.
Now, $\PP(|Q_n|>\epsilon | X_1^n) \leq \VV(Q_n|\Xone) /\epsilon \rightarrow 0$.
Hence we have 
\[
\sqrt{\frac{n}{2}}(\Tfdsone - T(f)) = 
  \sqrt{\frac{2}{n}} \nsumsechalf \psi(X_i; f)  + o_P(1)
\]
We can similarly show 
\[
\sqrt{\frac{n}{2}}(\Tfdstwo - T(f)) = 
  \sqrt{\frac{2}{n}} \nsumsechalf \psi(X_i; f)  + o_P(1)
\]
Therefore, by the CLT and Slutzky's theorem,
\begin{align*}
\sqrt{n}(\Tfds - T(f)) &=  \frac{1}{\sqrt{2}}
  \left( \sqrt{\frac{n}{2}}(\Tfdsone - T(f)) + 
    \sqrt{\frac{n}{2}}(\Tfdstwo - T(f)) \right) \\
  &= n^{-1/2} \nsumwhole \psi(X_i; f)  + o_P(1) \;\;
  \indistribution \Ncal(0, \VV_f \psi(X;f)
\end{align*}
\end{proof}
\end{lemma}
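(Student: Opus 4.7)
The plan is to reduce $\sqrt{n}(\Tfds - T(f))$ to the leading linear term $n^{-1/2}\sum_{i=1}^n \psi(X_i;f)$ plus an $o_P(1)$ remainder, and then invoke the CLT. Throughout I will condition on the half of the data used for density estimation, exploiting that the influence function $\psi(\cdot;\fhatone)$ is then a deterministic object, and average over the half used for the sample mean.

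First I would expand $T(f)$ around $\fhatone$ using the first-order VME from~\eqref{eqn:functaylor} and Lemma~\ref{lem:vmeTaylorEqOneDistro}:
\[
T(f) = T(\fhatone) + \int \psi(u;\fhatone) f(u)\,d\mu(u) + O(\|\fhatone - f\|_2^2).
\]
Substituting into the definition of $\Tfdsone$ in~\eqref{eqn:dsEstimator} gives
\[
\Tfdsone - T(f) = \frac{2}{n}\nsumsechalf \psi(X_i;\fhatone) - \EE_f[\psi(X;\fhatone)] + O(\|\fhatone-f\|_2^2),
\]
where I used Lemma~\ref{lem:infFunMean} implicitly to identify $\int \psi(u;\fhatone)f(u)\,d\mu = \EE_f[\psi(X;\fhatone)]$. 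Now add and subtract $\frac{2}{n}\nsumsechalf \psi(X_i;f)$ (and use $\EE_f[\psi(X;f)]=0$) to rewrite
\[
\sqrt{n/2}\bigl(\Tfdsone - T(f)\bigr) = \sqrt{2/n}\nsumsechalf \psi(X_i;f) \;+\; Q_n \;+\; \sqrt{n}\cdot O(\|\fhatone-f\|_2^2),
\]
where $Q_n := \sqrt{2/n}\nsumsechalf\bigl[\psi(X_i;\fhatone)-\psi(X_i;f) - (\EE_f\psi(X;\fhatone)-\EE_f\psi(X;f))\bigr]$.

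Next I would show both error terms are $o_P(1)$. The deterministic remainder $\sqrt{n}\cdot O(\|\fhatone-f\|_2^2)$ is $o_P(1)$ directly by the hypothesis $\|\fhatone-f\|_2 \in o_P(n^{-1/4})$. For $Q_n$, note that conditionally on $\Xone$ it is a centered sum of i.i.d.\ terms, so
\[
\VV[Q_n \mid \Xone] \;\le\; \EE_f\bigl[(\psi(X;\fhatone)-\psi(X;f))^2\bigr] \;\in\; O(\|\fhatone-f\|_2^2) \xrightarrow{P} 0
\]
by Assumption~\ref{asm:infFunRegularity}. A conditional Chebyshev argument followed by integrating out the conditioning gives $Q_n = o_P(1)$. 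Thus $\sqrt{n/2}(\Tfdsone - T(f)) = \sqrt{2/n}\nsumsechalf \psi(X_i;f) + o_P(1)$, and symmetrically the same identity holds for $\Tfdstwo$ with the roles of the two halves swapped.

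Finally I combine the halves. Since $\Tfds = (\Tfdsone+\Tfdstwo)/2$,
\[
\sqrt{n}(\Tfds - T(f)) = \tfrac{1}{\sqrt{2}}\Bigl[\sqrt{n/2}(\Tfdsone - T(f)) + \sqrt{n/2}(\Tfdstwo - T(f))\Bigr] = n^{-1/2}\sum_{i=1}^n \psi(X_i;f) + o_P(1),
\]
because the two leading terms are empirical averages over complementary halves of the sample and, crucially, depend only on $\psi(\cdot;f)$ (not on the random $\fhatone,\fhattwo$), so they can be merged into a single i.i.d.\ sum over all $n$ points. An application of the ordinary CLT (using $\psi\neq \zero$ so that the asymptotic variance is positive and the limit is nondegenerate) together with Slutsky's theorem delivers the stated $\Ncal(0,\VV_f\psi(X;f))$ limit.

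The main obstacle is controlling $Q_n$: the summands are not independent of $\fhatone$, so one cannot naively apply a CLT to $\frac{2}{n}\sum\psi(X_i;\fhatone)$. The trick is precisely to condition on $\Xone$ so the $X_i$ in the second half are independent of the nuisance $\fhatone$, and then use Assumption~\ref{asm:infFunRegularity} to upgrade consistency $\|\fhatone-f\|_2\to 0$ into an $L_2$-bound on $\psi(\cdot;\fhatone)-\psi(\cdot;f)$. A secondary subtlety is that the stronger rate $\|\fhatone-f\|_2 \in o_P(n^{-1/4})$ (rather than mere consistency) is needed to kill the deterministic VME remainder, which is why asymptotic normality requires $s>d/2$.
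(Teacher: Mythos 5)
Your proposal is correct and follows essentially the same route as the paper's own proof: the same VME expansion around $\fhatone$, the same add-and-subtract decomposition into the empirical process term $Q_n$, the leading i.i.d.\ sum, and the $\sqrt{n}\,O(\|\fhatone-f\|_2^2)$ remainder, with $Q_n$ controlled by conditional Chebyshev via Assumption~\ref{asm:infFunRegularity} and the halves recombined by the CLT and Slutsky. If anything, you are slightly more explicit than the paper in noting that the two leading terms are sums over complementary halves depending only on $\psi(\cdot;f)$, which is what licenses merging them into a single i.i.d.\ sum over all $n$ points.
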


We are now ready to prove Theorem~\ref{thm:convOneDS}. Note that the brunt of the
work for the \dss estimator was in analysing the preliminary estimator $\Tfds$.

\begin{proof}[Proof of Theorem~\ref{thm:convOneDS}]
We first note that in a \holder class, with $n$ samples the KDE achieves the rate
$\EE\|p - \phat\|^2 \in O(n^{\frac{-2s}{2s+d}})$. Then the bias of $\Tfds$ is,
\begin{align*}
\EE_{\Xone} \EE_{\Xtwo} \left[ \Tfdsone - T(f) | \Xone \right]
  &= \EE_{\Xone}\left[  O\left( \|f - \fhatone\|^2 \right) \right] 
   \in O\left(n^{\frac{-2s}{2s+d}} \right)
\end{align*} 
It immediately follows that $\EE\left[\Tfds-T(f)\right] \in 
O\left(n^{\frac{-2s}{2s+d}} \right)$.
For the variance, we use Theorem~\ref{thm:biasvar} and the Law of total
variance for $\Tfdsone$,
\begin{align*}
\VV_{\Xonetwo}\left[ \Tfdsone \right] 
  &= \frac{1}{n} \EE_{\Xone}\VV_f \left[ \psi(X; \fhatone, \ghat) \right] +
    +  \VV_{\Xone} \left[ \EE_{\Xtwo}\left[ \Tfds - T(f) | \Xone \right]
     \right] \\
  &\in O\left(\frac{1}{n} \right) +
    \EE_{\Xone} \left[O\left( \|f - \fhatone\|^4 \right) \right] \\
  &\in O\left( n^{-1} + n^{\frac{-4s}{2s+d}} \right)
\end{align*}
In the second step we used the fact that $\VV Z \leq \EE Z^2$. 
Further,
$\EE_{\Xone}\VV_f \left[ \psi(X; \fhatone) \right]$ is bounded since
$\psi$ is bounded.
The variance of $\Tfds$ can be bounded using the Cauchy Schwarz Inequality,
\begin{align*}
\VV\left[\Tfds\right] &= \VV\left[ \frac{\Tfdsone+ \Tfdstwo}{2} \right]
  = \frac{1}{4}\left( \VV\Tfdsone + \VV\Tfdstwo  + 2\Covar(\Tfdsone, \Tfdstwo) \right)\\
  &\leq \max\left( \VV\Tfdsone, \VV\Tfdstwo\right) 
  \in O\left( n^{-1} + n^{\frac{-4s}{2s+d}} \right)
\end{align*}
Finally for asymptotic normality, when $s > d/2$, $\EE\|\fhatone - f\|_2 
\in \bigO(n^{\frac{-s}{2s+d}})
\in o\,(n^{-1/4})$.
\end{proof}

\section{Analysis of \loos Estimator}
\label{sec:convOneLoo}

\begin{proof}[Proof of Theorem~\ref{thm:convOneLoo}]
First note that we can bound the mean squared error via the bias and variance terms.
\[
\EE [(\Tf-T(f))^2] \leq |\EE\Tf - T(f)|^2  + \EE[(\Tf - \EE\Tf)^2]
\]
The bias is bounded via a straightforward conditioning argument. 
\begin{align*}
\EE|\Tf-T(f)| &= \EE[ T(\fhatmi) + \psi(X_i;\fhatmi) - T(f) ] 
  = \EE_{\Xmi}\left[ \EE_{X_i}[ T(\fhatmi) + \psi(X_i;\fhatmi) - T(f)] \right] \\
  &= \EE_{\Xmi}\left[ \bigO(\|\fhatmi - f\|^2) \right]  \;
  \leq C_1n^{\frac{-2s}{2s+d}}
\numberthis\label{eqn:biasBound}
\end{align*}
for some constant $C_1$.
The last step follows by observing that the KDE achieves the rate 
$n^{\frac{-2s}{2s+d}}$ in integrated squared error.

To bound the variance we use the Efron-Stein inequality.
For this consider two sets of samples $\Xn=\{X_1, X_2, \dots, X_n\}$ and
$\Xn'=\{X'_1, X_2, \dots, X_n\}$ which are the same except for the 
first point. Denote the estimators obtained using $\Xn$ and $\Xn'$ by $\Tf$
and $\Tf'$ respectively. To apply Efron-Stein we shall bound $\EE[(\Tf - \Tf')^2]$.
Note that,
\begin{align*}
|\Tf - \Tf'| &\leq 
  \frac{1}{n}|\psi(X_1;\fhatmii{1}) - \psi(X'_1;\fhatmii{1})| +
  \frac{1}{n}\sum_{i\neq 1} |T(\fhatmi) - T(\fhatmi') | \\
  &\hspace{0.6in}
  + \frac{1}{n}\sum_{i\neq 1} |\psi(X_i;\fhatmi) -\psi(X_i;\fhatmi') | 
\numberthis \label{eqn:diffdecomp}
\end{align*}
The first term can be bounded by $2\|\psi\|_\infty/n$ using the boundedness of
$\psi$.
Each term inside the summation in the second term
in~\eqref{eqn:diffdecomp} can be bounded via,
\begin{align*}
|T(\fhatmi) - T(\fhatmi')|  &\leq
  L_\phi \int |\nu(\fhatmi) - \nu(\fhatmi')|  \leq
  L_\nu L_\nu \int |\fhatmi - \fhatmi'|  \numberthis \label{eqn:tempone} \\
  & \leq L_\phi L_\nu \int \frac{1}{nh^d} 
      \Big| K\left(\frac{X_1 - u}{h}\right) - 
      K\left(\frac{X'_1 - u}{h}\right) \Big| \ud u
  \leq \frac{\|K\|_\infty L_\phi L_\nu}{n}.
\end{align*}
The substitution $(X_1-u)/h = z$ for integration eliminates the $1/h^d$.
Here $L_\phi, L_\nu$ are the Lipschitz constants of $\phi, \nu$.
To apply Efron-Stein we need to bound the expectation of the LHS over 
$X_1, X'_1, X_2, \dots, X_n$. 
Since the first two terms in~\eqref{eqn:diffdecomp}
are bounded pointwise by $\bigO(1/n^2)$ they are also
bounded in expectation. 
By Jensen's inequality we can write,
\begin{align*}
|\Tf - \Tf'|^2 &\leq 
  \frac{12\|\psi\|^2_\infty}{n^2} + 
  \frac{ 3 \|K\|^2_\infty L^2_\phi L^2_\nu}{n^2}
  + \frac{3}{n^2}\left(\sum_{i\neq 1}|\psi(X_i;\fhatmi) -\psi(X_i;\fhatmi') |
      \right)^2
  \numberthis \label{eqn:esdecomp}
\end{align*}

For the third, such a pointwise bound does not hold so
we will directly bound the expectation.
\begin{align*}
  \sum_{1\neq i, j} 
  \EE \Big[|\psi(X_i;\fhatmi) -\psi(X_i;\fhatmi') | 
    |\psi(X_j;\fhatmj) -\psi(X_j;\fhatmj') | \Big] 
  \numberthis \label{eqn:thirdterm}
\end{align*}
We then have,
\begin{align*}
\EE \big[(\psi(X_i;\fhatmi) -\psi(X_i;\fhatmi') )^2\big]
&\leq \EE_{X_1,X'_1}\left[C \int |\fhatmi - \fhatmi'|^2 \right] \\
&\leq CB^2 \int \frac{1}{n^2h^{2d}}  
\left( K\left(\frac{x_1 - u}{h}\right) - 
      K\left(\frac{x'_1 - u}{h}\right) \right)^2 \ud x_1 \ud x'_1 u  \\
&\leq \frac{2 CB^2 \|K\|^2_\infty }{n^2}
\end{align*}
I the first step we have used Assumption~\ref{asm:infFunRegularity} and in the
last step  the substitutions $(x_1 - x_i)/h = u$ and $(x_1 - x_j)/h = v$
removes the $1/h^d$ twice.
Then, by applying Cauchy Schwarz each term inside the summation
in~\eqref{eqn:thirdterm} is $\bigO(1/n^2)$.

Since each term inside equation~\eqref{eqn:thirdterm} is $\bigO(1/n^2)$
and there are $(n-1)^2$ terms it is $\bigO(1)$.
Combining all these results with equation~\eqref{eqn:esdecomp} we get,
\[
\EE[(\Tf-\Tf')^2] \in \bigO\left(\frac{1}{n^2}  \right)
\]
Now, by applying the Efron-Stein inequality we get $\VV(\Tf) \leq \frac{C}{2n}$.
Therefore the mean squared error $\EE[(T - \Tf)^2] \in \bigO(n^{-\frac{4s}{2s+d}}
+ n^{-1})$ which completes the proof.
\end{proof}

\section{Proofs of Results on Functionals of Two Distributions}
\label{sec:appMultipleDistros}

\subsection{\dss Estimator}
\label{sec:dssEstimator}

We generalise the results in Appendix~\ref{sec:appOneDistro} to analyse the \dss
estimator for two distributions. As before we begin with a series of lemmas. \\

\begin{lemma}
The influence functions have zero mean. I.e.
\begin{align}
\EE_{P_1}[\psi_1(X;P_1; P_2)] = 0  \hspace{0.1in} \forall P_2 \in \Mcal
\hspace{0.5in} 
\EE_{P_2}[\psi_2(Y;P_1; P_2)] = 0  \hspace{0.1in} \forall P_1 \in \Mcal 
\end{align}
\begin{proof}
$0 = T'_i(P_i-P_i;P_1; P_2) = \int \psi_i(u;P_1, P_2) \ud P_i(u)$ for $i = 1,2$.
\end{proof}
\label{thm:infFunDeriv}
\end{lemma}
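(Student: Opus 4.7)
The plan is to generalise the single-distribution result in Lemma~\ref{lem:infFunMean} to the two-distribution setting. The core mechanism is unchanged: the \gateaux derivative at a zero perturbation must vanish by linearity, and the integral representation of the influence function then forces its expectation under the corresponding distribution to vanish.

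Concretely, I would fix $i \in \{1,2\}$ and arbitrary $P_1, P_2 \in \Mcal$. Using the integral representation~\eqref{eqn:defInfFun}, which asserts that $T_i'(Q_i - P_i; P_1, P_2) = \int \psi_i(u; P_1, P_2) \ud Q_i(u)$ for every $Q_i \in \Mcal$, I would then specialise to the choice $Q_i = P_i$. The left-hand side becomes $T_i'(0; P_1, P_2)$. Since $T_i'(\cdot; P_1, P_2)$ is, by the two-distribution analogue of Definition~1 (\gateaux differentiability in the $i$-th coordinate), a linear continuous functional of its first argument, it must vanish at the zero direction. The right-hand side is by construction equal to $\EE_{P_i}[\psi_i(X; P_1, P_2)]$. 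Equating the two expressions gives the claim; repeating the argument for $i=1$ and $i=2$ and noting that the other distribution was arbitrary yields both stated identities.

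There is no genuine obstacle here. The one pedantic point worth flagging is that Definition~\ref{def:infFun} was originally stated for functionals of a single distribution, but equation~\eqref{eqn:defInfFun} explicitly extends it to the two-distribution case via partial \gateaux derivatives in each argument while holding the other distribution fixed. Once that extension is in hand, the argument is essentially a direct transcription of the one-distribution proof in Lemma~\ref{lem:infFunMean}, and amounts to the identity $0 = T_i'(P_i - P_i; P_1, P_2) = \int \psi_i \ud P_i$ appearing in the author's one-line proof.
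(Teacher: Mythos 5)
Your argument is correct and is essentially identical to the paper's one-line proof: both set $Q_i = P_i$ in the integral representation~\eqref{eqn:defInfFun}, use linearity of the \gateaux derivative to conclude $T_i'(P_i - P_i; P_1, P_2) = 0$, and identify the right-hand side with $\EE_{P_i}[\psi_i(\cdot; P_1, P_2)]$. No gaps; your version just spells out the steps the paper compresses.
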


\begin{lemma} [Bias \& Variance of \eqref{eqn:dsEstimatorTwoD}]
Let $\fhatone, \ghatone$ be consistent estimators for $f, g$ in the $L_2$ sense. Let
$T$ have bounded second derivatives and let
$\sup_x \psi_f(x; f, g)$, $\sup_x \psi_g(x; f, g)$, $\VV_{f}\psi(X;f',g')$,
$\VV_{g}\psi_g(X;f',g')$ be bounded for all $f, f', g, g' \in \Mcal$. 
Then the bias of $\Tfdsone$ conditioned on $\Xone, \Yone$
is $|T - \EE[\Tfdsone|\Xone, \Yone] \in \bigO( \|f - \fhatone\|^2 + \|g - \ghatone\|^2)$.
The conditional variance is $\VV[\Tfdsone|\Xone, \Yone] \in \bigO(n^{-1} + m^{-1})$.
\label{thm:biasvarTwoDistro}
\begin{proof}
First consider the bias conditioned on $\Xone, \Yone$,
\begin{align*}
&\EE \left[ \Tfdsone - T(f, g) | \Xone, \Yone \right] \\
&\hspace{0.2in}= \EE\left[ T(\fhatone, \ghatone)
    + \frac{2}{n}\nsumsechalf \psi_f(X_i; \fhatone, \ghatone)
    + \frac{2}{m}\msumsechalf \psi_g(Y_j; \fhatone, \ghatone)
      - T(f, g) \Bigg| \Xone, \Yone \right] \\
&\hspace{0.2in}= T(\fhatone, \ghatone) + \int \psi_f(x; \fhatone, \ghatone) f(x) \ud \mu(x) +
   \int \psi_g(x; \fhatone, \ghatone) g(x) \ud \mu(x) - T(f, g)  \\
&\hspace{0.2in}= \bigO\left( \|f - \fhatone\|^2 + \|g - \ghatone\|^2 \right)
\end{align*}
The last step follows from the boundedness of the second derivatives from which
the first order functional Taylor expansion~\eqref{eqn:functaylortwoD} holds.
 The conditional variance is,
\begin{align*}
\VV\left[ \Tfdsone| \Xone, \Yone \right]
  &= \VV\left[ \frac{1}{n}\sum_{i=n+1}^{2n} \psi_f(X_i; \fhatone, \ghatone) \Big| \Xone \right]
   + \VV\left[ \frac{1}{m}\sum_{j=m+1}^{2m} \psi_g(Y_j; \fhatone, \ghatone) \Big| \Yone \right]
  \\
  &= \frac{1}{n} \VV_f \left[ \psi_f(X; \fhatone, \ghatone) \right] +
     \frac{1}{m} \VV_g \left[ \psi_g(Y; \fhatone, \ghatone) \right]
      \in \bigO\left(\frac{1}{n} + \frac{1}{m}\right)
\end{align*}
The last step follows from the boundedness of the variance of the influence
functions. 
\end{proof}
\end{lemma}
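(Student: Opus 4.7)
The plan is to reduce everything to a direct calculation by conditioning on the first halves $\Xone, \Yone$. Once we condition, $\fhatone$ and $\ghatone$ become deterministic, so $T(\fhatone, \ghatone)$ is a constant and the estimator in \eqref{eqn:dsEstimatorTwoD} is just this constant plus two independent i.i.d.\ sample means: a mean of $\psi_f(\cdot; \fhatone, \ghatone)$ over $\Xtwo \sim f$, and a mean of $\psi_g(\cdot; \fhatone, \ghatone)$ over $\Ytwo \sim g$. The two sums are conditionally independent since $\Xtwo$ and $\Ytwo$ are drawn from independent samples. The entire proof then hinges on the two-distribution VME in \eqref{eqn:functaylortwoD} and the uniform boundedness hypotheses on the influence functions.

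For the conditional bias, I would compute $\EE[\Tfdsone \mid \Xone, \Yone]$ directly. The two sample means collapse into $\int \psi_f(x; \fhatone, \ghatone) f(x)\,\ud\mu(x)$ and $\int \psi_g(y; \fhatone, \ghatone) g(y)\,\ud\mu(y)$. Now I apply the two-distribution VME of \eqref{eqn:functaylortwoD} expanded around the point $(\fhatone, \ghatone)$ and evaluated at $(f, g)$: this identity says
\[
T(f,g) \;=\; T(\fhatone, \ghatone) + \int \psi_f(x;\fhatone,\ghatone) f(x)\,\ud\mu + \int \psi_g(y;\fhatone,\ghatone) g(y)\,\ud\mu + R_2,
\]
with $R_2 \in \bigO(\|f-\fhatone\|^2 + \|g-\ghatone\|^2)$ by the second-order control the excerpt already discusses in the Fréchet/$L_2$ setting. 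Subtracting $T(f,g)$ from the conditional expectation of $\Tfdsone$ therefore leaves exactly $-R_2$, which gives the stated bias bound.

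For the conditional variance, independence of $\Xtwo$ and $\Ytwo$ splits $\VV[\Tfdsone \mid \Xone, \Yone]$ into two terms. Within each term, conditional on the first halves the summands are i.i.d., so the variance reduces to $\frac{2}{n}\VV_f[\psi_f(X;\fhatone,\ghatone)]$ and $\frac{2}{m}\VV_g[\psi_g(Y;\fhatone,\ghatone)]$. The hypothesis that $\VV_f \psi_f(X; f', g')$ and $\VV_g \psi_g(Y; f', g')$ are uniformly bounded over $f', g' \in \Mcal$ then gives the $\bigO(n^{-1} + m^{-1})$ rate.

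I do not anticipate a hard obstacle: the two-distribution case is essentially a bookkeeping extension of the single-distribution argument in Lemma~\ref{thm:biasvar}. The only mildly delicate point is being explicit about which distribution each influence function is integrated against — this is what is handled cleanly by the zero-mean property (Lemma~\ref{thm:infFunDeriv}), which lets one equivalently integrate $\psi_f$ against $f$ or against $f-\fhatone$, reconciling the form of the VME with the form of the sample averages in the estimator.
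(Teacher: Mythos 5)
Your proposal is correct and follows essentially the same route as the paper's proof: condition on $\Xone, \Yone$ so that $\fhatone, \ghatone$ are fixed, identify the conditional expectation of the sample means with $\int \psi_f(\cdot;\fhatone,\ghatone) f$ and $\int \psi_g(\cdot;\fhatone,\ghatone) g$, invoke the two-distribution VME~\eqref{eqn:functaylortwoD} around $(\fhatone,\ghatone)$ to bound the bias by the second-order remainder, and use conditional independence plus the uniform variance bound on the influence functions to get the $\bigO(n^{-1}+m^{-1})$ conditional variance. Your side remark about the zero-mean property reconciling integration against $f$ versus $f-\fhatone$ is a fine clarification but is not needed beyond what the paper does.
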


The following lemma characterises conditions for asymptotic normality. \\

\begin{lemma} [Asymptotic Normality] Suppose, in addition to the conditions in 
Theorem~\ref{thm:biasvarTwoDistro} above and the regularity
assumption~\ref{asm:infFunRegularity} we also have 
$\|\fhat - f\|\in o_P(n^{-1/4}), \|\ghat - g\| \in o_P(m^{-1/4})$ and 
$\psif, \psig \neq \zero$.
Then we have asymptotic Normality for $\Tfds$,
\ifthenelse{\boolean{istwocolumn}}
  {
  \begin{align*}
  &\sqrt{2N}(\Tfds - T(f,g)) \indistribution \numberthis 
  \label{eqn:asympNormalTwoDistro} \\
  &\hspace{0.05in}
  \Ncal \left( 0, \frac{1}{\zeta} \VV_{f} \left[\psi_f(X; f, g)\right]
   + \frac{1}{1 - \zeta} \VV_{g} \left[\psi_g(Y; f, g)\right] \right)
  \end{align*}
  }
  {
  \begin{align*}
  &\sqrt{N}(\Tfds - T(f,g)) \indistribution \numberthis 
  \label{eqn:asympNormalTwoDistro} 
  \Ncal \left( 0, \frac{1}{\zeta} \VV_{f} \left[\psi_f(X; f, g)\right]
   + \frac{1}{1 - \zeta} \VV_{g} \left[\psi_g(Y; f, g)\right] \right)
  \end{align*}
  }
\label{thm:asympNormalTwoDistro}
\begin{proof}
We begin with the following expansions around
$(\fhatone, \ghatone)$,
\begin{align*}
&T(f,g) = T(\fhatone, \ghatone) + \int \psi_f(u; \fhatone, \ghatone) f(u) \ud u +
  \int \psi_g(u; \fhatone, \ghatone) g(u) \ud u \;+  \\
    &\hspace{1in} \bigO\left( \|f - \fhatone\|^2 + \|g - \ghatone\|^2 \right) 
\end{align*}
Consider $\Tfdsone$. We can write
\begingroup
\allowdisplaybreaks
\begin{align*}
&\sqrt{\frac{N}{2}}(\Tfdsone - T(f)) 
      \numberthis \label{eqn:cltExpansion} \\
&\hspace{0.3in}= 
  \sqrt{\frac{N}{2}}\left( T(\fhatone, \ghatone) +
    \frac{2}{n}\nsumsechalf \psi_f(X_i;f, g) +
    \frac{2}{m}\msumsechalf \psi_g(Y_j;f, g) - T(f, g) \right)  \\
&\hspace{0.3in}
  = \sqrt{\frac{N}{2}}\Bigg( \frac{2}{n} \nsumsechalf \psi(X_i; \fhatone, \ghatone) +
     \frac{2}{m} \msumsechalf \psi(X_j; \fhatone, \ghatone)  
      - \EE_f\left[\psi(X;\fhatone,\ghatone)\right] \\ 
  &\hspace{0.6in}
      - \EE_g\left[\psi(X;\fhatone,\ghatone)\right]
      \Bigg) + \sqrt{N} O\left( \|f - \fhatone\|^2 + \|g - \ghatone\|^2
\right) \\
  &\hspace{0.3in}=\sqrt{\frac{2N}{n}} n^{-1/2} \nsumsechalf \left(
      \psi_f(X_i; \fhatone, \ghatone) - \psi_f(X_i; f, g) -
(\EE_f\psi_f(X;\fhatone,\ghatone) +
     \EE_f\psi_f(X;f,g) ) \right) + \\
      &\hspace{0.4in}
     \sqrt{\frac{2N}{m}} m^{-1/2} \msumsechalf \left(
      \psi_g(Y_j; \fhatone, \ghatone) - \psi_g(Y_j; f, g) -
(\EE_g\psi_g(Y;\fhatone,\ghatone) +
     \EE_g\psi_g(Y;f,g) ) \right) + \\ &\hspace{0.4in}
    \sqrt{\frac{2N}{n}} n^{-1/2} \nsumsechalf \psi_f(X_i; f, g)\,+\,
     \sqrt{\frac{2N}{m}} m^{-1/2} \msumsechalf \psi_g(Y_j; f, g)\,+\, \\
  &\hspace{0.6in}
      \sqrt{N} \bigO\left( \|f - \fhatone\|^2 + \|g - \ghatone\|^2 \right)
\end{align*}
\endgroup
The fifth term is $o_P(1)$ by the assumptions. The first and second terms are
also $o_P(1)$ . To see this, denote the first term by $Q_n$.
\begin{align*}
\VV\left[Q_n|\Xone, \Yone \right] &= \frac{N}{n}\VV_f \left[ \nsumsechalf
\left(
     \psi_f(X; \fhatone, \ghatone) - \psi_f(X; f, g) - (\EE_f\psi_f(X;\fhatone,\ghatone) +
     \EE_f\psi_f(X;f,g) ) \right) \right] \\
  &\leq \frac{N}{n} \EE_f\left[ \left(\psi_f(X_i; \fhatone, \ghatone) -
  \psi_f(X_i; f, g)\right)^2 \right] \rightarrow 0
\end{align*}
where we have used the regularity assumption~\ref{asm:infFunRegularity}.
Further,
$\PP(|Q_n| > \epsilon | \Xone, \Yone) \leq \VV[Q_n|\Xone, \Yone]
\ \epsilon \rightarrow 0$, hence the first term is $o_P(1)$. The proof for the
second term is similar.
Therefore we have,
\begin{align*}
\sqrt{\frac{N}{2}}(\Tfdsone - T(f)) = 
    \sqrt{\frac{2N}{n}} n^{-1/2} \nsumsechalf \psi_f(X_i; f, g)\,+\,
     \sqrt{\frac{2N}{m}} m^{-1/2} \msumsechalf \psi_g(Y_j; f, g)\,+\,
      o_P(1) 
\end{align*}
Using a similar argument on $\Tfdstwo$ we get,
\begin{align*}
\sqrt{\frac{N}{2}}(\Tfdstwo - T(f)) = 
    \sqrt{\frac{2N}{n}} n^{-1/2} \sum_{i=1}^{n/2} \psi_f(X_i; f, g)\,+\,
     \sqrt{\frac{2N}{m}} m^{-1/2} \sum_{j=1}^{m/2} \psi_g(Y_j; f, g)\,+\,
      o_P(1) 
\end{align*}
Therefore,
\begin{align*}
\sqrt{N}(\Tfdstwo - T(f)) &= 
    \sqrt{2} \left(
    \sqrt{\frac{2N}{n}} n^{-1/2} \sum_{i=1}^{n} \psi_f(X_i; f, g)\,+\,
     \sqrt{\frac{2N}{m}} m^{-1/2} \sum_{j=1}^{m} \psi_g(Y_j; f, g)\,
    \right)\, +\,
      o_P(1)  \\
    &=
    \sqrt{\frac{N}{n}} n^{-1/2} \sum_{i=1}^{2n} \psi_f(X_i; f, g)\,+\,
     \sqrt{\frac{N}{m}} m^{-1/2} \sum_{j=1}^{2m} \psi_g(Y_j; f, g)\,+\,
      o_P(1) 
\end{align*}
By the CLT and
Slutzky's theorem this converges weakly to
the RHS of~\eqref{eqn:asympNormalTwoDistro}.
\end{proof}
\end{lemma}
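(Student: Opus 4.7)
The plan is to generalise the one-distribution analysis in Appendix~\ref{sec:appOneDistro} to two distributions in three stages: zero-mean identities, a conditional bias/variance analysis of $\Tfdsone$, and, under $s>d/2$, a CLT for the centered and scaled estimator. First, applying~\eqref{eqn:defInfFun} with $Q_i = P_i$ immediately gives $\EE_f[\psi_f(X;f,g)] = 0$ and $\EE_g[\psi_g(Y;f,g)] = 0$, which let the influence-function corrections in~\eqref{eqn:dsEstimatorTwoD} be unbiased estimates of the first-order VME terms. Conditioning on the density-estimation split $(\Xone,\Yone)$, the evaluation averages in~\eqref{eqn:dsEstimatorTwoD} are independent of $(\fhatone,\ghatone)$, so the two-distribution Taylor expansion~\eqref{eqn:functaylortwoD} yields
\[
\EE[\Tfdsone \mid \Xone,\Yone] - T(f,g) \in O\!\left(\|f-\fhatone\|_2^2 + \|g-\ghatone\|_2^2\right),
\]
while the conditional variance is $O(n^{-1}+m^{-1})$ by independence of the $X$- and $Y$-sums and bounded variances of $\psi_f,\psi_g$.

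To pass to unconditional bounds I take expectations. Since the KDE on $n/2$ samples has integrated squared error $O(n^{-2s/(2s+d)})$ for $f\in\Sigma(s,L,B,B')$, the squared bias of $\Tfdsone$ is $O(n^{-4s/(2s+d)}+m^{-4s/(2s+d)})$. The law of total variance gives $\VV[\Tfdsone] = \EE[\VV(\Tfdsone \mid \Xone,\Yone)] + \VV[\EE(\Tfdsone\mid\Xone,\Yone)]$, which is $O(n^{-1}+m^{-1}) + O(n^{-4s/(2s+d)}+m^{-4s/(2s+d)})$ after bounding the second term by the mean-squared bias. The same bound holds for $\Tfdstwo$, and a Cauchy--Schwarz bound $\VV[\Tfds] \le \max(\VV\Tfdsone,\VV\Tfdstwo)$ on the average transfers the rate to $\Tfds$. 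Comparing $n^{-1}$ against $n^{-4s/(2s+d)}$ at $s=d/2$ gives the two regimes in the MSE statement.

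For asymptotic normality under $s>d/2$, one has $\|\fhatone-f\|_2,\|\ghatone-g\|_2 \in o_P(N^{-1/4})$, so the scaled VME remainder $\sqrt{N}\cdot O(\|f-\fhatone\|^2+\|g-\ghatone\|^2)$ is $o_P(1)$. Expanding $\sqrt{N/2}(\Tfdsone - T(f,g))$ around $(\fhatone,\ghatone)$ and adding and subtracting $\psi_f(\cdot;f,g),\psi_g(\cdot;f,g)$ (using the zero-mean identities to cancel the shifted averages), I split the expression into: (i) oracle sums $\sqrt{N/n}\cdot(n/2)^{-1/2}\sum \psi_f(X_i;f,g)$ and $\sqrt{N/m}\cdot(m/2)^{-1/2}\sum \psi_g(Y_j;f,g)$; (ii) a ``swap'' remainder whose conditional variance is bounded by $\EE[(\psi_f(X;\fhatone,\ghatone)-\psi_f(X;f,g))^2]\to 0$ via Assumption~\ref{asm:infFunRegularity}, so it is $o_P(1)$ by conditional Chebyshev; and (iii) the negligible VME remainder. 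Performing the same expansion for $\Tfdstwo$, averaging $\Tfds=(\Tfdsone+\Tfdstwo)/2$ merges the two halves into full-length sums over all $n$ (resp.\ $m$) samples. Applying the classical CLT to each oracle sum (with $n/N\to\zeta$) and Slutsky's theorem then yields $\Ncal(0,\; \zeta^{-1}\VV_f[\psi_f] + (1-\zeta)^{-1}\VV_g[\psi_g])$.

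The main obstacle is the CLT step: carefully tracking the $\sqrt{N}$ normalisation through two independent sums of disparate sizes $n/2$ and $m/2$ and then correctly combining the two halves of the data-split without picking up stray factors in the limiting variance. The hypothesis $\psi_f,\psi_g \neq \zero$ enters here to ensure that the oracle sums dominate rather than being absorbed into the remainder; in the degenerate case, the first-order expansion vanishes and a higher-order analysis (as the authors note just after the statement) would be needed.
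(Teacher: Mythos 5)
Your proposal follows essentially the same route as the paper's proof: expand $\sqrt{N/2}(\Tfdsone - T(f,g))$ around $(\fhatone,\ghatone)$, add and subtract the oracle influence functions using their zero-mean property, kill the ``swap'' remainder via conditional Chebyshev with Assumption~\ref{asm:infFunRegularity} and the VME remainder via the $o_P(n^{-1/4})$, $o_P(m^{-1/4})$ rates, then combine the two data-split halves and invoke the CLT and Slutsky's theorem with $n/N \to \zeta$. The extra material you include on conditional bias/variance and MSE rates belongs to the surrounding lemma and theorem rather than to this statement, but it does not detract from the correctness of the normality argument.
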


We are now ready to prove the rates of convergence for the \dss estimator
in the \holder class. \\

\begin{proof}[Proof of Theorem~\ref{thm:convOneDS}].
We first note that in a \holder class, with $n$ samples the KDE achieves the rate
$\EE\|p - \phat\|^2 \in O(n^{\frac{-2s}{2s+d}})$. Then the bias for the
preliminary estimator $\Tfdsone$ is,
\begin{align*}
\EE \left[ \Tfdsone - T(f, g) | \Xone, \Yone \right]
  &= \EE_{\Xone, \Yone}\left[  O\left( \|f - \fhatone\|^2 + \|g - \ghatone\|^2 \right)
          \right] \\
  &\in O\left(n^{\frac{-2s}{2s+d}} + m^{\frac{-2s}{2s+d}} \right) 
\end{align*} 
The same could be said about $\Tfdstwo$.
It therefore follows that
\[
\EE\left[ \Tfds-T\right] = \EE\left[ \frac{1}{2}\left(\Tfdsone-T(f)\right) + 
  \frac{1}{2}\left(\Tfdstwo-T(f)\right) \right]
  \in O\left(n^{\frac{-2s}{2s+d}} + m^{\frac{-2s}{2s+d}} \right) 
\]
For the variance, we use Theorem~\ref{thm:biasvarTwoDistro} and the Law of total
variance to first control $\VV\Tfdsone$,
\begin{align*}
\VV\left[ \Tfdsone \right] 
  &= \frac{1}{n} \EE\left[\VV_f \left[ \psi_f(X; \fhatone, \ghatone)|\Xone \right]\right]+
     \frac{1}{m} \EE\left[\VV_g \left[ \psi_g(Y; \fhatone, \ghatone)|\Yone \right]\right] 
     \\  &\hspace{0.3in}  +  \VV\
     \left[ \EE\left[ \Tf - T(f,g) | \Xone \Yone \right]
     \right] \\
  &\in O\left(\frac{1}{n} + \frac{1}{m}\right) +
    \EE \left[O\left( \|f - \fhatone\|^4 + \|g - \ghatone\|^4 \right)
                              \right] \\
  &\in O\left(n^{-1} + m^{-1}+ n^{\frac{-4s}{2s+d}} + m^{\minfours} \right)
\end{align*}
In the second step we used the fact that $\VV Z \leq \EE Z^2$. 
Further,
$\EE_{\Xone}\VV_f \left[ \psi_f(X; \fhatone, \ghatone) \right]$,
$\EE_{\Yone}\VV_g \left[ \psi_g(Y; \fhatone, \ghatone) \right]$ are bounded since
$\psi_f$, $\psi_g$ are bounded. Then by applying the Cauchy Schwarz inequality
as before we get $\VV\Tfds \in 
  O\left(n^{-1} + m^{-1}+ n^{\frac{-4s}{2s+d}} + m^{\minfours} \right)$.

Finally when $s > d/2$, we have the required $o_P(n^{-1/4}), o_P(m^{-1/4})$
rates on $\|\fhat - f\|$ and $\|\ghat - g\|$
which gives us asymptotic normality.
\end{proof}

\subsection{\loos Estimator}
\label{sec:looEstimatorTwoD}

\begin{proof}[Proof of Theorem~\ref{thm:convTwoLoo}]
Assume w.l.o.g that $n>m$.
As before, the bias follows via conditioning.
\begin{align*}
\EE|\Tf - T(f,g)| &= \EE[ T(\fhatmi, \ghatmi) + \psif(X_i;\fhatmi, \ghatmi) 
  + \psig(Y_i; \fhatmi, \ghatmi) - T(f,g) ] \\
  &= \EE\left[ \bigO(\|\fhatmi - f\|^2 + \|\ghat - g\|^2)\right]  
  \leq C_1 (n^{\mintwos} + m^{\mintwos})  
\end{align*}
for some constant $C_1$.

To bound the variance we use the Efron-Stein inequality. 
Consider the samples  $\{X_1, \dots, X_n, Y_1, \dots, Y_m\}$ and
 $\{X'_1, \dots, X_n, Y_1, \dots, Y_m\}$ and denote the estimates obtained by $\Tf$
and $\Tf'$ respectively.
Recall that we need to bound $\EE[(\Tf - \Tf)^2]$.
Note that,
\begin{align*}
& |\Tf - \Tf'| \leq 
  \frac{1}{n}|\psif(X_1;\fhatmii{1}, \ghatmii{1}) - \psif(X'_1;\fhatmii{1},\ghatmii{1})| +
\\
  &\hspace{0.2in} 
  \frac{1}{n}\sum_{i\neq 1} |T(\fhatmi, \ghatmi) - T(\fhatmi',\ghatmi) | +
  |\psif(X_i;\fhatmi, \ghatmi) -\psif(X_i;\fhatmi', \ghatmi) |   +
  |\psig(Y_i;\fhatmi, \ghatmi) -\psig(Y_i;\fhatmi', \ghatmi) |  
\end{align*}
The first term can be bounded by $2\|\psif\|_\infty/n$ using the boundedness of
the influence function on bounded densities. 
By using an argument similar to Equation~\eqref{eqn:tempone} in the one
distribution case, we can also bound each term inside the summation of the
second term via,
\[
|T(\fhatmi,\ghatmi) - T(\fhatmi', \ghatmi)| \leq
\frac{ \|K\|_\infty L_\phi L_\nu }{n}
\]
Then, by Jensen's inequality we have,
\begin{align*}
|\Tf - \Tf'|^2 &\leq \frac{8\|\psif\|^2_\infty}{n^2} +
  \frac{4\|K\|^2_\infty L^2_\phi L^2_\nu}{n^2} 
  +\frac{4}{n^2} \left(\sum_{i\neq 1} 
      |\psif(X_i;\fhatmi, \ghatmi) -\psif(X_i;\fhatmi', \ghatmi) |\right)^2 \\
 &\hspace{0.4in}  +\frac{4}{n^2}  \left(\sum_{i\neq 1} 
      |\psig(Y_i;\fhatmi, \ghatmi) -\psig(Y_i;\fhatmi', \ghatmi) |\right)^2
\end{align*}
The third and fourth terms can be bound in expectation using a similar technique
to bound the third term in equation~\ref{eqn:tempone}. 
Precisely, by using Assumption~\eqref{asm:infFunRegularity} and Cauchy Schwarz
we get,
\begin{align*}
\EE\big[|\psif(X_i;\fhatmi,\ghatmi) - \psif(X_i;\fhatmi', \ghatmi)|
|\psif(X_j;\fhatmj,\ghatmj) - \psif(X_j;\fhatmj', \ghatmj)|\big]
  &\leq \frac{2CB^2\|K\|^2_\infty}{n^2} \\
\EE\big[|\psig(Y_i;\fhatmi,\ghatmi) - \psig(Y_i;\fhatmi', \ghatmi)|
|\psig(Y_j;\fhatmj,\ghatmj) - \psig(Y_j;\fhatmj', \ghatmj)|\big]
  &\leq \frac{2CB^2\|K\|^2_\infty}{n^2}
%
\end{align*}
This leads us to a $\bigO(1/n^2)$ bound for $\EE[(\Tf - \Tf')^2]$,
\begin{align*}
\EE[(\Tf-\Tf')^2] \leq \frac{ 8\|\psif\|^2_\infty 
  + 4\|K\|^2_\infty L^2_\phi L^2_\nu + 
  16CB^2\|K\|^2_\infty}{n^2}
\end{align*}

Now consider, the set of samples
$\{X_1, \dots, X_n, Y_1, \dots, Y_m\}$ and
 $\{X_1, \dots, X_n, Y'_1, \dots, Y_m\}$ and denote the estimates obtained by $\Tf$
and $\Tf'$ respectively. Note that some of the $Y$ instances are repeated but
each point occurs at most $n/m$ times.
The remaining argument is exactly the same except that we need to account for
this repetition. We have,
\begin{align*}
&|\Tf - \Tf'| \leq 
  \frac{n}{m} \frac{1}{n}|\psif(X_1;\fhatmii{1}, \ghatmii{1}) - 
  \psif(X'_1;\fhatmii{1},\ghatmii{1})| \,+\,
  \frac{n}{m} \frac{1}{n}\sum_{i\neq 1} \Big(|T(\fhatmi, \ghatmi) 
      - T(\fhatmi',\ghatmi) | + \\
&\hspace{0.4in} 
  |\psif(X_i;\fhatmi, \ghatmi) -\psif(X_i;\fhatmi', \ghatmi) |   \,+
  |\psig(Y_i;\fhatmi, \ghatmi) -\psig(Y_i;\fhatmi', \ghatmi) |   \Big) 
\numberthis \label{eqn:temptwo}
\end{align*}
And hence,
\begin{align*}
\EE[(\Tf - \Tf')^2] &\leq 
  \frac{\|\psig\|^2_\infty}{m^2} + 
  \frac{n^2}{m^4} 4\|K\|^2_\infty L^2_\phi L^2_\nu 
  + \bigO\left(\frac{n^4}{m^6}\right)
\end{align*}
where the last two terms of~\eqref{eqn:temptwo} are bounded by $\bigO(n^4/m^6)$
after squaring and then taking the expectation.
We have been a bit sloppy by bounding the difference by $n/m$ and not
$\ceil{n/m}$ but it is clear that this doesn't affect the rate.

Finally by the Efron Stein inequality we have 
\[
\VV(\Tf) \in \bigO\left(\frac{1}{n} + \frac{n^4}{m^5}\right)
\]
which is $\bigO(1/n + 1/m)$ if $n$ and $m$ are of the same order. This is the
case if for instance there exists $\zeta_l, \zeta_u \in (0,1)$ such that
$\zeta_l \leq n/m \leq \zeta_u$.

Therefore the mean squared error is 
$\EE[(T - \Tf)^2] \in \bigO(n^{-\frac{4s}{2s+d}} + m^{-\frac{4s}{2s+d}}
+ n^{-1} + m^{-1})$ which completes the proof.
\end{proof}

\section{Proof of Lower Bound (Theorem~\ref{thm:lowerbound})}
\label{sec:appLowerBound}

We will prove the lower bound in the bounded \holder class $\Sigma(s,L,B,B')$
noting that the lower bound also applies to $\Sigma(s,L)$.
Our main tool will be LeCam's method where we reduce the estimation problem to a
testing problem. In the testing problem we construct a set of alternatives satisfying
certain separation properties from the null. For this we will use some
technical results from \citet{birge95estimation} and
\cite{krishnamurthy14renyi}.
First we state LeCam's method below adapted to our setting.
We define the squared Hellinger Divergence between two distributions
$P,Q$ with densities $p, q$ to be
\[
H^2(P,Q) = \int \big(\sqrt{p(x)} - \sqrt{q(x)}\big)^2 \ud x
  = 2 - 2\int p(x)q(x) \ud x
\]
\\[\thmparaspacing]

\begin{theorem}
\label{thm:lecam}
Let $T:\Mcal \times \Mcal \rightarrow \RR$. Consider a parameter space $\Theta
\subset \Mcal \times \Mcal$ such that $(f,g) \in \Theta$ and $(\plambda, \qlambda)
\in \Theta$ for all $\lambda$ in some index set $\Lambda$. 
Denote the distributions of $f,g,\plambda,\qlambda$ by $F,G,\Plambda,\Qlambda$
respectively. 
Define 
$\PQbar = \frac{1}{|\Lambda|} \sum_{\lambda \in \Lambda} \Plambda^n \times 
\Qlambda^m$.
If, there exists $(f,g) \in \Theta$,
 $\gamma < 2$ and $\beta > 0$ such that the following two conditions
are satisfied
\begin{align*}
& H^2(\FnGm, \PQbar) \leq  \gamma \\
& T(\plambda,\qlambda) \geq T(f,g) + 2\beta \;\;\; \forall\; \lambda \in \Lambda
\end{align*}
then,
\[
\inf_{\That} \sup_{(f,g) \in \Theta} 
\PP\left( |\That - T(f,g)| > \beta\right) \frac{1}{2}\left(1 -
\sqrt{\gamma(1-\gamma/4)} \right) > 0.
\]
\begin{proof}
The proof is a straightforward modification of Theorem 2.2 of
\citet{tsybakov08nonparametric} which we provide here 
for completeness.

Let $\Theta_0 = \{(p,q) \in \Theta | T(p,q) \leq T(f,g)\}$ and 
$\Theta_1 = \{(p,q) \in \Theta| T(p,q) \geq T(f,g) + 2\beta\}$. Hence
$(f,g) \in \Theta_0$ and $\pqlambda \in \Theta_1$ for all $\lambda \in \Lambda$.
Given $n$ samples from $p'$ and $m$ samples from $q'$
consider the simple vs simple hypothesis testing problem of 
$H_0:(p',q') \in \Theta_0$ vs $H_1: (p',q') \in \Theta_1$. The probability of error
$p_e$ of any test $\Psi$ test is lower bounded by 
\[
p_e \geq \frac{1}{2}\left(1 - \sqrt{H^2(\FnGm,\PQbar)(1-
H^2(\FnGm,\PQbar))/4}\right).
\]
See Lemma 2.1, Lemma 2.3 and Theorem 2.2 of \citet{tsybakov08nonparametric}.
Therefore,
\[
\inf_\psi \sup_{(p',q')\in \Theta_0, (p'',q'')\in \Theta_0}
p_e \geq \frac{1}{2}\left(1-\sqrt{\gamma(1-\gamma/4)}\right)
\]
If we make an error in the testing problem the error in estimation is least 
$\beta$ in the
estimation problem which completes the proof of the theorem.
\end{proof}
\end{theorem}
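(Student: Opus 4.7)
The plan is to reduce the estimation problem to a composite hypothesis test and then apply the classical two-point (LeCam) lower bound in the form given in Tsybakov's textbook, with the Bayesian-mixture version handling the composite alternative $\{\pqlambda\}_{\lambda \in \Lambda}$.

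First, I would set up the reduction. Define $\Theta_0 = \{(p,q) \in \Theta : T(p,q) \leq T(f,g)\}$ and $\Theta_1 = \{(p,q) \in \Theta : T(p,q) \geq T(f,g) + 2\beta\}$. By the separation hypothesis, $(f,g) \in \Theta_0$ and $(\plambda, \qlambda) \in \Theta_1$ for every $\lambda \in \Lambda$. Given any estimator $\That$ for $T$, build the test
\[
\Psi(\That) = \mathbf{1}\!\left\{\That \geq T(f,g) + \beta\right\}.
\]
If $(p',q') \in \Theta_0$ and $\Psi = 1$, then $|\That - T(p',q')| \geq \beta$; similarly if $(p',q') \in \Theta_1$ and $\Psi = 0$. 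Therefore the supremum over $\Theta$ of $\PP(|\That - T| > \beta)$ is at least the worst-case testing error of $\Psi$ between $\Theta_0$ and $\Theta_1$, and taking infima gives
\[
\inf_{\That} \sup_{(f',g') \in \Theta} \PP(|\That - T(f',g')| > \beta) \;\geq\; \inf_{\Psi} \max\{ p_0(\Psi), \, \sup_{\lambda \in \Lambda} p_{1,\lambda}(\Psi) \},
\]
where $p_0, p_{1,\lambda}$ are the type-I and type-II errors.

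Next, I would handle the composite alternative by the standard Bayesian trick: the supremum over $\lambda$ is at least the average over $\lambda$ under the uniform prior on $\Lambda$. Hence the Bayes risk for the simple-versus-simple problem between $\FnGm$ and the mixture $\PQbar = |\Lambda|^{-1} \sum_{\lambda} \Plambda^n \times \Qlambda^m$ is a lower bound on the minimax testing error. I would then invoke the LeCam--Tsybakov bound (Lemma 2.3 / Theorem 2.2 of Tsybakov 2008), which states that for any two probability measures $\mu, \nu$ on the same space,
\[
\inf_{\Psi} \left\{ \mu(\Psi=1) + \nu(\Psi=0)\right\} \;\geq\; 1 - \sqrt{H^2(\mu,\nu)\bigl(1 - H^2(\mu,\nu)/4\bigr)}.
\]
Applying this with $\mu = \FnGm$ and $\nu = \PQbar$ and using the assumption $H^2(\FnGm, \PQbar) \leq \gamma$, combined with the monotonicity of $x \mapsto x(1-x/4)$ on $[0,2]$, yields that the maximum of the two error probabilities is at least $\tfrac{1}{2}\bigl(1 - \sqrt{\gamma(1-\gamma/4)}\bigr)$, which is strictly positive since $\gamma < 2$.

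The main obstacle, beyond careful bookkeeping of the reduction, is justifying the mixture step: specifically, that the supremum over $\lambda$ of the type-II error of a test is lower bounded by its average over the uniform prior on $\Lambda$, which in turn equals the type-II error of the same test applied to the mixture alternative $\PQbar$ (using linearity of expectation, since the test is a fixed measurable function of the samples). Once this is in place, everything else is plugging standard inequalities into the reduction.
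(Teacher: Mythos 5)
Your proposal is correct and follows essentially the same route as the paper's proof: reduce estimation to testing between $\Theta_0$ and $\Theta_1$, pass to the simple-versus-simple problem against the mixture $\PQbar$, and apply the Hellinger-distance form of LeCam's bound from Tsybakov. You are in fact more explicit than the paper about the threshold test and the Bayesian averaging step, but these are details the paper's argument implicitly relies on, not a different method.
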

\vspace{\thmparaspacing}

Consider the set $\Gamma = \{-1,1\}^\ell$ and a set of densities
$\pgamma = f(1 + \sum_{j=1}^\ell \gamma_j v_j)$ indexed by each $\gamma \in \Gamma$.
Here $f$ is itself a density and the $v_j$'s are perturbations on $f$.
We will also use the following result from 
\citet{birge95estimation} which bounds the
Hellinger divergence between the product distribution $\Fn$ and the mixture
product distribution $\Pbarn = \frac{1}{|\Gamma|} \sum_{\gamma \in \Gamma}
\Pgamman$. \\[\thmparaspacing]

\begin{proposition}
\label{prop:hellinger}
Let $\{R_1, \dots, R_\ell\}$ be a partition of $[0,1]^d$.
Let $\rho_j$ is zero except on $R_j$ and satisfies
$\|\rho_j\|_\infty \leq 1$,  $\int \rho_j f = 0$ and $\int \rho_j^2 f =\alpha_j$.
Further, denote $\alpha = \sum_j\|\rho_j\|_\infty$,
$s = n\alpha^2 \sup_j P(R_j)$ and
$c = n \sup_j \alpha_j$. Then,
\[
H^2(\Fn, \Pbarn) \leq \frac{n^2}{3} \sum_{j=1}^\ell \alpha_j^2.
\]
\end{proposition}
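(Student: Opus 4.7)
The plan is to pass from Hellinger to chi-squared divergence, which decouples cleanly under the product/mixture structure, and then bound the resulting scalar expectation. Since $H^2(P,Q) \leq \chi^2(P,Q)$ for any distributions, it suffices to control $\chi^2(\bar P^n, F^n)$. Write $\bar p^n = \EE_\gamma p_\gamma^n$ and introduce an independent copy $\gamma'$ of $\gamma$:
\[
1 + \chi^2(\bar P^n, F^n) \;=\; \int \frac{(\bar p^n)^2}{f^n} \;=\; \EE_{\gamma,\gamma'}\int \frac{p_\gamma^n p_{\gamma'}^n}{f^n} \;=\; \EE_{\gamma,\gamma'}\left(\int \frac{p_\gamma p_{\gamma'}}{f}\right)^{\!n},
\]
where the last equality uses that all three densities factor over the $n$ coordinates.

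Next, plug in $p_\gamma = f(1 + \sum_j \gamma_j \rho_j)$, and use $\int f\rho_j = 0$ together with the disjoint supports of the $\rho_j$ (so $\int f\rho_j\rho_k = \delta_{jk}\alpha_j$) to simplify the single-coordinate integral to
\[
\int \frac{p_\gamma p_{\gamma'}}{f} \;=\; 1 + \sum_j \gamma_j \gamma'_j \alpha_j.
\]
The products $\epsilon_j := \gamma_j \gamma'_j$ are iid Rademacher, so setting $S_\epsilon := \sum_j \epsilon_j \alpha_j$ gives the compact identity
\[
\chi^2(\bar P^n, F^n) \;=\; \EE_\epsilon \bigl(1 + S_\epsilon\bigr)^n - 1.
\]

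Finally, bound this expectation. Binomially expand and use that all odd moments of the Rademacher sum vanish by sign symmetry, leaving $\chi^2(\bar P^n, F^n) = \sum_{m\geq 1}\binom{n}{2m}\EE[S_\epsilon^{2m}]$. The leading $m=1$ term equals $\binom{n}{2}\sum_j \alpha_j^2$, already of the claimed quadratic order. The main obstacle is folding in the higher-order contributions $m \geq 2$ and sharpening the constant to $\tfrac{1}{3}$: this is precisely where the auxiliary quantities $s = n\alpha^2 \sup_j P(R_j)$ and $c = n \sup_j \alpha_j$ enter. They control the pointwise size of $|S_\epsilon|$ via $\alpha = \sum_j \|\rho_j\|_\infty$ and $n \sup_j \alpha_j$, which in turn lets one bound each higher moment $\EE[S_\epsilon^{2m}]$ geometrically in terms of the previous one, so that the tail $\sum_{m \geq 2}\binom{n}{2m}\EE[S_\epsilon^{2m}]$ sums to a convergent series absorbed into the leading term with an explicit universal constant. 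This Birge--Massart style bookkeeping is the only delicate step of the argument; all the statistical content is already captured in the reduction to the Rademacher moment generating function above.
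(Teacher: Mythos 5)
The paper never proves this proposition at all: it is imported from Birg\'e and Massart (1995), so your argument has to stand on its own, and it does not, for a reason more basic than the one you flag. Your reduction $H^2(\Fn,\Pbarn)\le\chi^2(\Pbarn,\Fn)$ and the Ingster-style second-moment identity are correct as far as they go: with the disjoint supports and $\int\rho_j f=0$ one indeed gets $1+\chi^2(\Pbarn,\Fn)=\EE_\epsilon(1+S_\epsilon)^n$ with $S_\epsilon=\sum_j\epsilon_j\alpha_j$. But the $m=1$ term of your own expansion is $\binom{n}{2}\sum_j\alpha_j^2=\tfrac{n(n-1)}{2}\sum_j\alpha_j^2$, which already exceeds the target $\tfrac{n^2}{3}\sum_j\alpha_j^2$ for $n\ge 3$. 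So the difficulty is not, as you claim, ``folding in the higher-order contributions and sharpening the constant'': even if every $m\ge2$ term vanished, the chi-squared route cannot produce the constant $1/3$. The constant is forfeited irretrievably at the step $H^2\le\chi^2$, and no bookkeeping downstream can recover it. In Birg\'e--Massart the bound is obtained by working directly with the Hellinger affinity $\int\sqrt{f^n\,\bar{p}^n}$ of the mixture, under smallness hypotheses involving precisely the quantities $s$ and $c$ that the statement defines; those quantities are the conditions under which the bound holds, not decoration, and the fact that your proof never genuinely uses them is a symptom of the same problem.

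Separately, your treatment of the $m\ge2$ tail is only a gesture, and it cannot be made universal. Without a quantitative hypothesis such as $c=n\sup_j\alpha_j$ being small, $\EE_\epsilon(1+S_\epsilon)^n$ is not dominated by its quadratic term: the binomial moment series can be driven by the high-order terms (take a single cell with $n\alpha_1$ of order one and the expectation is of constant order, not $O(n^2\alpha_1^2)$ smaller than one), so even a weaker conclusion of the form $Cn^2\sum_j\alpha_j^2$ with an unspecified universal $C$ does not follow from what you wrote. To repair the argument you would either have to carry out the moment comparison explicitly using $s$ and $c$, accepting a worse constant than $1/3$, or abandon the chi-squared relaxation and reproduce the Hellinger-affinity computation of Birg\'e and Massart that the paper is citing.
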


We also use the following technical result from \citet{krishnamurthy14renyi} and adapt it
to our setting. \\[\thmparaspacing]

\begin{proposition}[Taken from \cite{krishnamurthy14renyi}]
\label{prop:construction}
Let $R_1, \dots, R_\ell$ be a partition of $[0,1]^d$  each having size $\ell^{-1/d}$.
There exists functions $u_1,\dots,u_\ell$ such that,
\begin{align*}
&\supp(u_j) \subset \{x| B(x,\epsilon) \subset R_j\}, \;\;\;\;\;\;\;\;
\int u_j^2 \in \Theta(\ell^{-1}), \;\;\;\;\;\;\;\;
\int u_j = 0, \\
&\int \psi_f(x; f,g) u_j(x)  = \int \psi_g(x; f,g) u_j(x) = 0, \;\;\;\;\;\;\;\;
\|D^r u_j\|_\infty \leq \ell^{r/d} \;\; \forall r  \textrm{ s.t } \sum_j r_j \leq s + 1
\end{align*}
where $B(x,\epsilon)$ denotes an $L_2$ ball around $x$ with radius $\epsilon$. Here
$\epsilon$ is any number between $0$ and $1$.
\begin{proof} 
For this we use an orthonormal system of $q \;(>4)$ functions
 on $(0,1)^d$ satisfying  $\phi_1 = 1$, $\supp(\phi_j) \subset [\epsilon,
1-\epsilon]^d$ for any $\epsilon > 0$ and $\|D^r \phi_j\|_\infty \leq J$
for some $J < \infty$.
Now for any given functions $\eta_1, \eta_2$
we can find a function $\upsilon$ such that $\upsilon \in \spann(\{\phi_j\})$,
$\int \upsilon \phi_1 = \int \upsilon \eta_1 = \int \upsilon \eta_2 = 0$.
Write $\upsilon = \sum_i c_j\phi_j$. Then $D^r\upsilon = \sum_j c_j D^r \phi_j$ which
implies $\|D^r\upsilon\|_\infty \leq K\sqrt{q}$. Let $\nu(\cdot) =
\frac{1}{J\sqrt{q}} \upsilon(\cdot)$. Clearly, $\int \nu^2$ is upper and lower
bounded and $\|D^r\nu\|_\infty \leq 1$.

To construct the functions $u_j$, we map $(0,1)^d$ to $R_j$ by appropriately scaling
it. Then, $u_j(x) = \nu(m^{1/d}(x - \jb))$ where $\jb$ is the point corresponding to
$\zero$ after mapping. 
 Moreover let $\eta_1$ be $\psif(\cdot; f,g)$ constrained to $R_j$ (and scaled back to fit
$(0,1)^d$). Let $\eta_2$ be the same with $\psig$. 
Now, $\int_{R_j} u_j^2 = \frac{1}{\ell} \int \nu^2 \in \Theta(\ell^{-1})$.
Also, clearly $\|D^r u_j\| \leq m^{r/d}$. All $5$ conditions above are satisfied.
\end{proof}
\end{proposition}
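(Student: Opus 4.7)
The plan is to first construct a single smooth template function $\nu$ on the unit cube $(0,1)^d$ that simultaneously satisfies zero integral, orthogonality to two given $L^2$ functions, bounded $L^2$ norm, and bounded derivatives, and then obtain each $u_j$ by an affine rescaling of $\nu$ to the cell $R_j$ (with the template allowed to depend on $j$ since the orthogonality constraints depend on $R_j$). Because the $R_j$ have side length $\ell^{-1/d}$, the rescaling $u_j(x) = \nu(\ell^{1/d}(x - \bar x_j))$ automatically yields $\int u_j^2 = \ell^{-1}\int \nu^2 \in \Theta(\ell^{-1})$, concentrates the support well inside $R_j$ (with margin $\epsilon$), and via the chain rule gives $\|D^r u_j\|_\infty = \ell^{|r|/d}\|D^r \nu\|_\infty$. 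Thus the entire problem reduces to building a single $\nu$ on $(0,1)^d$ with bounded $\|D^r \nu\|_\infty$, normalized $L^2$ mass, compact support inside $[\epsilon,1-\epsilon]^d$, and three linear orthogonality constraints.

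To build $\nu$, I would fix any finite orthonormal system $\{\phi_1,\ldots,\phi_q\}$ in $L^2((0,1)^d)$ with $q \geq 4$, each $\phi_i$ smooth, supported in $[\epsilon,1-\epsilon]^d$, with $\phi_1$ proportional to the indicator of its support and $\|D^r \phi_i\|_\infty \leq J$ for all $|r|\leq s+1$ and some absolute constant $J$. Such systems are standard; one can, for example, take tensor products of rescaled orthonormal polynomials multiplied by a fixed $C^\infty$ bump. Writing $\upsilon = \sum_{i=1}^q c_i \phi_i$, the three conditions $\int \upsilon = 0$, $\int \upsilon \cdot \eta_1 = 0$, $\int \upsilon \cdot \eta_2 = 0$ (where $\eta_1, \eta_2$ are the restrictions of $\psi_f(\cdot;f,g)$ and $\psi_g(\cdot;f,g)$ to $R_j$, pulled back and rescaled to the unit cube) form a $3 \times q$ homogeneous linear system in the coefficient vector $c$. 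Since $q \geq 4$, the null space is at least one-dimensional, so a nonzero $c$ exists; normalizing $c$ so that $\|c\|_2 = 1$, Parseval immediately gives $\int \upsilon^2 \asymp 1$, and $\|D^r \upsilon\|_\infty \leq \|c\|_1 \max_i \|D^r \phi_i\|_\infty \leq \sqrt{q}\,J$. Dividing by $J\sqrt{q}$ produces $\nu$ with $\|D^r \nu\|_\infty \leq 1$ and $\int \nu^2 \in \Theta(1)$, as required.

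The main obstacle is verifying the orthogonality to $\psi_f$ and $\psi_g$ in the \emph{original} coordinates, i.e., $\int \psi_f(x;f,g)\,u_j(x)\,dx = 0$, and ensuring that the template can be chosen uniformly across $j$ so that the constants in $\int u_j^2 \asymp \ell^{-1}$ and $\|D^r u_j\|_\infty \leq \ell^{|r|/d}$ do not depend on $j$. Because $u_j$ is supported in $R_j$, this integral equals $\int_{R_j} \psi_f(x;f,g)\,u_j(x)\,dx$, which, after the change of variables $y = \ell^{1/d}(x-\bar x_j)$, becomes $\ell^{-1}\int \tilde\psi_f(y)\,\nu(y)\,dy$ where $\tilde\psi_f$ is the pullback of $\psi_f\mid_{R_j}$ to $(0,1)^d$. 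Choosing $\eta_1 = \tilde\psi_f$ and $\eta_2 = \tilde\psi_g$ in the construction makes this vanish by design. The uniformity across $j$ follows because the coefficient vector $c$ depends on $j$ only through the $3\times q$ Gram-type system with entries $\int \phi_i, \int \phi_i \tilde\psi_f, \int \phi_i \tilde\psi_g$, whose entries are uniformly bounded by the $L^\infty$ bounds on $\psi_f,\psi_g$ on the bounded H\"older class; hence one can pick a unit-norm $c$ in the null space whose coordinates, and therefore the resulting bounds on $\nu$, are controlled independently of $j$.
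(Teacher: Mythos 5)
Your construction is correct and is essentially the paper's own argument: build a finite orthonormal system of smooth, compactly supported functions on the unit cube with uniformly bounded derivatives, pick a nonzero element of its span orthogonal to the constant and to the pulled-back influence functions $\psi_f,\psi_g$ (possible since $q\geq 4$ exceeds the three linear constraints), normalize, and rescale to each cell $R_j$ to get the stated $L_2$, support, and derivative bounds. The only additions beyond the paper's proof are your explicit remarks on the change of variables and on uniformity of the constants over $j$, which simply make explicit what the paper leaves implicit.
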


We now have all necessary ingredients to prove the lower bound.

\begin{proof}[Proof of Theorem~\ref{thm:lowerbound}]
To apply Theorem~\ref{thm:lecam} we will need to construct the set of alternatives
$\Lambda$ which contains tuples $\pqlambda$ that satisfy the conditions of
Theorem~\ref{thm:lecam}. 
First apply Proposition~\ref{prop:construction} with $\ell = \ell_1$ to obtain the
index set $\Gammat = \{-1,1\}^{\ell_1}$ and the functions $u_1,\dots,u_{\ell_1}$.
Apply it again with $\ell = \ell_2$ to obtain the
index set $\Delta = \{-1,1\}^{\ell_2}$ and the functions $v_1,\dots, v_{\ell_2}$.
Define $\Gamma,\Delta$ be the following set of functions which are perturbed around
$f$ and $g$ respectively,
\begin{align*}
\Gamma &= \Big\{\pgamma = f + K_1 \sum_{j=1}^{\ell_1} \gamma_j u_j | \gamma \in \Gammat
  \Big\}
\\
\Delta &= \Big\{\qdelta = g + K_2 \sum_{j=1}^{\ell_2} \delta_j v_j | \delta \in \Deltat
  \Big\}
\end{align*}
Since the perturbations in Proposition~\ref{prop:construction} 
are condensed into the small $R_j$'s it invariably violates
the \holder assumption. 
The scaling $K_1$ and $K_2$ are necessary to shrink the perturbation and ensure that
$\pgamma, \qdelta \in \Sigma(s,L)$.
By following essentially an identical argument to 
\cite{krishnamurthy14renyi} (Section E.2) we have that $\pgamma \in \Sigma(s,L)$ if
$K \asymp \ell_1^{-s/d}$ and $\qdelta \in \Sigma(s,L)$ if $K_2 \asymp \ell_2^{-s/d}$.
We will set $\ell_1$ and $\ell_2$ later on to obtain the required rates.
For future reference denote 
$\Pbarn = \frac{1}{|\Gamma|} \sum_{\gamma \in \Gamma} \Pgamma^n$ and
$\Qbarm = \left(\frac{1}{|\Delta|} \sum_{\delta \in\Delta} \Qdelta^m\right) $.

Now our set of  alternatives are formed by the product of $\Gamma$ and $\Delta$
\[
\Lambda = \Gamma \times \Delta = \left\{ (\pgamma,\qdelta) | \pgamma \in \Gamma, 
\qdelta \in \Delta \right\} 
\]

First note that for any $\pqlambda = (\pgamma,\qdelta) \in \Lambda$, 
by  the second order functional
Taylor expansion we have,
\begin{align*}
T\pqlambda &= T(f,g) + \int \psif(x;f,g) \plambda + \int \psig(x; f,g) \qlambda
  + R_2
\end{align*}
By Lemma~\ref{thm:infFunDeriv} and the construction
 the first order terms vanish since,
\[
\int \psif(x; f,g) \left(f + K_1 \sum_j \gamma_j u_j\right)
= K_1 \sum_j\gamma_j \int \psif(x;f,g) u_j = 0.
\]
The same is true for $\int \psig(x; f,g)$. The second order term can be upper bounded
by 
\begin{align*}
R_2 &= \phi''\left(\int \nu(f^*,g^*)\right) \bigg(
  \int \frac{\partial^2 \nu(f^*(x), g^*(x))}{\partial f^2(x)} (\plambda - f)^2 +
  \int \frac{\partial^2 \nu(f^*(x), g^*(x))}{\partial g^2(x)} (\qlambda - g)^2 + \\
  &\hspace{0.4in}
  2\int \frac{\partial^2 \nu(f^*(x), g^*(x))}{\partial g(x)\partial g(x)} 
    (\plambda - f)(\qlambda - g) \bigg) \\
  &\geq \sigma_{\min}\left( \|\plambda - f\|^2 + \|\qlambda - g\|^2 \right)
  \;\geq \sigma_{\min}\left(K_1^2 + K_2^2\right)
\end{align*}
For the second step note that $(f^*,g^*)$ lies in line segment between $(\plambda,
\qlambda)$ and $(f,g)$ and is therefore both upper and lower bounded. Therefore, 
the Hessian evaluated at $(f^*, g^*)$ is strictly positive definite
with some minimum eigenvalue $\sigma_{min}$.
For the third step we have used that $(\plambda - f, \qlambda - g) = 
  (K_1\sum_{j=1}^{\ell_1} \gamma_j u_j, K_2\sum_{j=1}^{\ell_2} \delta_j v_j)$
and that the $u_j$'s are orthonormal and $\|u_j\|_2 = 1$.
This establishes the $2\beta$ separation between the null and the alternative as
required by Theorem~\ref{thm:lecam} with 
$\beta = \sigmamin(K_1^2 + K_2^2)/2$. Precisely,
\[
T\pqlambda \geq T(f,g) + \bigO(\ell_1^{-2s/d} + \ell_2^{-2s/d}) 
\]

Now we need to bound the Hellinger separation, between $\FnGm$ and $\PQbar$.
First note that by our construction,
\[
\PQbar = \frac{1}{|\Lambda|} \sum_{\lambda \in \Lambda} \Plambda^n \times \Qlambda^m
 = \left(\frac{1}{|\Gamma|} \sum_{\gamma \in \Gamma} \Pgamma^n\right) \times
 \left(\frac{1}{|\Delta|} \sum_{\delta \in\Delta} \Qdelta^m\right) 
  = \Pbarn \times \Qbarm
\]
By the tensorization property of the Hellinger affinity we have,
\begin{align*}
H^2(\FnGm, \PQbar)  
  &= 2\left( 1 - \left( 1- \frac{H^2(\Fn,\Pbarn)}{2}\right)
      \left( 1- \frac{H^2(\Gm,\Qbarm)}{2}\right) \right)  \\
  &\leq H^2(\Fn,\Pbarn) + H^2(\Gm,\Qbarm)
\end{align*}
We now apply Proposition~\ref{prop:hellinger} to bound each Hellinger divergence.
If we  denote $\rho_j(\cdot) = K_1 u_j(\cdot)/f(\cdot)$ then we see that the
$\rho_j$'s satisfy the conditions of the proposition and further $\pgamma = f(1 +
\sum_j \gamma_j \rho_j)$ allowing us to use the bound. 
Accordingly $\alpha_j = \int \rho_j^2 f \leq C K_1^2 /\ell_1 $ for some $C$.
Hence, 
\[
H^2(\Fn,\Pbarn) \leq \frac{n^2}{3} \sum_{j=1}^m \alpha_j^2
 \leq \frac{C n^2 K_1^4}{\ell_1} \in \bigO( n^2 \ell_1^{-\frac{4s+d}{d}}). 
\]
A similar argument yields
$H^2(\Gm,\Qbarm) \in \bigO(m^2 \ell_2^{-\frac{4s+d}{d}})$. 
If we pick $\ell_1 = n^{\frac{2d}{4s+d}}$ and $\ell_2 = m^{\frac{2d}{4s+d}}$
and hence $K_1 = n^{\mintwos}$ and $K_2 = m^{-\mintwos}$, then we have
that the Hellinger separation is bounded by a constant.
\[
H^2(\FnGm,\PQbar)
  \leq H^2(\Fn,\Pbarn) + H^2(\Gm,\Qbarm) \in \bigO(1)
\]
Further, the error is larger than $\beta \asymp K_1^s + K_2^2 
\asymp n^{\minfours} + m^{\minfours}$.

The first part of the lower bound for $\tau = 8s/(4s+d)$ 
is concluded by Markov's inequality,
\[
\frac{\EE\big[(\That - T(f,g))^2]}{ (n^{-\tau/2} + m^{-\tau/2})^2 }
\leq \PP\left(|\That - T(f,g)| > (n^{-\tau/2} + m^{-\tau/2}) \right) > c
\]
where we note that $(n^{-\tau/2} + m^{-\tau/2})^2 \asymp n^{-\tau} + m^{-\tau}$.
The $n^{-1} + m^{-1}$ lower bound is straightforward as as we cannot do better than the
the parametric rate \cite{bickel1988estimating}. 
See \cite{krishnamurthy14renyi} for an proof that uses a contradiction argument in
the setting $n=m$. 
\end{proof}

\section{An Illustrative Example - The Conditional Tsallis Divergence}
\label{sec:workedExample}

In this section we present a step by step guide on applying our framework to
estimating any desired functional.
We choose the Conditional Tsallis divergence because pedagogically it is a
good example in Table~\ref{tb:functionalDefns} to illustrate the technique.
By following a similar procedure, one may derive an estimator for any desired
functional.
The estimators are derived in Section~\ref{sec:tsallisEstimator} and
in Section~\ref{sec:tsallisAnalysis} we discuss conditions for the theoretical
guarantees and asymptotic normality.

The Conditional \tsallis divergence
($\alpha \neq 0, 1$) between $X$ and $Y$ conditioned on $Z$  can be written
in terms of joint densities $\pxz, \pyz$. 
\ifthenelse{\boolean{istwocolumn}}
  {
  \begin{align*}
  &\tsalliscd(p_{X|Z} \| p_{Y|Z}; p_Z) = \tsalliscd(\pxz, \pyz) =  \\
   &=\int  p_Z(z)\frac{1}{\alpha-1}\left(\int p_{X|Z}^{\alpha}(u,z)
      p_{Y|Z}^{1-\alpha}(u,z) \ud u -1\right) \ud z  \\
    &= \frac{1}{1- \alpha} + \frac{1}{\alpha-1}
    \int p^{\alpha}_{XZ}(u,z) p^{\beta}_{YZ}(u,z) \ud u \ud z
  \end{align*}
  }
  {
  \begin{align*}
  \tsalliscd(p_{X|Z} \| p_{Y|Z}; p_Z) = \tsalliscd(\pxz, \pyz) 
   &=\int  p_Z(z)\frac{1}{\alpha-1}\left(\int p_{X|Z}^{\alpha}(u,z)
      p_{Y|Z}^{1-\alpha}(u,z) \ud u -1\right) \ud z  \\
    &= \frac{1}{1- \alpha} + \frac{1}{\alpha-1}
    \int p^{\alpha}_{XZ}(u,z) p^{\beta}_{YZ}(u,z) \ud u \ud z
  \end{align*}
  }
where we have taken $\beta = 1-\alpha$. We have samples
$V_i = (X_i, Z_{1i}) \sim \pxz, i = 1, \dots, n$
and $W_j = (Y_j,Z_{1j})\sim \pyz, j=1, \dots, m$
We will assume $\pxz, \pyz \in \Sigma(s,L,B', B)$.
For brevity, we will write $p = (\pxz, \pyz)$ and $\phat = (\pxzhat, \pyzhat)$.

\subsection{The Estimators}
\label{sec:tsallisEstimator}

We first compute  the influence functions of $\tsalliscd$ and the use
it to derive the \ds/\loos estimators. \\[\thmparaspacing]

\begin{proposition}[Influence Functions of $\tsalliscd$]
\label{thm:tsalliscdInfFun}
The influence functions of $\tsalliscd$ w.r.t $\pxz$, $\pyz$ are
\begingroup
\allowdisplaybreaks
\ifthenelse{\boolean{istwocolumn}}
  {
  \begin{align*}
  &\psixz(X, Z_1; \pxz, \pyz) = \numberthis \label{eqn:tsalliscdInfFun} \\
  &\frac{\alpha}{\alpha-1}  \left(
    \pxz^{\alpha-1}(X,Z_1)\pyz^{\beta}(X,Z_1) - \int \pxz^\alpha\pyz^\beta  \right) \\
  &\psiyz(Y, Z_2; \pxz, \pyz) = \\
  & - \left(
  \pxz^{\alpha}(Y,Z_2)\pyz^{\beta-1}(Y,Z_2) - \int \pxz^\alpha\pyz^\beta  \right)
  \end{align*}
  }
  {
  \begin{align*}
  \psixz(X, Z_1; \pxz, \pyz) &= \numberthis \label{eqn:tsalliscdInfFun} 
  \frac{\alpha}{\alpha-1}  \left(
    \pxz^{\alpha-1}(X,Z_1)\pyz^{\beta}(X,Z_1) - \int \pxz^\alpha\pyz^\beta  \right) \\
  \psiyz(Y, Z_2; \pxz, \pyz) &= 
   - \left(
  \pxz^{\alpha}(Y,Z_2)\pyz^{\beta-1}(Y,Z_2) - \int \pxz^\alpha\pyz^\beta  \right)
  \end{align*}
  }
\endgroup
\begin{proof}
Recall that we can derive the influence functions via
$\psixz(X,Z_1; p) = \tsalliscd'_{XZ}(\delta_{X,Z_1}-\pxz; p)$,
$\psiyz(Y,Z_2; p) = \tsalliscd'_{YZ}(\delta_{X,Z_2}-\pyz; p)$
where $\tsalliscd'_{XZ}, \tsalliscd'_{YZ}$ are the \gateaux derivatives of
$\tsalliscd$
w.r.t $\pxz, \pyz$ respectively. Hence,
\begingroup
\allowdisplaybreaks
\ifthenelse{\boolean{istwocolumn}}
  {
  \begin{align*}
  &\psixz(X,Z_1) = \\
  & \frac{1}{\alpha-1} \frac{\partial}{\partial t}
  \int ((1-t)\pxz + t\delta_{X Z_1})^\alpha \pyz^\beta \Big|_{t=0} \\
  & \frac{\alpha}{\alpha-1}\int\pxz^{\alpha-1} \pyz^\beta (\delta_{X Z_1} - \pxz) 
  \end{align*}
  }
  {
  \begin{align*}
  \psixz(X,Z_1) 
  &= \frac{1}{\alpha-1} \frac{\partial}{\partial t}
  \int ((1-t)\pxz + t\delta_{X Z_1})^\alpha \pyz^\beta \Big|_{t=0} \\
  &= \frac{\alpha}{\alpha-1}\int\pxz^{\alpha-1} \pyz^\beta (\delta_{X Z_1} - \pxz) 
  \end{align*}
  }
\endgroup
from which the result follows. Deriving $\psiyz$ is similar.
Alternatively, we can directly  show that $\psixz, \psiyz$ in
Equation~\eqref{eqn:tsalliscdInfFun}
satisfy Definition~\ref{def:infFun}.
\end{proof}
\end{proposition}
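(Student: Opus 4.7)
The plan is to apply the Gateaux-derivative characterization~\eqref{eqn:infFunGateaux} directly, once for each argument of $\tsalliscd(\pxz, \pyz)$, and then to verify that the resulting functions obey Definition~\ref{def:infFun}. Since the functional decomposes as $\tsalliscd(\pxz, \pyz) = \frac{1}{1-\alpha} + \frac{1}{\alpha-1}\int \pxz^\alpha \pyz^\beta$ with $\beta = 1-\alpha$, only the integral term carries any derivative information, so the computation reduces to differentiating an integral of a product of perturbed powers.

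For $\psixz$, I would perturb only the first argument and set $p_t = (1-t)\pxz + t\delta_{X,Z_1}$. Then
\begin{align*}
\psixz(X,Z_1; \pxz,\pyz)
&= \partialfracat{t}{\tsalliscd(p_t, \pyz)}{0}
 = \frac{1}{\alpha-1} \partialfracat{t}{\int p_t^\alpha \pyz^\beta}{0} \\
&= \frac{\alpha}{\alpha-1}\int \pxz^{\alpha-1}(u,z)\,\pyz^{\beta}(u,z)\,(\delta_{X,Z_1} - \pxz)(u,z)\,\ud u\,\ud z.
\end{align*}
Splitting the integral using the sifting property of $\delta_{X,Z_1}$ yields the claimed expression. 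The same argument applied to the second slot, with $q_t = (1-t)\pyz + t\delta_{Y,Z_2}$, produces $\frac{\beta}{\alpha-1}(\pxz^\alpha \pyz^{\beta-1})(Y,Z_2) - \frac{\beta}{\alpha-1}\int \pxz^\alpha\pyz^\beta$, and the identity $\beta/(\alpha-1) = -1$ gives the stated form of $\psiyz$.

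The remaining point, which the excerpt flags as needing attention whenever formula~\eqref{eqn:infFunGateaux} is applied to functionals defined only on non-atomic densities, is that the two functions obtained this way really do represent the Gateaux derivatives as integrals against the perturbing measure, i.e.\ that $\tsalliscd'_{XZ}(Q_1-\pxz; \pxz,\pyz) = \int \psixz\,\ud Q_1$ and likewise for $\psiyz$. I would verify this by repeating the computation with $p_t = (1-t)\pxz + tq_1$ for an arbitrary admissible $q_1$: the derivative at $t = 0$ is $\frac{\alpha}{\alpha-1}\int \pxz^{\alpha-1}\pyz^\beta (q_1 - \pxz)$, which equals $\int \psixz(u,z;\pxz,\pyz)\,q_1(u,z)\,\ud u\,\ud z$ after using $\EE_{\pxz}[\psixz] = 0$ (a consequence of the Riesz representation, or checked directly by integrating the bracketed expression against $\pxz$). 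An analogous check handles $\psiyz$.

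I do not anticipate a real obstacle: the entire argument is elementary interchange of differentiation and integration, valid because $\pxz, \pyz \in \Sigma(s,L,B',B)$ are bounded above and below, so $\pxz^{\alpha-1}\pyz^\beta$ and $\pxz^\alpha \pyz^{\beta-1}$ are uniformly integrable and the dominated convergence theorem justifies passing $\partial_t$ inside the integral. The only place where one must be slightly careful is in handling the $\alpha < 1$ case where powers blow up near zero, but the lower bound $B'$ on the densities rules this out.
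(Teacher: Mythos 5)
Your proposal follows essentially the same route as the paper: compute the \gateaux derivative of the integral term by perturbing each argument with a Dirac delta via~\eqref{eqn:infFunGateaux}, use the sifting property, and note that the result can also be checked directly against Definition~\ref{def:infFun}. Your added remarks on the zero-mean property, the arbitrary-perturbation check, and dominated convergence under the boundedness assumption are correct and only make explicit what the paper leaves implicit.
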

\vspace{\thmparaspacing}

\textbf{\dss estimator}:
Use
$\Vone, \Wone$ to construct density estimates $\pxzhatone, \pyzhatone$ for $\pxz,
\pyz$. Then, use $\Vtwo, \Wtwo$ to add the sample means of the influence
functions given in
Theorem~\ref{thm:tsalliscdInfFun}. This results in our preliminary estimator, 
\begingroup
\allowdisplaybreaks
\ifthenelse{\boolean{istwocolumn}}
  {
  \begin{align*}
  \tsalliscdestimss{1} &= \frac{1}{1 - \alpha} + \frac{\alpha}{\alpha-1}
    \frac{2}{n}\nsumsechalf
    \left( \frac{\pxzhatone(X_i, Z_{1i})}{\pyzhatone(X_i, Z_{1i})} \right)^{\alpha-1} \\
  &\hspace{0.2in}  - \frac{2}{m}\msumsechalf
    \left( \frac{\pxzhatone(Y_j, Z_{2j})}{\pyzhatone(Y_j, Z_{2j})} \right)^{\alpha}
    \numberthis.
  \label{eqn:tsalliscdestim}
  \end{align*}
  }
  {
  \begin{align*}
  \tsalliscdestimss{1} &= \frac{1}{1 - \alpha} + \frac{\alpha}{\alpha-1}
    \frac{2}{n}\nsumsechalf
    \left(\frac{\pxzhatone(X_i, Z_{1i})}{\pyzhatone(X_i, Z_{1i})} \right)^{\alpha-1} 
    - \frac{2}{m}\msumsechalf
    \left(\frac{\pxzhatone(Y_j, Z_{2j})}{\pyzhatone(Y_j, Z_{2j})} \right)^{\alpha}
    \numberthis
  \label{eqn:tsalliscdestim}
  \end{align*}
  }
\endgroup
The final estimate is 
$\tsalliscdestimds = (\tsalliscdestimss{1} + \tsalliscdestimss{2})/2$
where $\tsalliscdestimss{2}$ is obtained by swapping the two samples.

\textbf{\loos Estimator:}
Denote the density estimates of $\pxz, \pyz$ without the $i$\superscript{th} sample by
$\pxzhatmi$ and $\pyzhatmi$. Then the \loos estimator is,
\begin{equation}
\tsalliscdestim = \frac{1}{1-\alpha} + 
  \frac{\alpha}{\alpha-1} \frac{1}{n} \nsumwhole 
    \left(\frac{\pxzhatmi(X_i, Z_{1i})}{\pyzhat(X_i, Z_{1i})} \right)^{\alpha-1} 
   - \left(\frac{\pxzhat(Y_i, Z_{2i})}{\pyzhatmi(Y_i, Z_{2i})} \right)^{\alpha}
\label{eqn:tsalliscdlooestim}
\end{equation}

\subsection{Analysis and Asymptotic Confidence Intervals}
\label{sec:tsallisAnalysis}

We begin with a functional Taylor expansion of $\tsalliscd(f,g)$ around
$(f_0, g_0)$. Since $\alpha, \beta \neq 0, 1$,
we can bound the second order terms by  $O\left( \|f-f_0\|^2 + \|g-g_0\|^2
\right)$. 
\ifthenelse{\boolean{istwocolumn}}
  {
  \begin{align*}
  \tsalliscd(f,g) &=
    \tsalliscd(f_0, g_0) +
    \frac{\alpha}{\alpha - 1} \int f_0^{\alpha-1}g_0^\beta + \numberthis 
  \label{eqn:tsalliscdvme} \\
    & - \int f_0^{\alpha}g_0^{\beta-1} +
    O\left( \|f-f_0\|^2 + \|g-g_0\|^2 \right)
  \end{align*}
  }
  {
  \begin{align*}
  \tsalliscd(f,g) &=
    \tsalliscd(f_0, g_0) +
    \frac{\alpha}{\alpha - 1} \int f_0^{\alpha-1}g_0^\beta  \numberthis 
  \label{eqn:tsalliscdvme} 
     - \int f_0^{\alpha}g_0^{\beta-1} +
    O\left( \|f-f_0\|^2 + \|g-g_0\|^2 \right)
  \end{align*}
  }
Precisely, the second order
remainder is,
\ifthenelse{\boolean{istwocolumn}}
  {
  \begin{align*}
    &\frac{\alpha^2}{\alpha - 1} \int f_*^{\alpha-2}g_*^\beta (f - f_0)^2
    -\beta \int f_*^{\alpha}g_*^{\beta-2} (g - g_0)^2 \\
    &+\frac{\alpha\beta}{\alpha -1} \int f_*^{\alpha-1}g_*^{\beta} (f-f_0)(g-g_0) 
  \end{align*}
  }
  {
  \begin{align*}
    &\frac{\alpha^2}{\alpha - 1} \int f_*^{\alpha-2}g_*^\beta (f - f_0)^2
    -\beta \int f_*^{\alpha}g_*^{\beta-2} (g - g_0)^2 
    +\frac{\alpha\beta}{\alpha -1} \int f_*^{\alpha-1}g_*^{\beta} (f-f_0)(g-g_0) 
  \end{align*}
  }
where $(f_*,g_*)$ is in the line segment between $(f,g)$ and $(f_0,g_0)$.
If $f,g,f_0,g_0$ are bounded above and below so are $f_*, g_*$ and
 $ f_*^a g_*^b $ where $a,b$ are coefficients depending on $\alpha$.
The first two terms are respectively 
$O\left( \|f-f_0\|^2\right)$, $O\left(\|g-g_0\|^2 \right)$.
The cross term can be bounded via,
$\abr{\int (f-f_0)(g-g_0) } \leq \int \max\{ |f-f_0|^2, |g-g_0|^2 \} \in 
O( \|f-f_0\|^2 + \|g-g_0\|^2)$. 

As mentioned earlier, the boundedness of the densities give us the required rates
given in Theorems~\ref{thm:convTwoLoo} for both estimators.

For the \dss estimator, 
to show asymptotic normality, we need to verify the conditions in
Theorem~\ref{thm:asympNormalTwoDistro}.
We state it formally below, but prove it at the end of this section. \\

\begin{corollary}
\label{thm:tsalliscdAsympNormal}
Let $\pxy,\pxz \in \Sigma(s, L, B, B')$. Then $\tsalliscdestimds$
 is asymptotically normal when $\pxz \neq \pyz$ and  $s>d/2$.
\label{thm:condTsallisAsympNormal}
\end{corollary}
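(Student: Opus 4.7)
The plan is to verify, for $T = \tsalliscd$, each of the hypotheses of Lemma~\ref{thm:asympNormalTwoDistro} (the asymptotic normality statement for the DS estimator of a two-distribution functional), whence the conclusion follows immediately. Those hypotheses are: (i) bounded second derivative of $T$ and bounded variance of the two influence functions; (ii) the regularity condition of Assumption~\ref{asm:infFunRegularity}; (iii) $\|\pxzhat-\pxz\|_2 \in o_P(n^{-1/4})$ and $\|\pyzhat-\pyz\|_2 \in o_P(m^{-1/4})$; and (iv) $\psi_{XZ}, \psi_{YZ} \not\equiv \zero$.

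Condition (iii) is the easiest: since $\pxz,\pyz\in\Sigma(s,L,B,B')$ with $s>d/2$, the truncated KDE attains the minimax $L_2$ rate $O_P(n^{-s/(2s+d)})$, and the inequality $s/(2s+d) > 1/4$ is equivalent to $s > d/2$, so the required $o_P(n^{-1/4})$ (resp.\ $o_P(m^{-1/4})$) rates hold. For (i), note that the explicit forms of $\psi_{XZ}$ and $\psi_{YZ}$ in \eqref{eqn:tsalliscdInfFun} are products of bounded powers of bounded densities, and the second-order remainder of the functional Taylor expansion of $\tsalliscd$ around any $(f_0,g_0)\in\Sigma(s,L,B,B')^2$ was already shown, just after \eqref{eqn:tsalliscdvme}, to be $O(\|f-f_0\|^2 + \|g-g_0\|^2)$; boundedness of the variance of $\psi_{XZ},\psi_{YZ}$ follows from the pointwise boundedness.

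For (ii), I expand $\psi_{XZ}(v;\phat)-\psi_{XZ}(v;p)$ as a sum of two pieces: the pointwise difference $\pxzhat^{\alpha-1}\pyzhat^\beta - \pxz^{\alpha-1}\pyz^\beta$ evaluated at $v$, and the difference of the integrals $\int \pxzhat^\alpha\pyzhat^\beta - \int\pxz^\alpha\pyz^\beta$. By adding and subtracting a hybrid term such as $\pxzhat^{\alpha-1}\pyz^\beta$ in each piece and invoking Lemma~\ref{lem:densitypowers} together with the uniform $[B',B]$ bounds, both pieces are controlled in $L_2(\pxz)$ by a constant multiple of $\|\pxzhat-\pxz\|_2 + \|\pyzhat-\pyz\|_2$; squaring gives the required $O(\|\pxzhat-\pxz\|^2 + \|\pyzhat-\pyz\|^2)$. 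The analogous bound for $\psi_{YZ}$ is identical up to swapping roles of $\pxz,\pyz$ and of $\alpha,\beta$.

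The remaining, and most delicate, step is (iv). I would argue by contradiction: suppose $\psi_{XZ}(\cdot;p)\equiv 0$ Lebesgue-a.e.\ on the support (which coincides with $\pxz$-a.s.\ since $\pxz \geq B' > 0$). From \eqref{eqn:tsalliscdInfFun} this forces $\pxz^{\alpha-1}(v)\pyz^{\beta}(v)$ to be constant; using $\beta = 1-\alpha$, this is equivalent to $(\pxz/\pyz)^{\alpha-1}$ being constant, and since $\alpha \neq 1$ we obtain $\pxz = c'\pyz$ for some constant $c' > 0$. Integrating both sides over the common domain forces $c' = 1$, so $\pxz = \pyz$, contradicting the hypothesis. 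A symmetric argument rules out $\psi_{YZ} \equiv 0$. The main obstacle is really just this degeneracy step — it is where the hypothesis $\pxz \neq \pyz$ enters essentially — but the ratio argument above is clean. Combining all four items with Lemma~\ref{thm:asympNormalTwoDistro} delivers the stated asymptotic normality of $\tsalliscdestimds$, with asymptotic variance given by $\zeta^{-1}\VV_{\pxz}[\psi_{XZ}(V;p)] + (1-\zeta)^{-1}\VV_{\pyz}[\psi_{YZ}(W;p)]$.
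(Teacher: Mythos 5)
Your proposal is correct and follows essentially the same route as the paper: verify the hypotheses of the data-split asymptotic normality lemma (regularity of the influence functions via boundedness and Lemma~\ref{lem:densitypowers}, bounded variance, the $o_P(n^{-1/4})$ KDE rate from $s>d/2$, and non-degeneracy), then invoke that lemma. Your contradiction argument for step (iv) is in fact somewhat more explicit than the paper's brief remark that $\psixz$ ``depends explicitly on $X,Z$ and is nonzero'' when $\pxz \neq \pyz$, but it is the same underlying observation.
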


Finally, to construct a confidence interval we need a consistent
estimate of the asymptotic variance : 
$\frac{1}{\zeta} \VV_{XZ}\left[ \psixz(V; p) \right]
+ \frac{1}{1-\zeta} \VV_{YZ} \left[ \psiyz(W; p) \right]$ where,
\begingroup
\allowdisplaybreaks
\ifthenelse{\boolean{istwocolumn}}
  {
  \begin{align*}
  & \VV_{XZ}\left[ \psixz(X, Z_1; \pxz, \pyz) \right] = \\ 
  & \left( \frac{\alpha}{\alpha-1} \right)^2 \left( \int \pxz^{2\alpha-1}
  \pyz^{2\beta} - \left( \int \pxz^\alpha \pyz^\beta \right)^2 \right) \\
  & \VV_{YZ}\left[ \psiyz(Y, Z_2; \pxz, \pyz) \right] =  \\
  & \left( \int \pxz^{2\alpha}
  \pyz^{2\beta-1} - \left( \int \pxz^\alpha \pyz^\beta \right)^2 \right)
  \end{align*}
  }
  {
  \begin{align*}
   \VV_{XZ}\left[ \psixz(X, Z_1; \pxz, \pyz) \right] &=
  \left( \frac{\alpha}{\alpha-1} \right)^2 \left( \int \pxz^{2\alpha-1}
  \pyz^{2\beta} - \left( \int \pxz^\alpha \pyz^\beta \right)^2 \right) \\
  \VV_{YZ}\left[ \psiyz(Y, Z_2; \pxz, \pyz) \right] &=  
  \left( \int \pxz^{2\alpha}
  \pyz^{2\beta-1} - \left( \int \pxz^\alpha \pyz^\beta \right)^2 \right)
  \end{align*}
  }
\endgroup
From our analysis above, 
we know that any 
functional of the form $S(a,b) = \int \pxz^a \pyz^b$, $a+b=1, a,b\neq 0,1$
can be estimated via a \loos estimate
\begin{align*}
\widehat{S}(a,b) =
\frac{1}{n}\nsumwhole a \frac{\widehat{p}^b_{YZ,-i}(V_i)}{\widehat{p}^b_{XZ,-i}(V_i)} 
   +  b \frac{\widehat{p}^a_{XZ,-i}(W_i)}{\widehat{p}^a_{YZ,-i}(W_i)} 
\end{align*}
where $\pxzhatmi,\pyzhatmi$ are the density estimates from 
$V_{-i},W_{-i}$ respectively.
$n/N$ is a consistent estimator for $\zeta$.
 This gives the following estimator for the asymptotic variance,
\ifthenelse{\boolean{istwocolumn}}
  {
  \begin{align*}
  &\frac{N}{n}\frac{\alpha^2}{(\alpha-1)^2} \widehat{S}(2\alpha-1, 2\beta)  +
  \frac{N}{m} \widehat{S}(2\alpha, 2\beta-1) \\
  &- 
   \frac{N (m\alpha^2 + n(\alpha-1)^2)}{nm(\alpha-1)^2}\widehat{S}^2(\alpha, \beta).
  \end{align*}
  }
  {
  \begin{align*}
  \frac{N}{n}\frac{\alpha^2}{(\alpha-1)^2} \widehat{S}(2\alpha-1, 2\beta)  +
  \frac{N}{m} \widehat{S}(2\alpha, 2\beta-1) - 
   \frac{N (m\alpha^2 + n(\alpha-1)^2)}{nm(\alpha-1)^2}\widehat{S}^2(\alpha, \beta).
  \end{align*}
  }
The consistency of this estimator follows from the consistency of
$\widehat{S}(a,b)$ for $S(a,b)$, Slutzky's theorem and the
continuous mapping theorem.

\begin{proof}[Proof of Corollary~\ref{thm:condTsallisAsympNormal}]
We now prove that the \dss estimator satisfies the necessary conditions for
asymptotic normality. We begin by showing that $\tsalliscd$'s influence functions
satisfy the regularity condition~\ref{asm:infFunRegularity}.
We will show this for $\psiyz$. The proof for $\psixz$ is similar.
Consider two pairs of densities $(f,g)$ $(f',g')$ on the $(XZ,YZ)$ spaces.
\begingroup
\allowdisplaybreaks
\begin{align*}
& \int \left( \psixz(u; f, g) - \psixz(u; f', g') \right)^2 f\\
&\hspace{0.2in}= \frac{\alpha^2}{(1-\alpha)^2}
\int \left( f^{\alpha-1}g^\beta - \int f^\alpha g^\beta
  -\left[ f'^{\alpha-1}g'^\beta - \int f'^\alpha g'^\beta \right]
  \right)^2 f \\
&\hspace{0.2in}\leq 2\frac{\alpha^2}{(1-\alpha)^2} \left[
\int \left(f^{\alpha-1}g^\beta - f'^{\alpha-1}g'^\beta \right)^2f + 
\left( \int f^\alpha g^\beta - \int f'^\alpha g'^\beta \right)^2
\right] \\
&\hspace{0.2in}\leq 2\frac{\alpha^2}{(1-\alpha)^2} \left[
\int \left(f^{\alpha-1}g^\beta - f'^{\alpha-1}g'^\beta \right)^2f + 
\int \left( f^\alpha g^\beta - f'^\alpha g'^\beta \right)^2
\right] \\
&\hspace{0.2in} \leq 4\frac{\alpha^2}{(1-\alpha)^2} \bigg[
\|g^{\beta}\|_\infty^2 \int (f^{\alpha-1} - f'^{\alpha-1})^2 +
\|f'^{\alpha-1}\|_\infty^2 \int (g^\beta - g'^\beta)^2 + \\
&\hspace{0.6in}
\|g^{\beta}\|_\infty^2 \int (f^{\alpha} - f'^{\alpha})^2 +
\|f'^{\alpha}\|_\infty^2 \int (g^\beta - g'^\beta)^2 
\bigg] \\
&\hspace{0.2in}\in
O\left( \|f -f'\|^2 \right) +
O\left( \|g-g'\|^2 \right) 
\end{align*}
\endgroup
where, in the second and fourth steps we have used Jensen's inequality.
The last step follows from the boundedness of all our densities and estimates
and by lemma~\ref{lem:densitypowers}.

The bounded variance condition of the influence functions also follows from the
boundedness of the densities.
\begingroup
\allowdisplaybreaks
\ifthenelse{\boolean{istwocolumn}}
  {
  \begin{align*}
  &\VV_{\pxz} \psixz(V;\pxz, \pyz) \\
  &\leq \frac{\alpha^2}{ (\alpha-1)^2 }
    \EE_{\pxz} \left[\pxz^{2\alpha-2}(X,Z_1) \pyz^{2\beta}(X,Z_1) \right] \\
  &= \frac{\alpha^2}{ (\alpha-1)^2 }\int \pxz^{2\alpha-1} \pyz^{2\beta} < \infty
  \end{align*}
  }
  {
  \begin{align*}
  \VV_{\pxz} \psixz(V;\pxz, \pyz) 
  &\leq \frac{\alpha^2}{ (\alpha-1)^2 }
    \EE_{\pxz} \left[\pxz^{2\alpha-2}(X,Z_1) \pyz^{2\beta}(X,Z_1) \right] \\
  &= \frac{\alpha^2}{ (\alpha-1)^2 }\int \pxz^{2\alpha-1} \pyz^{2\beta} < \infty
  \end{align*}
  }
\endgroup
We can bound $\VV_{\pyz}\psiyz$ similarly.
For the fourth condition, note that when $\pxz = \pyz$,
\ifthenelse{\boolean{istwocolumn}}
  {
  \begin{align*}
  &\psi_{XZ}(X,Z_1; \pxz, \pxz) \\
  &= \frac{\alpha}{\alpha -1} \left(
  \pxz^{\alpha+\beta-1}(X,Z_1) - \int \pxz\right) 
  = 0,
  \end{align*}
  }
  {
  \begin{align*}
  \psi_{XZ}(X,Z_1; \pxz, \pxz) 
  &= \frac{\alpha}{\alpha -1} \left(
  \pxz^{\alpha+\beta-1}(X,Z_1) - \int \pxz\right) 
  = 0,
  \end{align*}
  }
and similarly $\psi_{YZ} = \zero$.
Otherwise, $\psixz$ depends explicitly on $X,Z$ and is nonzero.
Therefore we have asymptotic normality away from $\pxz = \pyz$.
\end{proof}

\section{Addendum to Experiments}
\label{sec:appExperiments}

\subsection{Details on Simulations}

In our simulations, for the first figure comparing the Shannon Entropy
 in Fig~\ref{fig:toyOne} we generated data
from the following one dimensional density,
\[
f_1(t) = 0.5 + 0.5 t^9
\]
For this, with probability $1/2$ we sample from the uniform distribution $U(0,1)$
on $(0,1)$ and otherwise sample $10$ points from $U(0,1)$ and pick the maximum.
For the third figure in Fig~\ref{fig:toyOne} comparing the KL divergence, we
generate data from the one dimensional density 
\[
f_2(t) = 0.5 + \frac{0.5t^{19}(1-t)^{19}}{B(20,20)}
\]
where $B(\cdot,\cdot)$ is the Beta function. For this, with probability $1/2$ we
sample from $U(0,1)$ and otherwise sample from a $\textrm{Beta}(20,20)$ distribution.
The second and fourth figures of Fig~\ref{fig:toyOne} we sampled from a $2$ dimensional
density where the first dimension was $f_1$ and the second was $U(0,1)$.
The fifth and sixth were from a $2$ dimensional
density where the first dimension was $f_2$ and the second was $U(0,1)$.
In all figures of Fig.~\ref{fig:toyTwo}, the first distribution was a $4$-dimensional density
where all dimensions are $f_2$. The latter was $U(0,1)^4$.

\textbf{Methods compared to: }
In addition to the plug-in, \dss and \loos estimators we perform comparisons with
several other estimators.
For the Shannon Entropy we compare our method to the \knns estimator of
\citet{goria2005new}, the method of \citet{stowell2009fast} which uses $K-D$
partitioning, the method of \citet{noughabi2013entropy} based on Vasicek's spacing
method and that of \citet{learned2003ica} based on Voronoi tessellation.
For the KL divergence we compare against the \knns method of \citet{perez2008kullback}
and that of \citet{ramirez2009entropy} based on the power spectral density
representation using Szego's theorem. For \renyi, \tsallis and Hellinger divergences
we compared against the \knns method of \citet{poczos12divergence}.

\subsection{Image Clustering Task}

Here we demonstrate a simple image clustering task using a nonparametric divergence
estimator. For this we use images from the ETH-80 dataset.
The objective here is not to champion our approach for image clustering against
all methods for image clustering out there. 
Rather, we just wish to demonstrate that our estimators can
be easily and intuitively applied to many Machine Learning problems.

We use the three categories Apples, Cows and Cups and randomly select $50$ images
from each category. Some sample images are shown in Fig~\ref{fig:clusImages}.
We convert the images to grey scale and extract the SIFT features from each
image. The SIFT features are $128$-dimensional but we project it to $4$ dimensions
via PCA. This is necessary because nonparametric methods work best in low dimensions.
Now we can treat each image as a collection of features, and hence a sample from a $4$
dimensional distribution. 
We estimate the Hellinger divergence between these ``distributions".
Then we construct an affinity matrix $A$ where the similarity metric between the
$i$\superscript{th} and $j$\superscript{th} image is given by $A_{ij} =
\exp(-\widehat{H}^2(X_i, X_j))$. Here $X_i$ and $X_j$ denotes the projected SIFT
samples from images $i$ and $j$ and $\widehat{H}(X_i, X_j)$ is the estimated
Hellinger divergence between the distributions.
Finally, we run a spectral clustering algorithm on the matrix $A$.

Figure~\ref{fig:affinity} depicts the affinity matrix $A$ when the images were
ordered according to their class label. The affinity matrix exhibits 
block-diagonal structure which indicates that our Hellinger divergence estimator
can in fact identify patterns in the images. 
Our approach achieved a clustering accuracy of $92.47\%$. When we used the $k$-NN
based estimator of~\cite{poczos12divergence} we achieved an accuracy of
$90.04\%$.
When we instead applied Spectral clustering naively, 
with $A_{ij} = \exp(-L_2(P_i,P_j)^2)$ where $L_2(P_i,P_j)$ is
the squared $L_2$ distance between the pixel intensities we achieved an accuracy of
$70.18\%$. We also tried $A_{ij} = \exp(-\alpha \widehat{H}^2(X_i, X_j))$ as the
affinity for
different choices of $\alpha$ and found that our estimator still performed best.
We also experimented with the
\renyi and \tsallis divergences and obtained similar results.

On the same note, one can imagine that these divergence estimators can also be used
for a classification task. For instance we can treat $\exp(-\widehat{H}^2(X_i, X_j))$
as a similarity
metric between the images and use it in a classifier such as an SVM.

\insertFigClustering

\end{document}